\title{Learning (Very) Simple Generative Models Is Hard}
\author{
    Sitan Chen\thanks{Email: \texttt{sitanc@berkeley.edu}} \\
    UC Berkeley
        \and 
    Jerry Li\thanks{Email: \texttt{jerrl@microsoft.com}} \\
    Microsoft Research
        \and
    Yuanzhi Li\thanks{Email: \texttt{yuanzhil@andrew.cmu.edu}} \\
    CMU
}
\DeclareMathOperator{\relu}{ReLU}
\DeclareMathOperator{\diag}{diag}
\newcommand{\op}{\mathsf{op}}
\newcommand{\STAT}{\mathrm{STAT}}
\newcommand{\VSTAT}{\mathrm{VSTAT}}
\newcommand{\calC}{\mathcal{C}}
\DeclareMathOperator{\dtv}{d_{\mathrm{TV}}}
\begin{document}

\maketitle

\begin{abstract}
Motivated by the recent empirical successes of deep generative models, we study the computational complexity of the following unsupervised learning problem. For an unknown neural network $F:\mathbb{R}^d\to\mathbb{R}^{d'}$, let $D$ be the distribution over $\mathbb{R}^{d'}$ given by pushing the standard Gaussian $\mathcal{N}(0,\textrm{Id}_d)$ through $F$. Given i.i.d. samples from $D$, the goal is to output \emph{any} distribution close to $D$ in statistical distance.

We show under the statistical query (SQ) model that no polynomial-time algorithm can solve this problem even when the output coordinates of $F$ are one-hidden-layer ReLU networks with $\log(d)$ neurons. Previously, the best lower bounds for this problem simply followed from lower bounds for \emph{supervised learning} and required at least two hidden layers and $\poly(d)$ neurons \cite{chen2022hardness,daniely2021local}.
    
The key ingredient in our proof is an ODE-based construction of a compactly supported, piecewise-linear function $f$ with polynomially-bounded slopes such that the pushforward of $\mathcal{N}(0,1)$ under $f$ matches all low-degree moments of $\mathcal{N}(0,1)$.\end{abstract}

\section{Introduction}
In recent years, deep generative models such as variational autoencoders, generative adversarial networks, and normalizing flows~\cite{goodfellow2014generative,kingma2013auto,rezende2015variational} have seen incredible success in modeling real world data.
These work by learning a parametric transformation (e.g. a neural network) of a simple distribution, usually a standard normal random variable, into a complex and high-dimensional one.
The learned distributions have been shown to be shockingly effective at modeling real world data.
The success of these generative models begs the following question: when is it possible to learn such a distribution?
Not only is this a very natural question from a learning-theoretic perspective, but understanding this may also lead to more direct methods to learn generative models for real data.

More formally, we consider the following problem.
Let $D$ be the unknown pushforward distribution over $\R^d$ given by $f(g)$, where $g \sim \calN(0, \Id)$ is a standard normal Gaussian, and $f$ is an unknown feed-forward neural network with non-linear (typically ReLU) activations.
Such distributions naturally arise as the output of many common deep generative models in practice.
The learner is given $n$ samples from $D$, and their goal is to output the description of some distribution which is close to $D$.

When $f$ is a one-layer network (i.e. of the form $f(g) = \relu(W g)$), there are efficient algorithms for learning the distribution~\cite{wu2019learning,lei2020sgd}.
However, this setting is unsatisfactory in many ways, as one-layer networks lack much of the complex structure that makes these generative models so appealing in practice.
Indeed, when the neural network only has a single layer, the resulting distribution is similar to a truncated Gaussian, and one can leverage techniques developed for learning from truncated samples.
Notably, this structure disappears even with two-layer neural networks.
Even in the two-layer case, despite significant interest, very little is known about how to learn $D$ efficiently. 

In fact, a recent line of work suggests that learning neural network pushforwards of Gaussians may be an inherently difficult computational task.
Recent results of~\cite{daniely2021local,chen2022hardness} show hardness of \emph{supervised} learning from labeled Gaussian examples under cryptographic asssumptions, and the latter also demonstrates hardness for all statistical query (SQ) algorithms (see Section~\ref{sec:related-work} for a more detailed description of related work).
These naturally imply hardness in the unsupervised setting (see Appendix~\ref{sec:hardness_supervised}). However, these lower bound constructions still have their downsides.
For one, all of these constructions require at least three layers (i.e. two hidden layers), and so leave open the possibility that efficient learning is possible when the neural network only has one hidden layer.
Additionally, the resulting neural networks in these constructions are quite complicated.
In particular, the size of the neural networks in these hard instances, that is, the number of hidden nonlinear activations for any output coordinate, must be polynomially large.
This begs the natural question:
\begin{center}
    {\it Can we learn pushforwards of Gaussians under one-hidden-layer neural networks of small size?}
\end{center}

%Motivated by this lack of progress, a recent line of work demonstrates evidence of computational hardness for learning deep generative models with multiple layers.
%\citep{chen2022minimax} demonstrated computational hardness for a harder version of this problem, against a restricted class of algorithms.
%Concretely, they demonstrated that when $f$ is a ReLU network of large constant depth, then under standard cryptographic assumptions, there is no Lipschitz discriminator function which can reliably distinguish between $f(g)$ for $g \sim \calN(0, \Id)$ and another distribution that is very far away from $f(g)$.
%In other words, there is no ``Lipschitz learner'' for this problem.
%However, this lower bound has a couple of obvious weaknesses: first, it requires $f$ to have large constant depth, and second, the class of Lipschitz learners is quite limited, and does not rule out other efficient algorithms. For example, \textcolor{red}{even simple algorithms such as Gaussian elimination might not be Lipschitz?}.

%Another recent work of~\cite{chen2022hardness} demonstrates hardness for a supervised version of the problem.
%Concretely, they show that no statistical query (SQ) algorithm can reliably solve the supervised learning problem of learning $f$ given labeled examples of the form $(g, f(g))$, for $g \sim \calN (0, \Id_r)$, and $f$ being an unknown two-hidden-layer network.
%As a simple corollary of this, by considering the generative model of the form $f' (g) = (g,. f(g))$, this immediately implies an SQ lower bound against learning generative models with two hidden layers.

\subsection{Our Results}
\label{sec:ourresults}
We demonstrate strong evidence that despite the simplicity of the setting, this learning task is already computationally intractable.
We show there is no polynomial-time statistical query (SQ) algorithm which can learn the distribution of $f(g)$, when $g \sim \calN(0, \Id)$ and each output coordinate of $f$ is a one-hidden-layer neural networks of \emph{logarithmic hidden size}.
We formally define the SQ model in Section~\ref{sec:prelims}; we note that it is well-known to capture almost all popular learning algorithms \cite{feldman2017statistical}.

% More formally, our main result is the following:
\begin{theorem}[informal, see Theorem~\ref{thm:mainlbd}]\label{thm:informal}
For any $d > 0$, and any $C \geq 1$, there exists a family of one-hidden-layer neural networks $\calF$ from $\R^d$ to $\R^{d^C}$ so that the following properties hold.
For any $f \in \calF$, let $D_f$ denote the distribution of $f(g)$, for $g \sim \calN(0, \Id)$.
Then, we have that:
\begin{itemize}[itemsep=0pt,topsep=0pt,leftmargin=*]
    \item For all $f \in \calF$, $\dtv (\calN(0, \Id), D_f) = \Omega (1)$,\footnote{$\dtv$ denotes total variation distance. A lower bound for Wasserstein distance also holds, see Appendix~\ref{sec:wasserstein}.}
    \item Every output coordinate of $f$ is a sum of $O(\log d / \log \log d)$ ReLUs, with $\poly(d)$-bounded weights.
    \item Any SQ algorithm which can distinguish between $D_f$ and $\calN(0, \Id)$ with high probability for all $f \in \calF$ requires $d^{\omega (1)}$ time and/or samples.
\end{itemize}
\end{theorem}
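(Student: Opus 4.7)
The plan is to apply the hidden-direction framework for SQ lower bounds. Set $k = \Theta(\log d / \log\log d)$. I want to build a family $\{D_v\}_{v \in V}$ of pushforwards of $\calN(0, \Id_d)$ under one-hidden-layer ReLU networks, indexed by a large nearly-orthogonal set $V \subset S^{d-1}$, such that each $D_v$ agrees with $\calN(0, \Id_{d^C})$ on all polynomials of degree $\le k$ while $\dtv(D_v, \calN(0, \Id_{d^C})) = \Omega(1)$. The entire problem reduces to one dimension: produce a piecewise-linear $f : \R \to \R$ with $O(k)$ pieces, compact support, and $\poly(d)$-bounded slopes such that $f_\# \calN(0,1)$ matches the first $k$ moments of $\calN(0,1)$. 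Given such $f$, for each $v$ I would set $F_v(g) = B \cdot \bigl( f(\langle v, g\rangle),\, \langle u_2, g\rangle,\, \ldots,\, \langle u_d, g\rangle \bigr)^\top$, where $(u_2,\ldots, u_d)$ complete $v$ to an orthonormal basis and $B \in \R^{d^C \times d}$ is a fixed matrix with unit-norm rows used only to inflate the output dimension to $d^C$. Each coordinate of $F_v$ is then a sum of $O(k)$ ReLUs applied to linear forms in $g$, with weights bounded by $\poly(d)$, matching the size and weight requirements of the theorem.

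\textbf{The 1D construction (the main step).} I would parametrize $f$ as piecewise linear on a fixed grid $-R = t_0 < t_1 < \cdots < t_m = R$ with $R = \Theta(\sqrt{\log d})$ and $m = O(k)$, so the free parameters are the $m$ slopes $s = (s_1, \ldots, s_m)$. The $k$ constraints $\mathbb{E}[f(Z)^j] = \mathbb{E}[Z^j]$ for $j = 1, \ldots, k$ cut out a smooth codimension-$k$ submanifold $M$ of slope space containing the identity. Starting at the identity, I would trace an ODE inside $M$ whose velocity $\dot s$ is the projection onto the tangent space of $M$ of any field that drives the boundary slopes $s_1$ and $s_m$ monotonically to $0$; together with the continuity of $f$, this forces compact support. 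The ODE is well-defined as long as the $k \times m$ Jacobian of the moment map retains full row rank, so running it until the boundary slopes vanish yields the desired $f$.

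\textbf{From 1D to the SQ lower bound, and the main obstacle.} Once $f$ is in hand, the rest is a standard Hermite / pairwise-correlation calculation: since the non-Gaussian perturbation lives on the single direction $v$ and has vanishing Hermite coefficients up through degree $k$, a direct computation gives $\bigl|\langle D_v/\calN - 1,\, D_{v'}/\calN - 1\rangle_{L^2(\calN)}\bigr| \lesssim |\langle v, v'\rangle|^{k+1}$. Taking $V \subset S^{d-1}$ of size $d^{\Omega(1)}$ with pairwise inner products $O(d^{-1/2} \log d)$ bounds this by $d^{-\Omega(k)}$, and the standard statistical dimension framework then yields an SQ lower bound of $d^{\Omega(k)} = d^{\omega(1)}$. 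The $\dtv$ lower bound is immediate: the marginal of $D_v$ along $Bv$ has compact support whereas $\calN(0,1)$ does not. The real difficulty I anticipate is keeping the slopes of $f$ bounded by $\poly(d)$ throughout the ODE. This reduces to a quantitative condition-number estimate on the Jacobian of the moment map, whose $(j, \ell)$ entry is a Gaussian integral of a polynomial times an indicator of the $\ell$-th piece. Making this uniformly well-conditioned along the entire trajectory will likely require a non-uniform grid (denser in the bulk, sparser in the tails) and a reparametrization of slope perturbations aligned with Hermite polynomials, so that successive moment constraints are driven by almost-orthogonal slope directions with singular values bounded below by $1/\poly(d)$.
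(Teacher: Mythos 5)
Your high-level architecture matches the paper's: reduce to a one-dimensional moment-matching construction, realize the hidden-direction family $P^D_v$ as pushforwards, and invoke the pairwise-correlation SQ machinery. But your central step --- the ODE producing a bounded-slope $f$ --- takes a genuinely different route, and this is exactly where the gaps are.

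\textbf{Starting point and time horizon.} The paper does \emph{not} start from the identity. It starts from the Diakonikolas--Kane--Stewart discrete moment-matching construction (Lemma~\ref{lem:dks}), realized as a sum of ``bump'' functions with infinitesimally thin ramps (Lemma~\ref{lem:boxes}); this already matches moments, already has compact support, and has discrete pushforward with large TV from Gaussian. The ODE is then run for only an \emph{exponentially short} time $T = m^{-O(m)}$, its only job being to fatten the ramps from $\epsilon \approx 0$ to $\epsilon \approx T$ (giving slopes $\approx h/T = m^{O(m)} = \poly(d)$) while adjusting the bump heights to preserve moments. Because the trajectory is short, conditioning the Jacobian is a local perturbation argument (Lemma~\ref{lem:latercond}): show it is good at $t=0$, show it cannot change by more than $m^{-\Omega(m)}$ over the horizon. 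Your proposal starts from the identity, where nothing interesting is true yet, and must flow \emph{all the way} until the boundary slopes hit zero. That requires uniform conditioning along a long, global trajectory through an unstructured region of slope space --- a qualitatively harder problem that you explicitly flag as ``the real difficulty'' but do not resolve. There is no a priori reason the trajectory reaches a compactly supported function before the Jacobian degenerates, and the tools you sketch (non-uniform grid, Hermite-aligned reparametrization) do not obviously control the singular values quantitatively.

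\textbf{Jacobian structure.} The reason the paper can control conditioning is that after taking $\epsilon \to 0$ the Jacobian of the moment map with respect to the bump heights factors as a diagonal matrix times a Vandermonde matrix with nodes $h_i^2$ (Lemma~\ref{lem:timezerocond}), and Vandermonde matrices with separated nodes have an explicit $\sigma_{\min}$ lower bound (Fact~\ref{fact:vandermonde}). Your Jacobian --- ``a Gaussian integral of a polynomial times an indicator of the $\ell$-th piece'' --- has no such algebraic structure, and no estimate is given for it. This is a missing ingredient, not a detail.

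\textbf{Two further issues downstream.} (1) Your stated TV argument, ``the marginal has compact support whereas $\calN(0,1)$ does not,'' gives at best $\dtv \geq \Pr_{\calN(0,1)}\!\left[|x| > O(\sqrt{m})\right] = e^{-\Omega(m)}$, which for $m = \Theta(\log d / \log\log d)$ is $o(1)$, not $\Omega(1)$. The paper instead exploits that the pushforward is concentrated near $m-1$ points that are $\Omega(1/\sqrt m)$-separated (after convolving with a narrow $\calN(0,\sigma^2)$, $\sigma \ll 1/\sqrt m$), so a constant fraction of Gaussian mass lies in the gaps (Lemma~\ref{lem:projtv}). (2) The pairwise-correlation bound you invoke requires $\chi^2(D, \calN(0,1))$ to be finite (Lemma~\ref{lem:correlation}). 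The raw pushforward of a piecewise-linear $f$ with flat pieces has atoms, so $\chi^2 = \infty$; the paper handles this by scaling by $\sqrt{1-\sigma^2}$ and convolving with $\calN(0,\sigma^2)$ (Lemmas~\ref{lem:chisq},~\ref{lem:momentconvolve}), which your proposal omits.
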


\noindent In other words, there is a family of one-hidden-layer ReLU networks of logarithmic size whose corresponding pushforwards are statistically very far from Gaussian, yet no efficient SQ algorithm can distinguish them from a Gaussian. Note this implies hardness even of \emph{improperly} learning such pushforwards: not only is it hard to recover the parameters of the network or output a network close to the underlying distribution $D_f$, but it is hard to learn \emph{any} distribution close to $D_f$.

Since such networks are arguably some of the simplest neural networks with more than one layer, this suggests that learning even the most basic deep generative models may already be a very difficult task, at least without additional assumptions.
Still, this is by no means the last word in this direction.
Given the real world success of deep generative models, a natural and important direction is to identify natural conditions under which we can efficiently learn. We view our results as a first step towards understanding the computational landscape of this important learning problem, and our result \emph{provides evidence that (strong) assumptions need to be made on $f$ for the pushforward $f(g)$ to be efficiently learnable, even in very simple two-layer cases}.

\subsection{Our Techniques}

Like many recent SQ lower bounds, ours follows the general framework which was introduced in \cite{diakonikolas2017statistical} and builds on \cite{feldman2017statistical}. Here one considers the following ``non-Gaussian component analysis'' task. Let $D$ be a known, non-Gaussian distribution $D$ over $\R$. Given a unit vector $v$ in $d$ dimensions, let $P^D_v$ denote the distribution over $\R^d$ whose projection along $v$ is given by $D$ and whose projection in all directions orthogonal to $v$ is standard Gaussian. Given samples from some unknown distribution over $\R^d$, the goal is to decide whether the unknown distribution is $\calN(0,\Id_d)$ or $P^D_v$ for some $v$. \cite{diakonikolas2017statistical} showed that if $D$'s moments match those of $\calN(0,1)$ up to some degree $m$, then under mild conditions, any SQ algorithm for this task requires at least $d^{\Omega(m)}$ queries (Lemma~\ref{lem:generic}).

Suppose one could exhibit a one-hidden-layer ReLU network $f:\R^\ell\to\R$ such that the pushforward $D = f(\calN(0,\Id))$ satisfied such properties. Then we can realize $P^D_v$ as a pushforward as follows. Let $U$ be a rotation mapping the first standard basis vector in $\R^d$ to $v$. Then consider the function $F:\R^{\ell+d-1}\to\R^d$ mapping $z$ to $U\cdot (f(z_1,\ldots,z_\ell),z_{\ell+1},\ldots,z_{\ell+d-1})$. One can check that every output coordinate of $F$ is computed by a one-hidden-layer ReLU network with size essentially equal to that of $f$. By \cite{diakonikolas2017statistical}, we would immediately get the desired SQ lower bound.

The main challenge is thus to construct such a network whose pushforward matches the low-degree moments of $\calN(0,1)$. It is not hard to ensure the existence of such $f$ with essentially \emph{infinite} weights (Corollary~\ref{cor:step} and Lemma~\ref{lem:boxes}). It is much less clear whether this is possible with \emph{polynomially bounded} weights, and this is our primary technical contribution. We design and analyze a certain ODE which defines a one-parameter family of perturbations to $f$, such that the low-degree moments of the corresponding pushforwards remain unchanged over time. By evolving along this family over an inverse-polynomial time scale, we obtain a network with polynomially bounded weights whose pushforward matches the low-degree moments of $\calN(0,1)$. We defer the details to Section~\ref{sec:moment}.

\subsection{Related Work}
\label{sec:related-work}
A full literature review on the theory of learning deep generative models is beyond the scope of this paper (see e.g. the survey of ~\cite{gui2021review}). For conciseness we cover only the most relevant papers.

\paragraph{Upper bounds.}
In terms of upper bounds, much of the literature has focused on a different setting, where the goal is to understand when first order dynamics can learn toy generative models~\cite{feizi2017understanding,daskalakis2017training,gidel2019negative,lei2020sgd,allen2021forward,jelassi2022adam}, which are much simpler than the ones we consider here.
For learning pushforwards of Gaussians under neural networks with ReLU activations,  algorithms with provable guarantees are only known when the network has no hidden layers~\cite{wu2019learning,lei2020sgd}.
This is in contrast to the supervised setting, where fixed parameter tractable algorithms are known for learning ReLU networks of arbitrary depth~\cite{chen2022learningb}.

A different line of work seeks to find efficient learning algorithms when the activations are given by low degree polynomials~\cite{feizi2017understanding,li2020making,chen2022learning}.
Arguably the closest to our work is~\cite{chen2022learning}, which gives polynomial-time algorithms for learning low-degree polynomial transformations of Gaussians, in a smoothed setting.
It is a very interesting open question if similar smoothed assumptions can be leveraged to circumvent our lower bound when the activations are ReLU.
Unfortunately, these papers heavily leverage the nice moment structure of low-degree Gaussian polynomials, and it is unclear how their techniques can generalize to different activations.

\paragraph{Lower bounds.} Much of the literature on lower bounds for learning neural networks has focused on the supervised setting, where a learner is given labeled examples $(x, f(x))$, and the goal is to output a good predictor.
There are many lower bounds known in the distribution-free setting~\cite{blum1992training,vu1998infeasibility,klivans2009cryptographic,livni2014computational,daniely2020hardness}, however, these do not transfer over to our (unsupervised) setting.
When $x$ is Gaussian, the aforementioned work of~\cite{chen2022hardness} derives hardness for learning two-hidden-layer networks with polynomial size for all SQ algorithms, as well as under cryptographic assumptions (see also \cite{daniely2021local}).
It is not hard to show (see Appendix~\ref{sec:hardness_supervised}) that this lower bound immediately implies a lower bound for the unsupervised problem.
In the supervised setting, lower bounds are also known against restricted families of SQ~\cite{goel2020superpolynomial,diakonikolas2020algorithms,song2017complexity}, when there are adversarially noisy labels~\cite{klivans2014embedding,diakonikolas2020near,goel2020statistical,song2021cryptographic}, and in discrete settings~\cite{valiant1984theory,kharitonov1995cryptographic,angluin1995won,feldman2009power,cohen2015aggregate,das2020learnability,agarwal2021deep}, but to our knowledge, these results do not transfer to our setting.

The literature on lower bounds for the unsupervised problem we consider here is much sparser.
Besides \cite{daniely2021local,chen2022hardness}, we also mention the recent work of~\cite{chen2022minimax} that studies whether achieving small Wasserstein GAN loss implies distribution learning. A corollary of their results is cryptographic hardness for learning pushforwards of Gaussians under networks with constant depth and polynomial size, but only when the learner is given by a Lipschitz ReLU network discriminator.
However, this does not rule out efficient algorithms which do not output such Lipschitz discriminators.
% Moreover, their construction requires much more complex networks than the ones we construct here.

Finally, we remark that the family of hard distributions we construct can be thought of as a close cousin of the ``parallel pancakes'' construction of \cite{diakonikolas2017statistical}. This and slight modifications thereof are mixtures of Gaussians which are known to be computationally hard to known both in the SQ model \cite{diakonikolas2017statistical,bubeck2019adversarial} and under cryptographic assumptions \cite{bruna2021continuous,gupte2022continuous}.

\paragraph{SQ lower bounds via ODEs.}  We remark that in a very different context, \cite{diakonikolas2020near} also used an ODE to design a moment-matching construction. While our approach draws inspiration from theirs, an important difference is that they use their ODE as a ``size reduction'' trick to construct a step function $f:\R\to\brc{\pm 1}$ with a small number of linear pieces such that $\E[g\sim\calN(0,1)]{f(g)g^k} = 0$ for all small $k$, while we use our ODE as a ``weight reduction'' trick to construct a continuous neural network $f:\R\to\R$ with bounded weights such that $\E[g\sim\calN(0,1)]{f(g)^k} = \E[g\sim\calN(0,1]{g^k}$. The form of the moments $\E[g\sim\calN(0,1)]{f(g)g^k}$ they consider is simpler than in our setting, and while they essentially run their ODE to singularity and use non-quantitative facts like the invertibility of a certain Jacobian, we only run our ODE for a finite horizon and need to carefully control the condition number of the Jacobian arising in our setting over this horizon (e.g. Lemma~\ref{lem:latercond}).

\section{Technical Preliminaries}
\label{sec:prelims}

\paragraph{Notation.} We freely abuse notation and use the same symbols to denote probability distributions, their laws, and their density functions. Given a distribution $A$ over a domain $\Omega$ and a function $f:\Omega\to\Omega'$, we let $f(A)$ denote the \emph{pushforward} of $A$ through $f$, that is, the distribution $A'$ of the random variable $f(z)$ for $z\sim A$. Let $p\star q$ denote the convolution of $p$ and $q$. Also, we use $\norm{\cdot}_p$ to denote $\ell^p$ norm, omitting the subscript when $p = 2$. $\sigma_{\min}(\cdot)$ denotes minimum singular value.

\paragraph{Neural networks.} Define $\relu(z) \triangleq \max(0,z)$.

\begin{definition}[One-hidden-layer ReLU networks]\label{def:networks}
    We say that $g: \R^d\to\R$ is a \emph{one-hidden-layer ReLU network with size $S$ and $W$-bounded weights} if there exist $w_1,\ldots,w_S\in\R^d$,  $b_1,\ldots,b_S\in\R$, and $s_1,\ldots,s_S\in\brc{\pm 1}$ for which
    \begin{equation}
        g(x) = \sum^S_{i=1} s_i\relu(\iprod{w_i,x} + b_i) \ \ \forall \ x\in\R^d.
    \end{equation}
    and $\norm{w_i}, |b_i| \le W$ for all $i\in S$.
    
    Given $f: \R^d\to\R^{d'}$ whose output coordinates are of this form, together with a distribution $A$ over $\R^d$, we say that $f(A)$ is a \emph{one-hidden-layer ReLU network pushforward of $A$ with size $S$ and $W$-bounded weights.}
\end{definition}

\paragraph{Statistical query lower bounds.} Here we review standard concepts pertaining to establishing statistical query lower bounds for unsupervised learning problems, as developed in \cite{feldman2017statistical}.

\begin{definition}[Distributional search problems]
    Let $\calD$ be a set of probability distributions, let $\calF$ be a set of \emph{solutions}, and let $\mathcal{Z}: \calD\to 2^{\calF}$ be a map that takes any $D\in\calD$ to a subset of $\calF$ corresponding to the valid solutions for $D$. We say that $\mathcal{Z}$ specifies a \emph{distributional search problem over $\calD$ and $\calF$}: given oracle access to an unknown $D\in\calD$, the goal of the learner is to output a valid solution from $\mathcal{Z}(D)$.
\end{definition}

\begin{definition}[Statistical query oracles]
    Given a distribution $D$ over $\R^d$ and parameters $\tau, t > 0$, a \emph{$\mathrm{STAT}(\tau)$ oracle} takes in any query of the form $f: \R^d\to[-1,1]$ and outputs a value from the interval $[\E[x\sim D]{f(x)} - \tau, \E[x\sim D]{f(x)} + \tau]$, while a \emph{$\mathrm{VSTAT}(t)$ oracle} takes in any query of the form $f: \R^d\to[0,1]$ and outputs a value from the interval $[\E[x\sim D]{f(x)} - \tau, \E[x\sim D]{f(x)} + \tau]$ for $\tau = \max(1/t, \sqrt{\Var[x\sim D]{f(x)} / t})$.
\end{definition}

\begin{definition}[Pairwise correlation]
    Given distributions $p,q$ over a domain $\Omega$ which are absolutely continuous with respect to a distribution $r$ over $\Omega$, we let $\chi^2_r(p,q)$ denote the \emph{pairwise correlation}, that is
    \begin{equation}
        \chi^2_r(p,q) \triangleq \int_\Omega p(x)q(x)/r(x) \, \mathrm{d}x - 1.
    \end{equation}
    Note that when $p = q$, this is simply the chi-squared divergence between $p$ and $r$.
    
    We say that a set of $m$ distributions $\calD = \brc{D_1,\ldots,D_m}$ is $(\gamma,\beta)$-correlated relative to a distribution $\mu$ over $\R^d$ if
    \begin{equation}
        |\chi_\mu(D_i,D_j)| \le \begin{cases}
            \gamma & \text{if} \ i\neq j \\
            \beta & \text{if} \ i = j
        \end{cases}.
    \end{equation}
\end{definition}

\begin{definition}[Statistical dimension]\label{def:sqdim}
    Let $\beta,\gamma>0$, let $\mathcal{Z}$ be a distributional search problem over distributions $\calD$ and solutions $\calF$, and let $N$ be the largest integer for which there exists a distribution $\mu$ and a finite subset $\calD_\mu\subseteq \calD$ such that for any $f\in\calF$, $\calD_f \triangleq \calD_\mu \backslash\mathcal{Z}^{-1}(f)$ is $(\gamma,\beta)$-correlated relative to $\mu$ and $|\calD_f| \ge N$. We say that the \emph{statistical dimension} with pairwise correlations $(\gamma,\beta)$ of $\mathcal{Z}$ is $N$ and denote it by $\mathrm{SD}(\mathcal{Z},\gamma,\beta)$.
\end{definition}

\begin{lemma}[Corollary 3.12 from \cite{feldman2017statistical}]\label{lem:feldman}
    Let $\mathcal{Z}$ be a distributional search problem over distributions $\calD$ and solutions $\calF$. For $\gamma,\beta$, if $N = \mathrm{SD}(\mathcal{Z},\gamma,\beta)$, then any statistical query algorithm for $\mathcal{Z}$ requires at least $N\gamma/(\beta-\gamma)$ queries to $\mathrm{STAT}(\sqrt{2\gamma})$ or $\mathrm{VSTAT}(1/(6\gamma))$.
\end{lemma}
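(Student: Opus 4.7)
Since this lemma is quoted verbatim from~\cite{feldman2017statistical}, my plan is to reconstruct the standard correlation-based SQ lower bound argument using an adversarial-answer strategy. Fix any SQ algorithm $\mathcal{A}$ that solves $\mathcal{Z}$ using $q$ queries to $\STAT(\sqrt{2\gamma})$, and let $\mu$ together with the finite family $\calD_\mu$ witness $\mathrm{SD}(\mathcal{Z},\gamma,\beta)=N$. I would set up an adversarial simulator that answers every query $\phi:\R^d\to[-1,1]$ with the value $v:=\mathbb{E}_\mu[\phi]$; under this strategy, $\mathcal{A}$'s transcript, and hence its final output $f^*\in\calF$, are deterministic. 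Since $|\calD_{f^*}|\ge N$ by hypothesis, for $\mathcal{A}$ to be correct on all of $\calD_\mu$, every $D_i\in\calD_{f^*}$ must have been ``distinguished'' from $\mu$ by at least one query, in the sense that the adversary's answer $v$ fell outside the $\STAT(\sqrt{2\gamma})$ promise interval for $D_i$.

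To formalize distinguishing, I would view $h_i:=D_i/\mu-1$ as elements of $L^2(\mu)$, so that $\mathbb{E}_{D_i}[\phi]-\mathbb{E}_\mu[\phi]=\iprod{\phi,h_i}_\mu$ and a query $\phi$ distinguishes $D_i$ precisely when $\iprod{\phi,h_i}_\mu^2>2\gamma$. A direct calculation gives $\iprod{h_i,h_j}_\mu=\chi^2_\mu(D_i,D_j)$, so the $(\gamma,\beta)$-correlation hypothesis yields an $|S|\times|S|$ Gram matrix whose off-diagonal and diagonal entries are bounded in absolute value by $\gamma$ and $\beta$, respectively. The core step is to upper-bound, for any single query, the number $|S|$ of indices at which it distinguishes. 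Since $M_S:=\sum_{i\in S}h_ih_i^\top$ and its Gram matrix share the same nonzero spectrum, a Gershgorin row-sum bound gives $\|M_S\|_{\op}\le\beta+(|S|-1)\gamma$. Combined with $\|\phi\|_{L^2(\mu)}^2\le\|\phi\|_\infty^2\le 1$ and summing the distinguishing inequality over $i\in S$, we obtain $2\gamma|S|<\iprod{\phi,M_S\phi}_\mu\le\beta+(|S|-1)\gamma$, hence $|S|<(\beta-\gamma)/\gamma$. A union bound over the $q$ queries forces $q(\beta-\gamma)/\gamma\ge N$, i.e., $q\ge N\gamma/(\beta-\gamma)$. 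The $\VSTAT(1/(6\gamma))$ case will follow the same template, with the query-dependent tolerance $\tau=\max(1/t,\sqrt{\mathrm{Var}_\mu[\phi]/t})$ handled using $\mathrm{Var}_\mu[\phi]\le\mathbb{E}_\mu[\phi]$ for $\phi\in[0,1]$.

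The main obstacle will be the Gershgorin step: one has to extract the sharper $\beta+(|S|-1)\gamma$ spectral bound (rather than the looser $\beta+|S|\gamma$) to recover the clean rate $N\gamma/(\beta-\gamma)$, which comes down to careful bookkeeping of diagonal versus off-diagonal contributions. The $\VSTAT$ variant will be additionally delicate because the tolerance depends on the query's variance under $\mu$, so one must ensure that for any single query the adversarial answer simultaneously satisfies the tolerance with respect to $\mu$ while violating it with respect to $D_i$ for only few $i$.
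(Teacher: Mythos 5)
The paper does not prove this statement; it imports it verbatim as Corollary~3.12 of \cite{feldman2017statistical}, so there is no in-paper proof to compare against. Your reconstruction of the $\STAT$ case is essentially the Feldman--Grigorescu--Reyzin--Vempala--Xiao argument: a fixed adversarial oracle answering each query $\phi$ with $\E[\mu]{\phi}$, the identity $\iprod{h_i,h_j}_\mu = \chi^2_\mu(D_i,D_j)$ (using $\iprod{h_i,1}_\mu = 0$), and a spectral bound via the Gram matrix to show a single query can distinguish fewer than $(\beta-\gamma)/\gamma$ of the $D_i$'s. The Gershgorin step is fine since $G_{ii}=\chi^2(D_i\,\|\,\mu)\ge 0$, so $\lambda_{\max}(G)\le\beta+(|S|-1)\gamma$, and your inequality chain $2\gamma|S|<\iprod{\phi,M_S\phi}_\mu\le\beta+(|S|-1)\gamma$ gives $|S|<(\beta-\gamma)/\gamma$, matching the stated rate. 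One small caveat worth making explicit: your determinism-of-transcript step tacitly restricts to deterministic SQ algorithms, which is indeed what Corollary~3.12 addresses.

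The $\VSTAT$ half, which you leave as a sketch, does have a real gap as written: the oracle tolerance $\tau=\max(1/t,\sqrt{\Var[x\sim D]{\phi}/t})$ is taken with respect to the \emph{target} distribution $D_i$, not with respect to $\mu$. So showing that the adversarial answer $\E[\mu]{\phi}$ falls outside the $\VSTAT(1/(6\gamma))$ interval for $D_i$ requires controlling $\Var[D_i]{\phi}$, not merely using $\Var[\mu]{\phi}\le\E[\mu]{\phi}$. The standard fix (which Feldman et al.\ carry out) is to relate $\Var[D_i]{\phi}$ to $\E[\mu]{\phi}$ through the quantity $|\E[D_i]{\phi}-\E[\mu]{\phi}|$ itself and the pairwise-correlation bound, yielding essentially the same query-counting conclusion with the constants $\sqrt{2\gamma}$ and $1/(6\gamma)$; your sketch should be expanded along those lines to close the argument.
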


\paragraph{Gaussians and truncated Gaussians.} Henceforth $\E[g]{\cdot}$ will always denote $\E[g\sim\calN(0,\Id)]{\cdot}$. Let $\gamma_{\sigma^2}(x)\triangleq \frac{1}{\sigma\sqrt{2\pi}} e^{-x^2/(2\sigma^2)}$. Given $S\subset\R$, we will use $\gamma_{\sigma^2}(S)$ to denote $\int^\infty_{-\infty} \gamma(x)\cdot \bone{x\in S}\, \mathrm{d}x$. When $\sigma = 1$, we will omit the subscript $\sigma^2$. We will also use $\gamma^{(d)}(x)$ to denote the density of $\calN(0,\Id_d)$.

Given $m,i\in\mathbb{N}$, let $m^{\Downarrow i} \triangleq m(m - 2)\cdots (m - 2i + 2)$. Also let $m^{\Downarrow 0} = 1$. With this notation, we have the following expression for the moments of a truncated Gaussian.

\begin{lemma}
    For any $k\in\mathbb{N}$, define the polynomial
    \begin{equation}
        p_k(x) \triangleq \sum^{\floor{(k-1)/2}}_{i=0} (k-1)^{\Downarrow i} x^{k - 1 - 2i}. \label{eq:evenoddpoly}
    \end{equation}
    For any $a\le b$,
    \begin{equation}
        \E[g]{g^k\cdot \bone{a\le g \le b}} = \begin{cases}
            (k-1)!!\cdot \gamma([a,b]) - (p_k(b)\gamma(b) - p_k(a)\gamma(a)) & \text{if} \ k \ \text{even} \\
            - (p_k(b)\gamma(b) - p_k(a)\gamma(b)) & \text{if} \ k \ \text{odd}
        \end{cases}
    \end{equation}
\end{lemma}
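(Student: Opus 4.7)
The plan is to prove this by induction on $k$, using the crucial identity $\gamma'(x) = -x\gamma(x)$ to set up an integration-by-parts recurrence. Writing $I_k(a,b) \triangleq \int_a^b x^k \gamma(x)\,\mathrm{d}x$, for $k \ge 2$ one integrates by parts against $x^{k-1}\cdot(x\gamma(x)) = -x^{k-1}\gamma'(x)$ to obtain
\begin{equation}
I_k(a,b) = -\bigl(b^{k-1}\gamma(b) - a^{k-1}\gamma(a)\bigr) + (k-1)\, I_{k-2}(a,b).
\end{equation}
The parity of $k$ is preserved by this recurrence, so even and odd $k$ can be handled separately, with base cases $I_0(a,b) = \gamma([a,b])$ and $I_1(a,b) = -(\gamma(b)-\gamma(a))$ (directly from the antiderivative $-\gamma$ of $x\gamma(x)$).

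Next I would check the base cases against the claimed formula. For $k=0$, the upper index $\lfloor -1/2 \rfloor$ makes the sum defining $p_0$ empty, so $p_0 \equiv 0$, and the formula reduces to $(-1)!!\,\gamma([a,b]) = \gamma([a,b])$ with the usual convention $(-1)!! = 1$. For $k=1$, $p_1(x) = 0^{\Downarrow 0} x^0 = 1$, matching $I_1(a,b) = -(\gamma(b)-\gamma(a))$. (I note in passing what looks like a typo in the odd-$k$ case of the statement: the term $p_k(a)\gamma(b)$ should read $p_k(a)\gamma(a)$, as the base case and the integration-by-parts boundary terms confirm.)

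The inductive step reduces to the polynomial identity
\begin{equation}
p_k(x) = x^{k-1} + (k-1)\, p_{k-2}(x),
\end{equation}
which exactly mirrors the integration-by-parts recurrence above (and, in the even case, uses $(k-1)!! = (k-1)(k-3)!!$ to propagate the $(k-1)!!\gamma([a,b])$ term). To verify this polynomial identity, split the defining sum of $p_k$ by isolating the $i=0$ term, giving $x^{k-1}$, and factor $(k-1)$ out of $(k-1)^{\Downarrow i} = (k-1)(k-3)^{\Downarrow (i-1)}$ from the remaining terms; reindexing $j = i-1$ then yields $(k-1)\sum_{j=0}^{\lfloor (k-3)/2 \rfloor}(k-3)^{\Downarrow j} x^{(k-2)-1-2j} = (k-1) p_{k-2}(x)$. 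A brief check confirms the index bounds match for both parities of $k$.

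There is no real obstacle here; the only delicate point is the bookkeeping in the reindexing (in particular that $\lfloor (k-1)/2 \rfloor - 1 = \lfloor (k-3)/2 \rfloor$ for all $k \ge 2$), and in tracking the convention for $p_0$ so that the $k=2$ inductive step recovers the correct coefficient of $\gamma([a,b])$. Once the polynomial recurrence is in hand, plugging it into the integration-by-parts recurrence for $I_k$ and invoking the inductive hypothesis gives the claimed closed form for both parities simultaneously.
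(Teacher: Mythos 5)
Your proof is correct, and your observation about the odd-$k$ case is right: $p_k(a)\gamma(b)$ is a typo in the paper and should read $p_k(a)\gamma(a)$, as your base case $I_1(a,b) = -(\gamma(b)-\gamma(a))$ and the inductive boundary terms both confirm. Since this lemma is stated in the paper without proof (it is presented as a standard fact about truncated Gaussian moments), there is no paper proof to compare against; your integration-by-parts argument is the canonical derivation. The two load-bearing ingredients are exactly the ones you isolate: the recurrence $I_k = -(b^{k-1}\gamma(b) - a^{k-1}\gamma(a)) + (k-1)I_{k-2}$ obtained from $x\gamma(x) = -\gamma'(x)$, and the matching polynomial recurrence $p_k(x) = x^{k-1} + (k-1)p_{k-2}(x)$ obtained by peeling off $i=0$ and reindexing after factoring $(k-1)^{\Downarrow i} = (k-1)(k-3)^{\Downarrow(i-1)}$. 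Your bookkeeping, including $\lfloor(k-1)/2\rfloor - 1 = \lfloor(k-3)/2\rfloor$ for $k \ge 2$, $p_0 \equiv 0$, and the convention $(-1)!! = 1$ to make the $k=0$ base case consistent, all checks out, and the even and odd strands of the induction go through as you describe.
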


\begin{corollary}\label{cor:shift_moment}
    For any $c,d\in\R$ and $k\in\mathbb{N}$ even,
    \begin{equation}
        \E[g]{(cg+d)^k\cdot \bone{a \le g \le b}} = \sum^k_{i=0 \ \text{even}} \binom{k}{i}c^id^{k-i}(k-1)!!\gamma([a,b]) - \sum^k_{i=0} \binom{k}{i}c^id^{k-i}(p_i(b)\gamma(b) - p_i(a)\gamma(a)).
    \end{equation}
\end{corollary}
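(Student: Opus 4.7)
The identity is an immediate consequence of the binomial theorem, linearity of expectation, and a term-by-term application of the preceding lemma. Concretely, I would begin by writing
\begin{equation}
    (cg+d)^k = \sum_{i=0}^k \binom{k}{i}\, c^i d^{k-i}\, g^i,
\end{equation}
multiplying both sides by $\bone{a\le g\le b}$, taking expectations under $g\sim\calN(0,1)$, and pulling the finite sum out of the expectation to obtain
\begin{equation}
    \E[g]{(cg+d)^k\bone{a\le g\le b}} = \sum_{i=0}^k \binom{k}{i}\, c^i d^{k-i}\, \E[g]{g^i\bone{a\le g\le b}}.
\end{equation}
Each inner truncated moment is then evaluated by invoking the lemma.

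The lemma splits into two cases according to the parity of $i$: for even $i$ it contributes a ``bulk'' term $(i-1)!!\,\gamma([a,b])$ together with a ``boundary'' term $-(p_i(b)\gamma(b) - p_i(a)\gamma(a))$, while for odd $i$ only the boundary term $-(p_i(b)\gamma(b) - p_i(a)\gamma(a))$ appears (consistent with the fact that $\E[g]{g^i} = 0$ for odd $i$). Since the boundary term has the identical form in both parities, summing it over all $i\in\{0,\ldots,k\}$ reproduces the second sum in the statement, while the bulk term, which is nonzero only for even $i$, reproduces the first sum. No real obstacle arises; the entire argument is a clean parity-based regrouping, with all of the work (the Gaussian integration-by-parts yielding the $p_i$ and $\gamma$ factors) already done in the preceding lemma. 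The only point worth being careful about in the write-up is that the double factorial in the bulk contribution depends on the summation index $i$ rather than the outer exponent $k$.
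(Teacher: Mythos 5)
Your approach is the intended one (the paper gives no explicit proof; the corollary is meant to follow from binomial expansion, linearity of expectation, and a term-by-term application of the preceding lemma), and your derivation is correct. You are also right to flag the double factorial: the statement as printed carries $(k-1)!!$ inside the first sum, but applying the lemma with inner summation index $i$ yields $(i-1)!!$. These coincide when $i=k$, and they also happen to coincide for $i\in\{0,2\}$ since $(-1)!! = 1!! = 1$, which hides the discrepancy at $k=2$; but for $k\ge 4$ the printed formula is genuinely wrong. For instance, with $c=0$, $d=1$, $k=4$, the left-hand side equals $\gamma([a,b])$, whereas the printed right-hand side (only the $i=0$ term survives) evaluates to $3!!\,\gamma([a,b]) = 3\gamma([a,b])$; with $(i-1)!!$ in place of $(k-1)!!$ one recovers $(-1)!!\,\gamma([a,b]) = \gamma([a,b])$ as required. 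So the statement has a typo, which propagates into the displayed expression in Lemma~\ref{lem:moment_bump}. It is harmless downstream: the paper uses this expression only to note that $M^{w,h,\epsilon}_{c,k}$ is smooth in $\epsilon$ and to obtain Lemma~\ref{cor:hderiv}, and both depend only on every term being homogeneous of degree $k$ in $h$, which remains true after the correction.
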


\paragraph{Hidden direction distribution.} Given a distribution $D$ over $\R$ and $v\in\S^{d-1}$, let $P^D_{v}$ denote the distribution over $\R^d$ with density
\begin{equation}
    P^D_{v}(x) = D(\iprod{v,x})\cdot \gamma^{(d-1)}(x - \iprod{v,x}v),
\end{equation}
that is the distribution which is given by $D$ in the direction $v$ and is given by $\calN(0,\Id - vv^{\top})$ orthogonal to $v$.

\paragraph{Miscellaneous technical facts.} 

\begin{fact}\label{fact:tvequiv}
    Given two distributions $p,q$ over a domain $\Omega$, $d_{\mathrm{TV}}(p,q) = 1 - \int_\Omega \min(p(x), q(x)) \, \mathrm{d}x$.
\end{fact}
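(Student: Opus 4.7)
The plan is to derive the identity directly from the standard definition
\[
d_{\mathrm{TV}}(p,q) = \tfrac{1}{2}\int_\Omega |p(x)-q(x)|\,\mathrm{d}x
\]
by means of the elementary pointwise algebraic identity $|a-b| = a + b - 2\min(a,b)$ valid for all $a,b\ge 0$. Setting $a=p(x)$ and $b=q(x)$ and integrating over $\Omega$, the right-hand side becomes $\int_\Omega p(x)\,\mathrm{d}x + \int_\Omega q(x)\,\mathrm{d}x - 2\int_\Omega \min(p(x),q(x))\,\mathrm{d}x$. Using that $p$ and $q$ are probability densities, this equals $2 - 2\int_\Omega \min(p(x),q(x))\,\mathrm{d}x$, and dividing by $2$ yields the claim.

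If instead one takes the variational characterization $d_{\mathrm{TV}}(p,q) = \sup_{A\subseteq\Omega} |p(A) - q(A)|$ as the starting point, the proof goes through with one extra step: first argue that the supremum is attained at $A^{\ast} = \{x : p(x) > q(x)\}$, where $p(A^{\ast}) - q(A^{\ast}) = \int_{A^{\ast}}(p(x)-q(x))\,\mathrm{d}x$. Then observe $\int_\Omega \min(p,q)\,\mathrm{d}x = \int_{A^{\ast}} q(x)\,\mathrm{d}x + \int_{(A^{\ast})^c} p(x)\,\mathrm{d}x$, and use $\int_{(A^{\ast})^c} p = 1 - \int_{A^{\ast}} p$ to simplify $1 - \int_\Omega \min(p,q)\,\mathrm{d}x$ to $\int_{A^{\ast}}(p-q)\,\mathrm{d}x = d_{\mathrm{TV}}(p,q)$.

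Since the identity is a textbook manipulation, no step poses a genuine obstacle; the only care needed is to fix which equivalent definition of $d_{\mathrm{TV}}$ is taken as primitive and to be consistent thereafter. I would use the first, $L^1$-based approach, since it is the shortest and avoids any argument about the extremal set achieving the supremum.
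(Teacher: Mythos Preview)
Your proposal is correct; both routes you outline are standard and valid. Note that the paper states this as a \emph{Fact} without proof, so there is no argument in the paper to compare against. Your $L^1$-based derivation via $|a-b|=a+b-2\min(a,b)$ is the cleanest choice and entirely sufficient here.
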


\begin{fact}[\cite{gordon2020sparse}] \label{fact:vandermonde}
    If $V\in\R^{n\times n}$ is a Vandermonde matrix with nodes $z_1,\ldots,z_n$, that is, $V_{i,j} = z_j^{i-1}$, and $\brc{z_i}$ are $\zeta$-separated, then $\sigma_{\min}(V) \ge \frac{1}{n}\cdot \Omega(\zeta)^{n-1}$.
\end{fact}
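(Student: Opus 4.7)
The plan is to bound $\sigma_{\min}(V)$ by controlling $\|V^{-1}\|_{\op}$ from above. Since $V$ is square, $\sigma_{\min}(V) = 1/\|V^{-1}\|_{\op}$, and via $\|V^{-1}\|_{\op}\le\|V^{-1}\|_F\le\sqrt{n}\max_j\|(V^{-1})_{\cdot,j}\|$ it suffices to upper bound each column of $V^{-1}$ in $\ell^2$. The first step is to identify this column explicitly: the relation $V\cdot(V^{-1})_{\cdot,j}=e_j$ says that the entries of column $j$ are exactly the monomial coefficients of the unique polynomial of degree at most $n-1$ taking the value $\mathbf{1}[k=j]$ at each $z_k$, namely the Lagrange basis polynomial
\[
    L_j(z) \;=\; \prod_{k\neq j}\frac{z-z_k}{z_j-z_k}.
\]

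Next I would control the monomial coefficients of $L_j$ by handling numerator and denominator separately. For the denominator, sort the real nodes as $z_{(1)}<\cdots<z_{(n)}$; $\zeta$-separation together with the ordering forces $|z_{(i)}-z_{(k)}|\ge\zeta|i-k|$ (telescope consecutive gaps, each $\ge\zeta$), so that $|\prod_{k\neq j}(z_j-z_k)|\ge \zeta^{n-1}(j-1)!\,(n-j)!$. The worst case over $j$ is at least $\zeta^{n-1}(n-1)!/\binom{n-1}{\lfloor(n-1)/2\rfloor}=\Omega(\zeta)^{n-1}$ by Stirling. For the numerator $\prod_{k\neq j}(z-z_k)$, the coefficients are elementary symmetric polynomials in the remaining nodes, which are bounded by $O(1)^{n-1}$ whenever the nodes lie in a bounded range.

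The main obstacle is that $\zeta$-separation alone is genuinely insufficient: if the $z_i$ can be arbitrarily large, $\sigma_{\min}(V)$ can decay (one sees this already at $n=3$ by sending one node to infinity, which degrades $\sigma_{\min}(V)$ roughly like $\zeta/M$). So an implicit boundedness assumption on the nodes has to be absorbed into the $\Omega(\cdot)$ in the statement; in the intended application of this fact (bounding the condition number of a Jacobian built from a fixed-range construction), this is automatic. Under that assumption, combining the two estimates yields $\|(V^{-1})_{\cdot,j}\|\le O(1)^{n-1}/\Omega(\zeta)^{n-1}$, so $\sigma_{\min}(V)\ge \Omega(\zeta)^{n-1}/\sqrt{n}$, which is even slightly stronger than the stated bound $\tfrac{1}{n}\cdot \Omega(\zeta)^{n-1}$.
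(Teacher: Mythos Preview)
The paper does not prove this statement; it is quoted as a fact from an external reference. Your Lagrange-interpolation route is the standard argument and is essentially correct. Two remarks. First, with the convention $V_{i,j}=z_j^{i-1}$, the coefficients of $L_k$ sit in the $k$-th \emph{row} of $V^{-1}$, not the $k$-th column: the identity $(V^{-1})_{k,\cdot}\,V=e_k^\top$ reads $\sum_i (V^{-1})_{k,i}\,z_j^{i-1}=\mathbf{1}[j=k]$, which is precisely $L_k(z_j)=\mathbf{1}[j=k]$. This is immaterial to your bound since you pass through the Frobenius norm. Second, you are right that $\zeta$-separation alone does not suffice and that a bound on $\max_i|z_i|$ must be absorbed into the $\Omega(\cdot)$; for instance, with nodes $M,M{+}1,M{+}2$ one has $\sigma_{\min}(V)=\Theta(1/M^2)$. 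In the paper's sole use of this fact (Lemma~\ref{lem:timezerocond}), the nodes are the $h_i(0)^2$, which by Lemma~\ref{lem:boxes} lie in $[\Omega(1/m),O(m)]$ with separation $\zeta=\Omega(1/m)$, so your estimate with the node magnitudes folded into the constant gives exactly the $\sigma_{\min}\ge m^{-O(m)}$ that is needed there.
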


\begin{theorem}[Peano's existence theorem, see e.g. Theorem 2.1 from \cite{hartman2002ordinary}]\label{thm:peano}
    For $T,r>0$ and $y_0\in\R^n$, let $B\subset\R\times \R^n$ be the parallelepiped consisting of $(t,y)$ for which $0 \le t \le T$ and $\norm{y - y_0}_{\infty} \le r$. If $f: B\to\R$ is continuous and satisfies $|f(t,y)| \le M$ for all $(t,y)\in B$, then the initial value problem
    \begin{equation}
        y'(t) = f(t,y) \qquad \text{and} \qquad y(0) = y_0
    \end{equation}
    has a solution over $t\in[0,\min(T,r/M)]$.
\end{theorem}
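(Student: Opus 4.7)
The plan is to construct a solution via Euler's polygonal method and then extract a uniform limit using Arzel\`a--Ascoli. Set $T^\star = \min(T, r/M)$. For each positive integer $k$, partition $[0, T^\star]$ into $k$ equal subintervals of length $h_k = T^\star/k$ with grid points $t_j = j h_k$, and define a piecewise-linear approximant $\phi_k : [0, T^\star] \to \R^n$ by $\phi_k(0) = y_0$ and
\begin{equation}
\phi_k(t) = \phi_k(t_j) + (t - t_j)\, f(t_j, \phi_k(t_j)) \qquad \text{for } t \in [t_j, t_{j+1}].
\end{equation}
A straightforward induction using the bound $|f| \le M$ shows $\norm{\phi_k(t) - y_0}_\infty \le M t \le M T^\star \le r$, so each $\phi_k$ remains inside $B$ (which is what makes the recursion well-defined), and the same bound yields $\norm{\phi_k(t) - \phi_k(s)}_\infty \le M|t - s|$. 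Thus $\brc{\phi_k}$ is uniformly bounded and uniformly Lipschitz, hence equicontinuous on $[0, T^\star]$.

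By Arzel\`a--Ascoli I would then pass to a subsequence (still denoted $\phi_k$) converging uniformly to some continuous $y : [0, T^\star] \to \R^n$. Writing the Euler scheme in integral form gives
\begin{equation}
\phi_k(t) = y_0 + \int_0^t f\bigl(\tau_k(s),\, \phi_k(\tau_k(s))\bigr)\, \mathrm{d}s,
\end{equation}
where $\tau_k(s)$ denotes the largest grid point that is at most $s$. Since $B$ is compact, $f$ is uniformly continuous on $B$; together with $|\tau_k(s) - s| \le h_k \to 0$ and uniform convergence $\phi_k \to y$, this forces the integrand to converge uniformly to $f(s, y(s))$. Taking the limit yields $y(t) = y_0 + \int_0^t f(s, y(s))\, \mathrm{d}s$, from which $y'(t) = f(t, y(t))$ and $y(0) = y_0$ follow immediately.

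The main obstacle is that mere continuity of $f$, without any Lipschitz bound, rules out the standard Picard contraction argument and in fact allows genuine non-uniqueness, so one cannot simply produce a solution as the limit of iterates of a contraction. The Euler-polygon approach sidesteps this by producing candidate solutions directly and relying on compactness to extract a convergent subsequence; the hypothesis $T^\star \le r/M$ is exactly what keeps every approximant inside $B$ so that the uniform continuity of $f$ can be brought to bear in the limiting step. A conceptually cleaner but less elementary alternative would be to view the problem as a fixed point of the continuous operator $y \mapsto y_0 + \int_0^{\cdot} f(s, y(s))\, \mathrm{d}s$ on a convex, uniformly bounded, equicontinuous (hence compact in the sup norm) subset of $C([0, T^\star], \R^n)$ and invoke the Schauder fixed point theorem; this yields the same conclusion without explicit discretization.
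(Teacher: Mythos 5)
The paper does not prove this statement itself; it is invoked as a classical fact with a citation to Theorem 2.1 of Hartman's textbook. Your proof via Euler's polygonal method and Arzel\`a--Ascoli (with the observation that $T^\star \le r/M$ keeps every polygonal approximant inside $B$, so that uniform continuity of $f$ on the compact set $B$ can be used in the limiting step) is correct, and it is precisely the standard argument given in that reference, so it matches the intended justification.
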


\section{Statistical Query Lower Bound}

In this section we prove our main theorem:

\begin{theorem}\label{thm:mainlbd}
    Let $d\in\mathbb{N}$ be sufficiently large. Any SQ algorithm which, given SQ access to an arbitrary one-hidden-layer ReLU network pushforward of $\calN(0,\Id_d)$ of size $O(\log d / \log\log d)$ with $\poly(d)$-bounded weights, outputs a distribution which is $O(1)$-close in $\dtv(\cdot)$ must make at least $d^{\Omega(\log d / \log\log d)}$ queries to either $\STAT(\tau)$ or $\VSTAT(1/\tau^2)$ for $\tau = d^{-\Omega(\log d / \log\log d)}$.
\end{theorem}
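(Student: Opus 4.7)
The plan is to apply the non-Gaussian component analysis (NGCA) framework outlined in the techniques overview. The starting point is a one-hidden-layer ReLU network $f: \R \to \R$ of size $S = O(\log d / \log\log d)$ with $\poly(d)$-bounded weights whose pushforward $D \triangleq f(\calN(0,1))$ matches the first $m = \Omega(\log d / \log\log d)$ moments of $\calN(0,1)$ while remaining $\Omega(1)$-far from $\calN(0,1)$ in total variation. Producing such an $f$ is the main technical contribution of the paper and is deferred to Section~\ref{sec:moment}; for this proof I treat it as a black box, noting only that since $f$ is compactly supported whereas $\calN(0,1)$ has full support, the TV separation is essentially automatic.

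Next I lift $D$ to $d$ dimensions as a hidden-direction distribution. For any $v \in \S^{d-1}$, pick a rotation $U \in O(d)$ sending $e_1$ to $v$ and set $F_v(z) \triangleq U\cdot (f(z_1), z_2, \ldots, z_d)$, so that $F_v(\calN(0,\Id_d)) = P^D_v$. The $i$-th output coordinate of $F_v$ has the form $U_{i,1}\, f(z_1) + L_i(z)$ where $L_i(z) \triangleq \sum_{j\ge 2} U_{i,j}\, z_j$ is linear, so rewriting $L_i(z) = \relu(L_i(z)) - \relu(-L_i(z))$ exhibits it as a one-hidden-layer ReLU network of size $S+2$ with $\poly(d)$-bounded weights (using orthogonality of $U$ and the size/weight bounds on $f$). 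Hence each $P^D_v$ lies in the family covered by the theorem, and $\dtv(P^D_v, \calN(0,\Id_d)) = \dtv(D, \calN(0,1)) = \Omega(1)$.

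To invoke Lemma~\ref{lem:feldman} I set $\mu = \calN(0,\Id_d)$ and $\calD_\mu \triangleq \brc{P^D_{v_i}}$ for a packing $\brc{v_i} \subset \S^{d-1}$ with $|\iprod{v_i,v_j}| \le d^{-1/4}$ for $i \ne j$, of which a standard probabilistic argument supplies $N = 2^{\Omega(\sqrt{d})}$. The moment-matching property of $D$ plugs directly into the now-standard NGCA correlation bound of \cite{diakonikolas2017statistical} (which should appear in the paper as Lemma~\ref{lem:generic}) to yield $|\chi_\mu(P^D_{v_i}, P^D_{v_j})| \lesssim |\iprod{v_i,v_j}|^{m+1}\cdot \chi^2_\gamma(D,\calN(0,1))$ for $i \ne j$, hence $\gamma \le d^{-(m+1)/4} = d^{-\Omega(m)}$; and $\beta = \chi^2_\gamma(D,\calN(0,1)) = O(1)$ because $D$ has a bounded density on a bounded interval. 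The $P^D_{v_i}$ are pairwise $\Omega(1)$-far in TV (easily verified by restriction to $\mathrm{span}(v_i,v_j)$), so any candidate output distribution lies within $O(1)$ TV of at most one $P^D_{v_i}$, giving $|\calD_f| \ge N - 1$ in Definition~\ref{def:sqdim}. Lemma~\ref{lem:feldman} then produces a lower bound of $N\gamma/(\beta - \gamma) \ge d^{\Omega(\log d / \log\log d)}$ queries to $\STAT(\sqrt{2\gamma})$ or $\VSTAT(1/(6\gamma))$ at precision $\tau = \sqrt{2\gamma} = d^{-\Omega(\log d / \log\log d)}$, matching the theorem.

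The main obstacle is unambiguously Step~1, the moment-matching ReLU construction. Matching $m$ moments is trivial if one allows essentially unbounded slopes, for example by approximating a suitable density with a step function built from sign-alternating ReLUs (cf.\ Corollary~\ref{cor:step} and Lemma~\ref{lem:boxes}); compressing the slopes down to $\poly(d)$ while keeping only $O(\log d / \log\log d)$ pieces is precisely what requires the delicate ODE-based weight-reduction argument of Section~\ref{sec:moment}. Steps~2 and~3 above are essentially routine consequences of the hidden-direction SQ framework.
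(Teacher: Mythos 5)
Your overall plan follows the paper's architecture (lift a 1-D moment-matching pushforward to $P^D_v$, build a near-orthogonal packing of directions, and invoke the SQ dimension machinery of Lemma~\ref{lem:feldman}). However, there is a genuine gap in Step~3 that the paper works hard to fix and your proposal glosses over.

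You assert $\beta = \chi^2(D,\calN(0,1)) = O(1)$ ``because $D$ has a bounded density on a bounded interval.'' This is false for the $D$ you are using: the function $f$ from Section~\ref{sec:moment} is a piecewise-linear bump sum which is \emph{constant} on the flat tops of the bumps and identically zero outside them. The pushforward $f(\calN(0,1))$ therefore has atoms at $0$ and at each height $h_i$, so its density with respect to Lebesgue measure does not exist and $\chi^2(D,\calN(0,1)) = \infty$. Lemma~\ref{lem:correlation} explicitly requires finite $\chi^2$, so your invocation of the NGCA correlation bound breaks down. This is exactly why the paper's Theorem~\ref{thm:moment} produces $f^*:\R^2\to\R$ with $f^*(z_1,z_2) = \sqrt{1-\sigma^2}\,f(z_1) + \sigma z_2$, i.e.\ it adds an extra independent Gaussian input coordinate so that the pushforward is the Gaussian-smoothed $\sqrt{1-\sigma^2}\,D \star \calN(0,\sigma^2)$, which has a genuine bounded density and satisfies $\chi^2 \le \exp(O(m))/\sigma$ (Lemma~\ref{lem:chisq}). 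Your lifting map $F_v(z) = U(f(z_1),z_2,\ldots,z_d)$ should accordingly be $F_v(z) = U(f^*(z_1,z_2),z_3,\ldots,z_{d+1})$ from $\R^{d+1}$ to $\R^d$. Note $\chi^2$ then scales like $\exp(O(m))/\sigma$ rather than $O(1)$, but since $m = \Theta(\log d/\log\log d)$ this is $\poly(d)$ and still gets absorbed into the $d^{-\Omega(m)}$ bound.

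The same smoothing step also undercuts your claim that TV separation is ``essentially automatic'' because ``$f$ is compactly supported'': after convolving with $\calN(0,\sigma^2)$ the pushforward has full support, so the easy support-mismatch argument no longer applies, and you need the quantitative estimate of Lemma~\ref{lem:projtv}. Finally, a minor slip: with $|\iprod{v_i,v_j}| \le d^{-1/4}$, Fact~\ref{fact:packing} gives a packing of size $2^{d^{1/4}}$, not $2^{\Omega(\sqrt{d})}$; the latter would require $C = 1/2$ which is excluded. The conclusion is unaffected since $2^{d^{1/4}}$ is still superpolynomial, but the exponent should be corrected.
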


\noindent Our proof will invoke the following key technical result whose proof we defer to Section~\ref{sec:moment}. Roughly, it exhibits a two-dimensional one-hidden-layer ReLU network $f:\R\to\R$ with bounded weights under which the pushforward of $\calN(0,1)$ matches the low-degree moments of $\calN(0,1)$ to arbitrary precision, in addition to some other technical conditions that we need to formally establish our statistical query lower bound:

\begin{theorem}\label{thm:moment}
    Fix any odd $m$ and $\nu, \sigma < 1$. There is a one-hidden-layer ReLU network $f^*:\R^2\to\R$ of size $O(m)$ with weights at most $m^{O(m)}$ for which the pushforward $D\triangleq f^*(\calN(0,\Id))$ satisfies
    \begin{enumerate}[leftmargin=*]
        \item $|\E[x\sim D]{x^k} - \E[g\sim\calN(0,1)]{g^k}| < \nu$ for all $k = 1,\ldots,m$
        \item $\chi^2(D,\calN(0,1)) \le \exp(O(m)) / \sigma$
        \item $\dtv(P^D_v, P^D_{v'}) \ge 1 - 2\sigma\log(1/\sigma) - m^{-\Omega(m)}$ for any $v,v'\in\S^{d-1}$ satisfying $|\iprod{v,v'}| \ge 1/2$.
    \end{enumerate}
\end{theorem}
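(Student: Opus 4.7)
The plan is to take $f^*(x_1,x_2) = h(x_1) + \sigma x_2$ for a univariate one-hidden-layer ReLU network $h:\R\to\R$ of size $O(m)$ and weights at most $m^{O(m)}$, constructed so that $h(\calN(0,1))$ exactly matches the first $m$ moments of $\calN(0,1)$. The second input coordinate's role is purely to convolve the output by $\calN(0,\sigma^2)$: this makes $D$ absolutely continuous (yielding (2), with the chi-squared blowup essentially concentrated on the bounded support of $h(\calN(0,1))$ and scaling like $1/\sigma$), and it turns $D$ into an essentially discrete distribution smeared over $K = O(m)$ horizontal slabs of width $\tilde O(\sigma)$. For (3), I would observe that $P^D_v$ is concentrated within $\tilde O(\sigma)$ of the union of $K$ affine hyperplanes $\brc{x:\iprod{v,x}=y_j}$; for $v\neq v'$ with $|\iprod{v,v'}|\ge 1/2$ the two hyperplane arrangements are nondegenerate, and their $\tilde O(\sigma)$-tubes overlap in a set of $\calN(0,\Id_d)$-mass $\tilde O(\sigma)$, giving the desired TV bound via Fact~\ref{fact:tvequiv}.

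For the core univariate construction, I would first, following Corollary~\ref{cor:step} and Lemma~\ref{lem:boxes}, exhibit a step function $h_0:\R\to\R$ with $K = O(m)$ levels $y_1<\cdots<y_K$ and a partition $\R = I_1\sqcup\cdots\sqcup I_K$ into intervals such that $h_0 \equiv y_j$ on $I_j$ and $\sum_j \gamma(I_j)\, y_j^k = \E[g]{g^k}$ for $k=1,\ldots,m$ (a Gauss--Hermite-style moment problem, solvable with $K\gtrsim m/2$ atoms by choosing the $y_j$ and $p_j = \gamma(I_j)$ as quadrature nodes and weights). Then I would regularize $h_0$ by replacing each jump with a linear ramp of some width $\varepsilon$, giving a piecewise-linear $h_\varepsilon$ realizable as a sum of $O(K)$ ReLUs with slopes $\Theta(1/\varepsilon)$; the moment matching is spoiled by $O(\varepsilon)$ as $\varepsilon$ grows, and the remaining problem is to widen $\varepsilon$ as much as possible while correcting the moments to stay matched exactly.

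The key step is an ODE on the level values $y=(y_1,\ldots,y_K)$ driven by transition widths $\varepsilon(t)=t$, with $y(t)$ chosen so that $M_k(y(t),t)\triangleq \E[g]{h_{y(t),t}(g)^k} = \E[g]{g^k}$ for each $k\le m$. Differentiating this identity in $t$ yields the implicit ODE $y'(t) = -J(t)^{-1}\partial_t M(t)$, where $J(t)_{k,j}=\partial_{y_j} M_k$. To leading order $J_{k,j}\approx k\,\gamma(I_j)\,y_j^{k-1}$, a diagonally-reweighted Vandermonde matrix in the nodes $y_1,\ldots,y_K$, so Fact~\ref{fact:vandermonde} yields $\sigma_{\min}(J) \ge \Omega(\zeta)^K/K$ in terms of the current node separation $\zeta = \min_j|y_{j+1}-y_j|$. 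Peano's theorem (Theorem~\ref{thm:peano}) gives short-time existence of the ODE, and a continuation argument extends the solution as long as $\sigma_{\min}(J(t))$ stays bounded away from zero.

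The hard part, which is the content of the lemma described in the introduction as controlling the Jacobian over the ODE horizon, is to push the ODE out to time $T = m^{-O(m)}$ so that the slopes $\Theta(1/T)$ meet the target weight bound $m^{O(m)}$. I would handle this by a bootstrap: the $y_j$ start $\Omega(1)$-separated, so $\sigma_{\min}(J(0))^{-1}\le m^{O(m)}$; the driving term $\partial_t M$ admits an $m^{O(m)}$ bound from the truncated-Gaussian moment formulas in Corollary~\ref{cor:shift_moment} (the $\varepsilon$-derivative only sees the $\varepsilon$-wide ramp pieces, on which one integrates $(cg+d)^k$ against $\gamma$ exactly); combining these yields $\norm{y'(t)}\le m^{O(m)}$, so after time $T = m^{-O(m)}$ the nodes drift by at most $m^{-\Omega(m)}\ll 1$, preserving $\Omega(1)$-separation and closing the bootstrap. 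This produces the desired $h$ and, via the first paragraph, properties (1)--(3) of $D = f^*(\calN(0,\Id_2))$; property (1) in fact holds exactly, making the $\nu$-slack vacuous.
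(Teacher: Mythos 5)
Your high-level strategy is exactly the paper's: build a piecewise-constant pushforward matching moments, widen the ramps via an ODE chosen to preserve moments, bound the Jacobian via the Vandermonde estimate (Fact~\ref{fact:vandermonde}), invoke Peano plus a bootstrap to push the horizon to $T = m^{-O(m)}$, and then convolve with a small Gaussian for parts (2) and (3). However there are a few places where your version differs from the paper's in ways that create real gaps.

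First, your parameterization leads to a dimension mismatch in the ODE. You match $m$ moments but only have $K \gtrsim m/2$ free levels $y_j$ to evolve (the widths $\gamma(I_j)$ stay fixed), so the constraint $\partial_t M(y(t),t)=\vec 0$ is an overdetermined $m\times K$ linear system, and $J^{-1}$ does not exist. The paper resolves this by arranging the construction symmetrically (Lemma~\ref{lem:boxes}, Part~\ref{boxes:sym}), so the odd moments vanish identically and only $(m-1)/2$ even moments need tracking against $(m-1)/2$ free positive heights — a square system. This also forces the Vandermonde to be in $h_i^2$ rather than $h_i$, which is why the paper removes the central atom at $0$ (otherwise $h_i$ and $-h_i$ collide after squaring). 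You would need to build this symmetry in explicitly, or argue for $K = m$ levels.

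Second, the paper replaces your step function with a sum of disjoint \emph{bump} functions (each ramp runs from $0$ to $h_i$ and back, with $\Omega(m^{-3/2})$ gaps between bumps). This is not cosmetic: it yields the exact identity $\partial_h M^{w,h,\epsilon}_{c,k} = (k/h)\,M^{w,h,\epsilon}_{c,k}$ (Lemma~\ref{cor:hderiv}), making the Jacobian \emph{exactly} $A(\vec h)\,Z(\vec h,\epsilon)\,B$ for diagonal $A,B$, rather than a Vandermonde plus an $O(\epsilon)$ tridiagonal perturbation whose entries couple adjacent levels. This is what makes Lemmas~\ref{lem:timezerocond} and~\ref{lem:latercond} clean. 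Your leading-order argument is likely salvageable but requires controlling those cross-terms, which you don't address.

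Third, two smaller issues: (i) $f^*(x_1,x_2)=h(x_1)+\sigma x_2$ misses the $\sqrt{1-\sigma^2}$ rescaling in front of $h$; without it, the convolution inflates the second moment to $1+\sigma^2$, so property (1) fails whenever $\sigma^2 \ge \nu$. The paper uses $f^*=\sqrt{1-\sigma^2}\,f + \sigma x_2$ precisely so that $\sqrt{1-\sigma^2}\,g'+\sigma g \sim \calN(0,1)$ (Lemma~\ref{lem:momentconvolve}). (ii) You run the ODE from $\varepsilon(0)=0$, but the vector field $w(t,\vec h)$ is singular at $\epsilon=0$ (the moment formula in Lemma~\ref{lem:moment_bump} has $(h/\epsilon)^k$ factors, and Lemma~\ref{lem:cts} only establishes continuity for $\epsilon>0$); the paper starts at $\epsilon(0)>0$ arbitrarily small, which also explains why moment matching is only $\nu$-approximate, not exact as you claim. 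Finally, a minor slip: the initial heights are $\Omega(1/\sqrt m)$-separated (Lemma~\ref{lem:dks} Part~\ref{dks:bounded}), not $\Omega(1)$-separated, though this does not change the $m^{-O(m)}$ conclusion.
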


\noindent The rest of the proof of our lower bound will then follow the framework introduced in \cite{diakonikolas2017statistical} and subsequently generalized in \cite{diakonikolas2020hardness}. Specifically, we will use the following two lemmas from these works:

\begin{lemma}[Lemma 3.5 from \cite{diakonikolas2020hardness}]\label{lem:correlation}
    There is an absolute constant $c > 0$ such that the following holds. Let $m\in\mathbb{N}$ and $\nu > 0$. If a distribution $D$ over $\R$ is such that 1) $\chi^2(D,\calN(0,1))$ is finite, and 2) $|\E[x\sim D]{x^k} - \E[g\sim\calN(0,1)]{g^k}| \le \nu$ for all $k = 1,\ldots, m$, then for all $v,v'\in\S^{d-1}$ for which $|\iprod{v,v'}| < c$,
    \begin{equation}
        |\chi^2_{\calN(0,\Id_d)}(P^D_v, P^D_{v'})| \le |\iprod{v,v'}|^{m+1} \chi^2(D,\calN(0,1)) + \nu^2.
    \end{equation}
\end{lemma}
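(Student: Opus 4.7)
The plan is to prove this via a Hermite polynomial expansion of the density ratio. Define $r(u) \triangleq D(u)/\gamma(u) - 1$; finiteness of $\chi^2(D,\calN(0,1))$ says exactly that $r \in L^2(\gamma)$ with $\norm{r}^2_{L^2(\gamma)} = \chi^2(D,\calN(0,1))$, and $\E[g\sim\calN(0,1)]{r(g)} = 0$. First I would exploit the product structure of $P^D_v$: since $\calN(0,\Id_d)$ factorizes along $v$ and its orthogonal complement, the density ratio is $P^D_v(x)/\gamma^{(d)}(x) = 1 + r(\iprod{v,x})$, and analogously for $v'$. Multiplying the two ratios, integrating against $\gamma^{(d)}$, and subtracting $1$, the two linear cross-terms in $r$ vanish by the mean-zero property, collapsing the pairwise correlation to
\[
\chi^2_{\calN(0,\Id_d)}(P^D_v, P^D_{v'}) = \E[x \sim \calN(0, \Id_d)]{r(\iprod{v, x})\, r(\iprod{v', x})}.
\]

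Next I would expand $r = \sum_{k \ge 1} \hat r_k H_k$ in the orthonormal probabilist's Hermite basis and apply the identity
\[
\E[x \sim \calN(0,\Id_d)]{H_j(\iprod{v, x})\, H_k(\iprod{v', x})} = \delta_{jk}\, \iprod{v, v'}^k,
\]
which follows by diagonalizing the two-dimensional marginal of $\calN(0,\Id_d)$ on $\mathrm{span}(v, v')$ and invoking the tensor structure of multivariate Hermite polynomials. This reduces the right-hand side to the clean scalar identity
\[
\chi^2_{\calN(0,\Id_d)}(P^D_v, P^D_{v'}) = \sum_{k \ge 1} \hat r_k^{\,2}\, \iprod{v, v'}^k.
\]

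I would then split this sum at $k = m$. For the tail, $\bigl|\sum_{k > m} \hat r_k^2 \iprod{v,v'}^k\bigr| \le |\iprod{v,v'}|^{m+1} \sum_{k \ge 1} \hat r_k^2 = |\iprod{v,v'}|^{m+1}\chi^2(D,\calN(0,1))$ by Parseval, delivering the first term of the bound. For the head $k \le m$, the moment-matching hypothesis enters through $\hat r_k = \E[u\sim D]{H_k(u)}$ (using $\E[g\sim\calN(0,1)]{H_k(g)} = 0$); writing $H_k(u) = \sum_i c_{k,i} u^i$ and subtracting the vanishing Gaussian expectation gives $\hat r_k = \sum_i c_{k,i}\bigl(\E[u\sim D]{u^i} - \E[g\sim\calN(0,1)]{g^i}\bigr)$, so $|\hat r_k| \le \nu \sum_i |c_{k,i}|$.

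The main technical step is verifying that $|\iprod{v,v'}| < c$ for an absolute constant $c$ suffices to make the head contribution $O(\nu^2)$; this is the only place the hypothesis $|\iprod{v,v'}| < c$ is used. The bound rests on the standard estimate $\sum_i |c_{k,i}| = O(e^{k/2})$ for the orthonormal Hermite polynomials (which follows from the exponential generating function $e^{t + t^2/2}$ of the involution numbers combined with the $\sqrt{k!}$ normalization), so the head contribution is at most $\nu^2 \sum_{k=1}^m (Ce)^k |\iprod{v,v'}|^k$, a geometric series bounded by $O(\nu^2)$ provided $c$ is chosen smaller than $1/(Ce)$. I expect bookkeeping these coefficient-norm estimates to be the only nontrivial part of the argument, as everything else is either definitional unpacking or standard Hermite-orthogonality.
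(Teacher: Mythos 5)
The paper cites this lemma from \cite{diakonikolas2020hardness} without proof, so there is no in-paper argument to compare against; your proposal correctly reconstructs the standard Hermite-expansion proof used in that reference (density-ratio factorization, the identity $\mathbb{E}[H_j(\iprod{v,x})\,H_k(\iprod{v',x})]=\delta_{jk}\iprod{v,v'}^k$, Parseval for the tail, and moment-matching plus a Hermite-coefficient bound for the head). The only bookkeeping to tighten is that $\hat r_k^{\,2}$ picks up the \emph{square} of the coefficient-sum bound, so the geometric ratio in the head sum is roughly $C^2 e\,|\iprod{v,v'}|$ rather than $Ce\,|\iprod{v,v'}|$, and $c$ should be chosen small enough that this series sums to at most $1$, yielding the exact $\nu^2$ rather than $O(\nu^2)$.
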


\begin{fact}[Lemma 3.7 from \cite{diakonikolas2017statistical}]\label{fact:packing}
    For any constant $0 < C < 1/2$, there exists a set $S$ of $2^{d^C}$ unit vectors in $\S^{d-1}$ such that any pair of distinct $u,v\in S$ satisfies $|\iprod{u,v}| < d^{C-1/2}$.
\end{fact}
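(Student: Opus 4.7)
The plan is to prove Fact~\ref{fact:packing} by a standard probabilistic-method / random packing argument. Sample $N \triangleq 2^{d^C}$ independent vectors $u_1,\ldots,u_N$ uniformly from the sphere $\S^{d-1}$, and show that with positive probability every pair satisfies $|\iprod{u_i,u_j}| < d^{C-1/2}$. Any realization for which this event holds yields the desired set $S$.

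The main tool is a sub-Gaussian tail bound for the inner product of two independent uniform unit vectors: by rotational invariance it suffices to bound $|u_1|$ where $u$ is uniform on $\S^{d-1}$, and a classical spherical concentration estimate (Lévy's inequality, or direct integration of the density proportional to $(1-x^2)^{(d-3)/2}$) gives, for any $t \in [0,1]$,
\begin{equation}
    \Pr_{u,v \sim \S^{d-1}}\!\bigl[\,|\iprod{u,v}| \ge t\,\bigr] \;\le\; 2\exp\!\bigl(-(d-1)t^2/2\bigr).
\end{equation}
Instantiating $t = d^{C-1/2}$ yields a per-pair failure probability of at most $2\exp(-(d-1)d^{2C-1}/2) \le 2\exp(-d^{2C}/3)$ for $d$ large enough.

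Now I would apply a union bound over the $\binom{N}{2} < N^2$ pairs. The probability that some pair is bad is at most
\begin{equation}
    N^2 \cdot 2\exp(-d^{2C}/3) \;=\; 2 \exp\!\bigl(2d^C \ln 2 \;-\; d^{2C}/3\bigr).
\end{equation}
Since $0 < C < 1/2$ implies $2C > C > 0$, the term $d^{2C}/3$ dominates $2d^C \ln 2$ as $d\to\infty$, so this probability is strictly less than $1$ for all sufficiently large $d$. Hence a realization exists in which the packing condition holds simultaneously for all pairs, proving the fact.

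The proof is essentially routine; the only place requiring mild care is invoking the right sub-Gaussian tail bound for $\iprod{u,v}$ with the correct dependence on $d$, since a weaker bound of the form $\exp(-\Omega(t^2))$ (independent of $d$) would fail to beat the $2^{d^C}$ union bound. One can either quote Lévy's spherical isoperimetric inequality directly or, to make the paper self-contained, reduce to the Gaussian case by writing $u = g/\|g\|$ with $g \sim \calN(0,\Id_d)$ and combining standard Gaussian tail and norm-concentration bounds; both routes give the needed $\exp(-\Omega(d t^2))$ bound.
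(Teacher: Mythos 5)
The paper does not prove this fact itself but cites it as Lemma 3.7 of \cite{diakonikolas2017statistical}; the original proof there is exactly the probabilistic-method argument you give (random unit vectors, spherical concentration of the inner product, union bound over $\binom{N}{2}$ pairs). Your proposal is correct and takes essentially the same route as the cited source, including the key observation that one needs the dimension-dependent tail $\exp(-\Omega(d\,t^2))$ rather than a dimension-free $\exp(-\Omega(t^2))$ to beat the $2^{d^C}$ union bound.
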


\noindent These can be used to prove the following generic statistical query lower bound:

\begin{lemma}\label{lem:generic}
    Let $m\in\mathbb{N}$ and $0 < C < 1/2$. Let $D$ be a distribution over $\R$ such that 1) $\chi^2(D,\calN(0,1))$ is finite, and 2) $|\E[x\sim D]{x^k} - \E[g\sim\calN(0,1)]{g^k}| \le \Omega(d)^{-(m+1)(1/4-C/2)}\sqrt{\chi^2(D,\calN(0,1))}$ for all $k = 1,\ldots, m$.
    
    Consider the set of distributions $\brc{P^D_v}_{v\in\S^{d-1}}$ for $d\ge m^{\Omega(1/C)}$. If there is some $\epsilon > 0$ for which $\tvd(P^D_v, P^D_{v'}) > 2\epsilon$ whenever $|\iprod{v,v'}| \le 1/2$, then any SQ algorithm which, given SQ access to $P^D_v$ for an unknown $v\in\S^{d-1}$, outputs a hypothesis $Q$ with $\tvd(Q,P^D_v) \le \epsilon$ needs at least $d^{m+1}$ queries to $\mathrm{STAT}(\tau)$ or to $\mathrm{VSTAT}(1/\tau^2)$ for $\tau\triangleq O(d)^{-(m+1)(1/4 - C/2)}\cdot \sqrt{\chi^2(D,\calN(0,1))}$.
\end{lemma}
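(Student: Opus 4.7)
The plan is to invoke the standard hidden-direction-distribution framework of \cite{diakonikolas2017statistical,diakonikolas2020hardness}: pick a large packing $S \subset \S^{d-1}$ of nearly orthogonal unit vectors via Fact~\ref{fact:packing}, form the family $\calD_\mu \triangleq \brc{P^D_v : v \in S}$ against $\mu \triangleq \calN(0,\Id_d)$, bound the pairwise correlations $\chi^2_\mu(P^D_v, P^D_{v'})$ via Lemma~\ref{lem:correlation}, use the TV-separation hypothesis to show that any single hypothesis is a valid solution for at most one family member, and finally apply Lemma~\ref{lem:feldman} to turn the resulting statistical dimension into the desired SQ lower bound.

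For the correlation bounds, Fact~\ref{fact:packing} provides $|S| = 2^{d^C}$ with $|\iprod{v,v'}| < d^{C - 1/2}$ for distinct $v,v' \in S$. Since $d \ge m^{\Omega(1/C)}$ forces this inner product below the absolute constant $c$ of Lemma~\ref{lem:correlation}, that lemma combined with the moment hypothesis yields
\begin{equation*}
    |\chi^2_\mu(P^D_v, P^D_{v'})| \le d^{-(m+1)(1/2-C)}\chi^2(D,\calN(0,1)) + \nu^2 = O\bigl(d^{-(m+1)(1/2-C)}\bigr)\cdot\chi^2(D,\calN(0,1))
\end{equation*}
for distinct $v,v'\in S$, where $\nu$ is the moment tolerance in the hypothesis. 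The crucial point is that this tolerance is calibrated precisely so that $\nu^2$ balances the $|\iprod{v,v'}|^{m+1}\chi^2$ term; any looser moment bound would let $\nu^2$ dominate and spoil the argument. For $v = v'$, a direct computation using the product structure of $P^D_v$ and Fubini's theorem gives $\chi^2_\mu(P^D_v, P^D_v) = \chi^2(D,\calN(0,1))$. So $\calD_\mu$ is $(\gamma,\beta)$-correlated relative to $\mu$ with $\gamma = O(d^{-(m+1)(1/2-C)})\chi^2(D,\calN(0,1))$ and $\beta = \chi^2(D,\calN(0,1))$.

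For the statistical dimension, any two distinct $v,v' \in S$ satisfy $|\iprod{v,v'}| < d^{C-1/2} < 1/2$ (for $d$ large), so the TV-separation hypothesis gives $\tvd(P^D_v, P^D_{v'}) > 2\epsilon$; by the triangle inequality, no single $Q$ with $\tvd(Q, P^D_v) \le \epsilon$ can be a valid solution for two distinct family members. Hence $|\calD_\mu \setminus \mathcal{Z}^{-1}(f)| \ge 2^{d^C} - 1$ for every hypothesis $f$, so the statistical dimension is at least $2^{d^C} - 1$. Plugging into Lemma~\ref{lem:feldman} yields a query lower bound of $N\gamma/(\beta - \gamma) \ge \Omega(2^{d^C})\cdot d^{-(m+1)(1/2-C)}$; for $d \ge m^{\Omega(1/C)}$ this easily dominates $d^{m+1}$, and since $\sqrt{2\gamma}$ matches the $\tau$ in the statement up to constants, the bound holds against queries to $\STAT(\tau)$ and $\VSTAT(1/\tau^2)$. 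Beyond careful parameter bookkeeping (making sure the two terms in the correlation bound balance and that $2^{d^C}$ absorbs the polynomial loss), the proof presents no serious obstacle, since all the technical content is already packaged into Lemma~\ref{lem:correlation}, Fact~\ref{fact:packing}, and Lemma~\ref{lem:feldman}.
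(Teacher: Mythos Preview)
Your proposal is correct and follows essentially the same route as the paper: instantiate the packing $S$ from Fact~\ref{fact:packing}, bound off-diagonal correlations via Lemma~\ref{lem:correlation}, use the TV-separation hypothesis to ensure each hypothesis lies in $\mathcal{Z}^{-1}$ of at most one $P^D_v$, and invoke Lemma~\ref{lem:feldman}. The only cosmetic difference is that you compute $\chi^2_\mu(P^D_v,P^D_v)=\chi^2(D,\calN(0,1))$ directly from the product structure rather than writing it with an extraneous $O(\tau^2)$ slack as the paper does, which is in fact slightly cleaner.
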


\begin{proof}
    Let $S$ be the set of $2^{d^C}$ unit vectors from Fact~\ref{fact:packing}. In the notation of Definition~\ref{def:sqdim}, take $\mu = \calN(0,\Id)$ and let $\calD_\mu \triangleq \brc{P^D_v}_{v\in S}$. By Lemma~\ref{lem:correlation}, for any distinct $v,v'\in S$ we have
    \begin{equation}
        \chi_\mu(P^D_v, P^D_{v'}) \le |\iprod{v,v'}|^{m+1}\chi^2(D,\calN(0,1)) + O(\tau^2) \le \Omega(d)^{-(m+1)(1/2-C)}\chi^2(D,\calN(0,1)).
    \end{equation}
    On the other hand, if $v = v'\in S$, then $\chi_\mu(P^D_v, P^D_v) = \chi^2(D,\calN(0,1)) + O(\tau^2) \le 2\chi^2(D,\calN(0,1))$. So for
    \begin{equation}
        \gamma\triangleq \Omega(d)^{-(m+1)(1/2-C)}\chi^2(D,\calN(0,1)) \qquad \text{and} \qquad \beta \triangleq 2\chi^2(D,\calN(0,1)),
    \end{equation}
    $\calD_\mu$ is $(\gamma,\beta)$-correlated with respect to $\mu$.
    
    Consider the distributional search problem $\mathcal{Z}$ mapping any distribution $P^D_v$ to the set of probability distributions which are $\epsilon$-close in total variation distance to $P^D_v$. Because $\tvd(P_v,P_{v'}) > 2\epsilon$ for distinct $v,v'\in S$, for any distribution $f$ over $\R^d$ we have that $|\mathcal{Z}^{-1}(f)| \le 1$. We conclude that $\mathrm{SD}(\mathcal{Z},\gamma,\beta) \ge 2^{\Omega(d^C)}$. By Lemma~\ref{lem:feldman}, we conclude that any SQ algorithm for $\mathcal{Z}$ requires at least $2^{\Omega(d^C)}d^{-(m+1)(1/2-C)}$ calls to either $\mathrm{STAT}(\tau)$ or $\mathrm{VSTAT}(1/\tau^2)$. Note that because we are assuming that $d \ge m^{\Omega(1/C)}$, we have $2^{\Omega(d^{C/2})} \ge d^{m+1}$, so the total number of required queries is at least $2^{\Omega(d^{C/2})} \ge d^{m+1}$ as claimed.
\end{proof}

\noindent We are now ready to prove Theorem~\ref{thm:mainlbd}:

\begin{proof}[Proof of Theorem~\ref{thm:mainlbd}]
    By Theorem~\ref{thm:moment} applied with odd $m\in\mathbb{N}$ larger than some absolute constant and with $\sigma$ a sufficiently small absolute constant, there exists a distribution $D = f^*(\calN(0,\Id_2))$ over $\R$ for $f^*:\R^2\to\R$ of size $O(m)$ with $m^{O(m)}$-bouned weights satisfying the hypotheses of Lemma~\ref{lem:generic} for $\epsilon = 0.49$, and $\chi^2(D,\calN(0,1)) \le \exp(O(m))$. As long as $m \le d^{O(C)}$, we conclude that an SQ algorithm for learning any distribution from $\brc{P^D_v}_{v\in\S^{d-1}}$ to total variation distance $1/4$ must make at least $d^{m+1}$ queries to $\STAT(\tau)$ or $\VSTAT(1/\tau^2)$ for $\tau \triangleq O(d)^{-(m+1)(1/4-C/2)}\cdot \exp(O(m))$. By taking $m = \Theta(\log d/ \log\log d)$, we ensure that $m^{O(m)} \le \poly(d)$. By taking $C$ in Lemma~\ref{lem:generic} to be $C = 1/4$, we obtain the desired lower bound.
    
    The proof of the theorem is complete upon noting that any distribution $P^D_v$ can be implemented as a pushforward of $\calN(0,\Id_{d+1})$ under a one-hidden-layer ReLU network $F_v: \R^{d+1}\to\R^d$ of size $O(\log d / \log\log d)$ with $\poly(d)$-bounded weights. Let $U\in O(d)$ be a rotation mapping the first standard basis vector in $\R^d$ to $v$. Then for $F_v(z_1,\ldots,z_{d+1}) \triangleq U(f^*(z_1,z_2),z_3,\ldots,z_{d+1})$ we have that $F_v(\calN(0,\Id_{d+1})) = P^D_v$ as desired. Furthermore, note that every output coordinate of $F_v(z_1,\ldots,z_{d+1})$ is a one-hidden-layer ReLU network of the form $\alpha f^*(z_1,z_2) + \iprod{u,(z_3,\ldots,z_{d+1})}$ for some vector $(\alpha,u)\in\R^d$. Note that the size of this network is two plus that of $f^*$, and its weights are also upper bounded by $\poly(d)$, so $F_v$'s output coordinates are of size $O(\log d / \log\log d)$ as desired.
\end{proof}

\begin{remark}
    Theorem~\ref{thm:informal} was stated with output dimension polynomially bigger than input dimension, whereas in our construction, the output dimension ($d$) is less than the input dimension ($d+1$). One can get the former by a padding argument (i.e. by duplicating output coordinates) to give a generator with arbitrarily large polynomial stretch and such that the $d^{\log d / \log\log d}$ lower bound still applies.
\end{remark}

\section{Moment-Matching Construction}
\label{sec:moment}

In this section we prove Theorem~\ref{thm:moment}, the main technical ingredient in the proof of Theorem~\ref{thm:mainlbd}.

\subsection{Moment-Matching With Unbounded Weights}

% SUBSECTION: infinite-slope construction
% DKS instance (Lemma 2.4) - DONE
% corollary: discontinuous piecewise linear pushforward already matches moments - DONE

In this section, we make the simple initial observation that for one-hidden-layer networks with \emph{unbounded weights}, it is easy to construct networks such that the pushforward of $\calN(0,1)$ under these networks matches the moments of $\calN(0,1)$ to arbitrary precision. The starting point for this observation is the following well-known moment-matching construction:

\begin{lemma}[Lemma 4.3 from \cite{diakonikolas2017statistical}]\label{lem:dks}
    For any $m\in\mathbb{N}$, there exist weights $\lambda_1,\ldots,\lambda_m\ge 0$ and points $h_1,\ldots,h_m\in\R$ for which 
    \begin{enumerate}[leftmargin=*]
        \item (Moments match) $\sum^m_{i=1} \lambda_i h_i^k = \E[g]{g^k}$ for all $k = 0,\ldots,2m-1$. \label{dks:moment}
        \item (Points symmetric about origin) $h_1 \le \cdots \le h_m$ and $h_i = -h_{m-i+1}$ for all $1\le i\le m$. \label{dks:sortedh}
        \item (Weights symmetric) $\lambda_1 \le \cdots \le \lambda_{\ceil{m/2}}$ and $\lambda_i = \lambda_{m-i+1}$. \label{dks:sortedlam}
        \item (Points bounded and separated) $\Omega(1/\sqrt{m}) \le |h_i| \le O(\sqrt{m})$ for all $1\le i \le m$ and $\brc{h_i}$ are $\Omega(1/\sqrt{m})$-separated. \label{dks:bounded}
        \item (Weights not too small) $\min_i \lambda_i \ge e^{-cm}$ for an absolute constant $c > 0$. \label{dks:mix}
        \item (Central point and weight) If $m$ is odd, then $h_{(m+1)/2} = 0$ and $\lambda_{(m+1)/2} = \Theta(1/\sqrt{m})$. \label{dks:middle}
    \end{enumerate}
\end{lemma}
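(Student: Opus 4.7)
The plan is to recognize the statement as the defining structure of the $m$-point Gauss--Hermite quadrature rule for the Gaussian weight $\gamma(x)$. Concretely, I would take $h_1 < \cdots < h_m$ to be the zeros of the $m$-th probabilists' Hermite polynomial $He_m$, which are real, simple, and symmetric about the origin, and define
\begin{equation}
\lambda_i \triangleq \int_{-\infty}^\infty \prod_{j\ne i} \frac{x-h_j}{h_i-h_j}\,\gamma(x)\,\mathrm{d}x,
\end{equation}
which admits the Christoffel-number closed form $\lambda_i = (m-1)!/(m\,[He_{m-1}(h_i)]^2)$ making positivity manifest. Property~\ref{dks:moment} is then the classical statement that Gauss quadrature is exact on polynomials of degree $\le 2m-1$: for any such $p$, write $p = q\cdot He_m + r$ with $\deg r < m$; the quadrature error on $r$ vanishes by exactness of Lagrange interpolation, and the integral of $q\cdot He_m$ against $\gamma$ vanishes by orthogonality of $He_m$ to polynomials of lower degree.

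The symmetry Properties~\ref{dks:sortedh} and~\ref{dks:sortedlam} then follow immediately from the parity of Hermite polynomials ($He_m$ is even or odd depending on the parity of $m$) together with the closed form, since $|He_{m-1}(h)|$ depends only on $|h|$. The monotonicity $\lambda_1 \le \cdots \le \lambda_{\lceil m/2\rceil}$ comes from the fact that $[He_{m-1}(h_i)]^2$ is monotone non-increasing as $|h_i|$ shrinks through the zeros of $He_m$, a standard consequence of the interlacing of zeros of consecutive Hermite polynomials together with the oscillatory envelope of Hermite functions.

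For the quantitative statements I would invoke classical estimates on Hermite zeros: all zeros of $He_m$ lie in $[-O(\sqrt{m}),O(\sqrt{m})]$ (via Sturm comparison applied to the Hermite ODE $u'' - xu' + mu = 0$), consecutive zeros are $\Omega(1/\sqrt m)$-separated, and the smallest positive zero is also $\Omega(1/\sqrt m)$, giving Property~\ref{dks:bounded}. Property~\ref{dks:mix} follows by plugging a uniform bound $|He_{m-1}(x)| \le m^{O(m)}$ on $|x|\le O(\sqrt{m})$ into the Christoffel formula and using $(m-1)!/m \ge e^{-O(m)} m^{m-1}$. For Property~\ref{dks:middle} with $m$ odd, oddness of $He_m$ forces $h_{(m+1)/2}=0$, and plugging the standard evaluation $|He_{m-1}(0)| = (m-2)!!$ into the Christoffel formula yields $\lambda_{(m+1)/2} = (m-1)!!/(m\cdot(m-2)!!) = \Theta(1/\sqrt m)$ by Wallis' product.

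The main obstacle is not conceptual---all six properties fall out of the Gauss--Hermite framework---but rather a matter of assembling the correct quantitative Hermite estimates, particularly the zero-location and separation bounds underlying Property~\ref{dks:bounded} and the asymptotic in Property~\ref{dks:middle}. These are all classical (e.g.~in Szeg\H{o}'s monograph on orthogonal polynomials); since this lemma is cited verbatim from \cite{diakonikolas2017statistical}, in practice one can defer to their proof for the precise constants.
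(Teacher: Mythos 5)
Your Gauss--Hermite quadrature construction is the standard one and, as far as I can tell, coincides with how \cite{diakonikolas2017statistical} prove Lemma~4.3; the paper here cites that lemma without reproving it, so there is no alternative in-paper argument to compare against. Properties~\ref{dks:moment}--\ref{dks:bounded} and~\ref{dks:middle} are handled correctly.

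The one genuine soft spot is Property~\ref{dks:mix}. The bound $|He_{m-1}(x)| \le m^{O(m)}$ on $|x|\le O(\sqrt m)$, as you have written it, is too crude to give $\lambda_i \ge e^{-cm}$. If you only know $[He_{m-1}(h_i)]^2 \le m^{O(m)}$ with an unspecified constant in the exponent, then $\lambda_i = \frac{(m-1)!}{m\,[He_{m-1}(h_i)]^2} \ge \frac{e^{-O(m)}m^{m-1}}{m^{O(m)}}$ only yields $\lambda_i \ge m^{-O(m)}$, which is exponentially smaller than the claimed $e^{-cm}$. For the stated bound you need the Stirling growth of $\sqrt{(m-1)!}$ in the numerator and denominator to cancel exactly, i.e., you need $[He_{m-1}(h_i)]^2 \le (m-1)!\,e^{O(m)}$ rather than a loose $m^{O(m)}$. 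This is precisely what Cram\'er's inequality gives: $|He_n(x)| \le \kappa\sqrt{n!}\,e^{x^2/4}$ for an absolute constant $\kappa$, so $[He_{m-1}(h_i)]^2 \le \kappa^2 (m-1)!\,e^{h_i^2/2} \le (m-1)!\,e^{O(m)}$ since $|h_i| = O(\sqrt m)$, whence $\lambda_i \ge e^{-O(m)}/m \ge e^{-cm}$. You should replace the $m^{O(m)}$ bound with Cram\'er's inequality (or an equivalent Plancherel--Rotach-type pointwise estimate) to make this step airtight.

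Two smaller remarks. First, the monotonicity in Property~\ref{dks:sortedlam} does hold for Gauss--Hermite weights and your appeal to the envelope/interlacing picture points in the right direction, but the precise statement you need is that $|He_{m-1}|$ evaluated at the zeros of $He_m$ increases with $|h_i|$; this follows from Sonin--P\'olya-type results on the extrema of solutions to Sturm--Liouville equations, and it is worth citing that explicitly rather than leaving it as ``a standard consequence.'' Second, Properties~\ref{dks:bounded} and~\ref{dks:middle} are in tension as stated (the lower bound $|h_i|\ge\Omega(1/\sqrt m)$ cannot hold at the central node when $m$ is odd); from the way the paper derives Lemma~\ref{lem:dks2} by deleting the central node, the intended reading is that the lower bound applies to the non-central nodes, which the smallest positive Hermite zero being $\Omega(1/\sqrt m)$ delivers. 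This is a defect in the statement rather than in your proof, but it's worth noting.
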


\noindent This immediately implies that there exists a \emph{discontinuous} piecewise linear function $f:\R\to\R$ for which the pushforward $f(\calN(0,1))$ matches the low-degree moments of $\calN(0,1)$:

\begin{corollary}\label{cor:step}
    For any $m\in\mathbb{N}$, there is a partition of $\R$ into disjoint intervals $I_1,\ldots,I_m$, along with a choice of scalars $h_1,\ldots,h_m$, such that the step function $f:\R\to\R$ given by $f(z) = \sum^m_{i=1} h_i \cdot \bone{z\in I_i}$ satisfies $\E[x\sim f(\calN(0,1))]{x^k} = \E[g\sim\calN(0,1)]{g^k}$ for all $k = 0,\ldots,2m - 1$.
\end{corollary}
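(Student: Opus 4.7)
The corollary follows almost immediately from Lemma~\ref{lem:dks} by ``realizing'' the discrete mixture weights $\lambda_i$ as Gaussian probabilities of disjoint intervals. My plan is as follows.

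First, invoke Lemma~\ref{lem:dks} to obtain weights $\lambda_1,\dots,\lambda_m\ge 0$ and points $h_1,\dots,h_m\in\R$ satisfying $\sum_{i=1}^m \lambda_i h_i^k = \E[g]{g^k}$ for every $k=0,1,\dots,2m-1$. The case $k=0$ in particular forces $\sum_i \lambda_i = 1$, so the $\lambda_i$ form a probability distribution on the $m$ points $h_1,\dots,h_m$.

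Second, partition $\R$ into consecutive half-open intervals with the prescribed Gaussian masses: set $a_0 = -\infty$ and inductively choose $a_i\in\R$ so that $\gamma((a_{i-1},a_i]) = \lambda_i$ for $i = 1,\dots,m-1$, and let $a_m = +\infty$. This is possible since the standard Gaussian CDF is continuous and strictly increasing on $\R$. Define $I_i \triangleq (a_{i-1}, a_i]$ and the step function $f(z) \triangleq \sum_{i=1}^m h_i \cdot \bone{z\in I_i}$. By construction, $\Pr_{z\sim\calN(0,1)}[z\in I_i] = \lambda_i$ for each $i$.

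Third, verify the moment condition: for any $k\in\{0,1,\dots,2m-1\}$, since the $I_i$ are disjoint we have $f(z)^k = \sum_i h_i^k \bone{z\in I_i}$ pointwise, hence
\begin{equation}
    \E[x\sim f(\calN(0,1))]{x^k} = \E[z\sim\calN(0,1)]{f(z)^k} = \sum_{i=1}^m h_i^k \,\Pr_{z\sim\calN(0,1)}[z\in I_i] = \sum_{i=1}^m \lambda_i h_i^k = \E[g]{g^k},
\end{equation}
where the last equality uses property~\eqref{dks:moment} of Lemma~\ref{lem:dks}. This completes the construction. There is no genuine obstacle here; the entire content is the observation that the discrete distribution $\sum_i \lambda_i \delta_{h_i}$ from Lemma~\ref{lem:dks} can be realized as the pushforward of $\calN(0,1)$ under a step function by choosing the level sets of the step function to have the correct Gaussian masses.
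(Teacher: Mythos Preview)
Your proposal is correct and follows essentially the same approach as the paper: invoke Lemma~\ref{lem:dks}, use $\sum_i\lambda_i=1$ to partition $\R$ into intervals with Gaussian masses $\lambda_i$, and then read off $\E[g]{f(g)^k}=\sum_i\lambda_ih_i^k$. Your write-up is simply a slightly more explicit version of the paper's short argument.
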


\begin{proof}
    Let $\lambda_1,\ldots,\lambda_m,h_1,\ldots,h_m$ be as in Lemma~\ref{lem:dks}. As $\sum_i \lambda_i = 1$, there is a partition of $\R$ into disjoint intervals $I_1,\ldots,I_m$ for which $\gamma(I_i) = \lambda_i$ for all $1\le i \le m$. Note that $\E[x\sim f(\calN(0,1))]{x^k} = \sum_i \lambda_i h_i^k$, so the claim follows by Part~\ref{dks:moment} of Lemma~\ref{lem:dks}.
\end{proof}

\noindent By infinitesimally perturbing the step function $f$ in Corollary~\ref{cor:step}, we can ensure that $f(\calN(0,1))$ still \emph{approximately} matches the low-degree moments of $\calN(0,1)$ to arbitrary precision and that the linear pieces of $f$ have finite slopes, though some slopes will now be arbitrarily large. Such a function $f$ can thus be represented as a one-hidden-layer ReLU network, but the issue is that the weights of this network will be arbitrarily large. The key technical challenge that we overcome in this section is to design a more careful way of perturbing $f$ so that the resulting piecewise linear function has \emph{polynomially bounded} slopes yet is such that $f(\calN(0,1))$ matches the low-degree moments of $\calN(0,1)$.

\subsection{Bump Construction}

Before we describe our perturbation scheme, we make a slight modification to the construction in Corollary~\ref{cor:step}. In place of a step function, we will consider a certain sum of \emph{bump functions}.

\begin{definition}[Bump functions]
    Given $w,\epsilon > 0$ and $h\in\R$, define $T^{w,h,\epsilon}: \R\to\R$ by
    \begin{equation}
        T^{w,h,\epsilon}(z) = \begin{cases}
            \frac{h}{\epsilon}(z + \epsilon + w) & \text{if} \ z \in [-\epsilon-w,-w] \\
            h & \text{if} \ z \in [-w,w] \\
            -\frac{h}{\epsilon}(z - \epsilon - w) & \text{if} \ z\in [w,\epsilon+w] \\
            0 & \text{otherwise}.
        \end{cases}
    \end{equation}
    Given $c\in\R$, define $T^{w,h,\epsilon}_c:\R\to\R$ by $T^{w,h,\epsilon}_c(z) = T^{w,h,\epsilon}(z - c)$.
\end{definition}

\noindent As $T^{w,h,\epsilon}_c$ is continuous piecewise-linear, it can be represented as a one-hidden-layer ReLU network. The following elementary fact makes explicit the relation between the parameters of a bump function and the parameters of the corresponding network implementing it.

\begin{fact}\label{fact:implementbox}
    Given $w,\epsilon>0$ and $h,c\in\R$, $T^{w,h,\epsilon}_c$ can be implemented as a one-hidden-layer ReLU network with size $4$ and $W$-bounded weights for $W \le \frac{h}{\epsilon}\max(1,|c| + \epsilon+w)$.
\end{fact}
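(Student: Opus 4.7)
The plan is to give an explicit construction of $T^{w,h,\epsilon}_c$ as a signed sum of four shifted ReLUs, then read off the size and weight bound. The function $T^{w,h,\epsilon}_c$ is a continuous piecewise-linear ``trapezoid'' with exactly four break-points, located at $c-\epsilon-w$, $c-w$, $c+w$, and $c+\epsilon+w$, and with slope changes $+h/\epsilon$, $-h/\epsilon$, $-h/\epsilon$, $+h/\epsilon$ at these points respectively. Since a shifted ReLU $\relu(z-a)$ contributes a slope jump of exactly $+1$ at $a$ (and is zero on $(-\infty,a]$), the first step is to write, for $h\ge 0$,
\[
T^{w,h,\epsilon}_c(z) = \tfrac{h}{\epsilon}\relu(z-(c-\epsilon-w)) - \tfrac{h}{\epsilon}\relu(z-(c-w)) - \tfrac{h}{\epsilon}\relu(z-(c+w)) + \tfrac{h}{\epsilon}\relu(z-(c+\epsilon+w)),
\]
and verify this identity by checking that both sides vanish on $(-\infty,c-\epsilon-w]$ and that their slopes match on each of the intervals between consecutive break-points.

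Next, to bring this expression into the normal form of Definition~\ref{def:networks}, which forces $s_i\in\{\pm 1\}$ and absorbs all scaling into the weight vector, I would use the positive homogeneity of ReLU, $\alpha\relu(y) = \relu(\alpha y)$ for $\alpha \ge 0$, to fold the factor $h/\epsilon$ into the argument of each unit. This produces four ReLU units of the form $s_i\relu(w_i z + b_i)$ with $s_i\in\{\pm 1\}$, $w_i = h/\epsilon$, and biases $b_i \in \{-\tfrac{h}{\epsilon}(c-\epsilon-w),\, -\tfrac{h}{\epsilon}(c-w),\, -\tfrac{h}{\epsilon}(c+w),\, -\tfrac{h}{\epsilon}(c+\epsilon+w)\}$, giving size exactly $4$.

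Finally, the weight bound is a one-line triangle inequality: $|w_i| = h/\epsilon$, and $|b_i| \le \tfrac{h}{\epsilon}\max(|c-\epsilon-w|,|c+\epsilon+w|) \le \tfrac{h}{\epsilon}(|c|+\epsilon+w)$, so taking the maximum of the two yields $W \le \tfrac{h}{\epsilon}\max(1,|c|+\epsilon+w)$. The case $h<0$ is handled by flipping all of the signs $s_i$, reducing to the previous case with $|h|$ in place of $h$. There is no real obstacle here; the only subtlety worth stating cleanly is that the $s_i\in\{\pm 1\}$ restriction in Definition~\ref{def:networks} is precisely why one needs the homogeneity rewriting rather than simply reading off scalar coefficients from the piecewise-linear formula.
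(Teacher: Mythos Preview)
Your proposal is correct and takes essentially the same approach as the paper: the paper's proof is the single identity
\[
T^{w,h,\epsilon}_c(z) = \tfrac{h}{\epsilon}\bigl(\relu(z-c+\epsilon+w) - \relu(z-c+w) - \relu(z-c-w) + \relu(z-c-\epsilon-w)\bigr),
\]
which is exactly your four-ReLU decomposition. You are simply more explicit than the paper about using positive homogeneity to absorb $h/\epsilon$ into the weight and bias so as to conform to the $s_i\in\{\pm 1\}$ requirement of Definition~\ref{def:networks}, and about reading off the bound $\max(|w_i|,|b_i|)\le \tfrac{|h|}{\epsilon}\max(1,|c|+\epsilon+w)$; the paper leaves both of these implicit.
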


\begin{proof}
    For all $z\in\R$, $T^{w,h,\epsilon}_c(z)$ is equal to
    \begin{align}
        & \frac{h}{\epsilon}\left(\relu(z - c + \epsilon + w) - \relu(z - c + w) - \relu(z - c - w) + \relu(z - c - \epsilon - w)\right).\qedhere
    \end{align}
\end{proof}

\begin{figure}[h]
    \centering
    \includegraphics[width=0.9\textwidth]{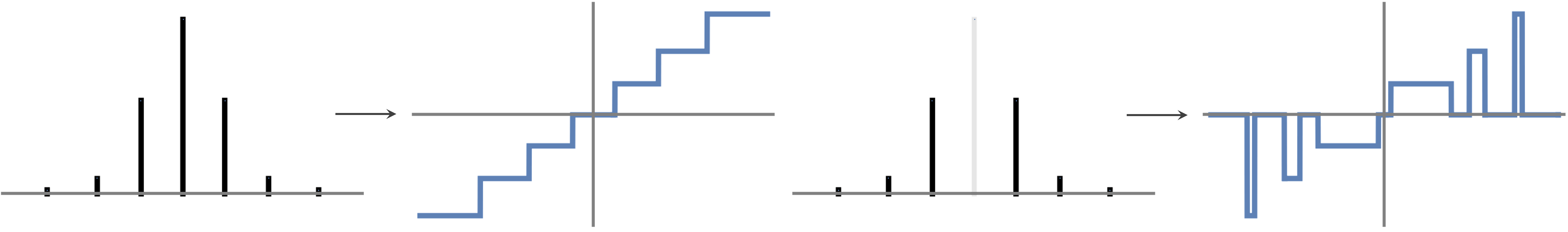}
    \caption{Left: construction from Lemma~\ref{lem:dks} gives rise to step function in Corollary~\ref{cor:step}. Right: removing central spike gives rise to sum of bumps in Lemma~\ref{lem:boxes}}
    \label{fig:piecewise}
\end{figure}

\noindent We now show how to replace the step function in Corollary~\ref{cor:step} with a sum of bump functions for which $\epsilon = 0$. As this new function will be the basis for the perturbation scheme we introduce in the next section, we also provide some quantitative bounds for its parameters:

\begin{lemma}\label{lem:boxes}
    For any odd $m\in\mathbb{N}$ and $0 < \nu < 1$, there exist centers $c_1\le\cdots \le c_{m-1}$, widths $w_1,\ldots,w_{m-1} > 0$, heights $h_1\le\cdots\le h_{m-1}\in\R$, and parameter $\overline{\epsilon} > 0$ for which the following holds. Define the function $f:\R\to\R$ by
    \begin{equation}
        f(z)\triangleq \sum^{m-1}_{i=1} T^{w_i,h_i,\epsilon}_{c_i}(z) \label{eq:infinitebump_def}
    \end{equation}
    for any $0 \le \epsilon < \overline{\epsilon}$ (see Figure~\ref{fig:piecewise}).
    Then $f$ satisfies
    \begin{enumerate}[leftmargin=*]
        \item (Bumps are well-separated) For all $1 \le i < m - 1$, $c_i + m^{-3/2} \le c_{i+1}$. \label{boxes:separated}
        \item (Moments match) $\E[x\sim f(\calN(0,1))]{x^k} = \E[g]{g^k}$ for all $k = 1,\ldots,2m-1$. \label{boxes:moment}
        \item (Symmetricity) $w_i = w_{m-i}$, $h_i = -h_{m-i}$, and $c_i = -c_{m-i}$ for all $1 \le i < m$. \label{boxes:sym}
        \item (Bounded and separated heights) $\Omega(1/\sqrt{m}) \le |h_i| \le O(\sqrt{m})$ for all $1 \le i < m$, and $\brc{h_i}$ are $\Omega(1/\sqrt{m})$-separated. \label{boxes:bounded}
        \item (Intervals not too thin) $\min_i \gamma([c_i - w_i,c_i+w_i]) \ge e^{-cm}$ for an absolute constant $c > 0$. \label{boxes:thin}
        \item (Bounded endpoints) $|c_i| + w_i \le O(\log m)$ for all $1 \le i < m$. \label{boxes:endpoints}
    \end{enumerate}
\end{lemma}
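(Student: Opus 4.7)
The plan is to start from the discontinuous step function of Corollary~\ref{cor:step}, delete its central zero-height step, and repackage the remaining $m-1$ pieces as rectangular bumps packed into $[-O(\log m),O(\log m)]$ with small interstitial gaps that enforce the $m^{-3/2}$ separation of centers. Invoke Lemma~\ref{lem:dks} to produce $\lambda_1^*,\ldots,\lambda_m^*$ and $h_1^*\le\cdots\le h_m^*$ with the six listed properties, notably $h_{(m+1)/2}^*=0$, $\lambda_{(m+1)/2}^*=\Theta(1/\sqrt m)$, and $\min_j\lambda_j^*\ge e^{-cm}$. Take the lemma's heights $h_1,\ldots,h_{m-1}$ to be $(h_j^*)_{j\ne (m+1)/2}$ (skip the zero middle term); the corresponding weights $\tilde\lambda_i$ inherit symmetry $\tilde\lambda_i=\tilde\lambda_{m-i}$ and lower bound $e^{-cm}$, supplying the heights half of property~\ref{boxes:bounded} and readying property~\ref{boxes:thin} once the widths are fixed.

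Next, I would place $(c_i,w_i)$ iteratively on the positive side and reflect. Start at $l_{(m+1)/2}\triangleq\Phi^{-1}(1/2+\delta)$ for a small $\delta=\Theta(1/\sqrt m)$ that reserves a symmetric middle gap around the origin; for $i=(m+1)/2,\ldots,m-1$ pick the unique $w_i>0$ with $\gamma([l_i,l_i+2w_i])=\tilde\lambda_i$, set $c_i\triangleq l_i+w_i$, and let $l_{i+1}\triangleq l_i+2w_i+g_i$ with $g_i\triangleq\max(m^{-3/2}-(w_i+w_{i+1}),0)$. Then take $(c_i,w_i)=(-c_{m-i},w_{m-i})$ on the negative side. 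By construction, $c_{i+1}-c_i=w_i+w_{i+1}+g_i\ge m^{-3/2}$ gives property~\ref{boxes:separated}, reflection gives property~\ref{boxes:sym}, and $\gamma([c_i-w_i,c_i+w_i])=\tilde\lambda_i\ge e^{-cm}$ finishes property~\ref{boxes:thin}. Since $h_{(m+1)/2}^*=0$ the skipped term vanishes for $k\ge 1$, so the $\epsilon=0$ (rectangular) form of $f$ satisfies
\[
\E[f(g)^k]=\sum_{i=1}^{m-1}h_i^k\tilde\lambda_i=\sum_{j=1}^m(h_j^*)^k\lambda_j^*=\E[g^k]\quad\text{for }k=1,\ldots,2m-1,
\]
which is property~\ref{boxes:moment}.

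The main obstacle is verifying the bounded endpoints property~\ref{boxes:endpoints}. Each of the at most $m-3$ interstitial gaps has length $\le m^{-3/2}$ in a region of Gaussian density $\le\|\gamma\|_\infty$, so the total interstitial-gap Gaussian mass is $O(1/\sqrt m)$; together with the reserved middle gap of mass $2\delta$ and the outer tails, this must sum to the missing central mass $\lambda_{(m+1)/2}^*=\Theta(1/\sqrt m)$. One picks $\delta$ so that both $\delta$ and the residual tail mass are strictly positive; this requires the implicit constant in $\lambda_{(m+1)/2}^*$ to dominate the accumulated interstitial-gap mass, which can be secured either by a finer accounting showing that most pairs $(w_i,w_{i+1})$ already exceed $m^{-3/2}$ (so only a few interstitial gaps are actually forced) or by tracking the explicit constant in the DKS-style quadrature underlying Lemma~\ref{lem:dks}. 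Once arranged, the rightmost cumulative quantile is $1-\Omega(1/\sqrt m)$, so by the standard Gaussian tail estimate $c_{m-1}+w_{m-1}=O(\sqrt{\log m})\le O(\log m)$. Finally, choose $\overline{\epsilon}>0$ small enough (say $<m^{-3/2}/4$) that the sloped shoulders of width $\epsilon$ do not break bump disjointness or expand the support beyond $[-O(\log m),O(\log m)]$ for any $\epsilon\in[0,\overline{\epsilon})$; all listed properties other than~\ref{boxes:moment} depend only on $(c_i,w_i,h_i)$ and hold uniformly in $\epsilon$, while property~\ref{boxes:moment} is established above for the degenerate case $\epsilon=0$.
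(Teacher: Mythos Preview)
Your high-level strategy matches the paper's: take the DKS weights and nodes, drop the central zero node, and lay out $m-1$ intervals of the prescribed Gaussian masses with gaps to enforce separation. The difference is in how you size the gaps, and your choice creates two real problems. First, your definition $g_i = \max(m^{-3/2} - (w_i + w_{i+1}), 0)$ is circular ($w_{i+1}$ depends on $l_{i+1}$, which depends on $g_i$) and, more importantly, can vanish: whenever $w_i + w_{i+1} \ge m^{-3/2}$ --- which does happen for the inner, larger-mass intervals where $w_i = \Theta(1/\sqrt{m})$ --- you get $g_i = 0$ and the bump supports merely touch. Then your claim that $\overline{\epsilon} < m^{-3/2}/4$ preserves bump disjointness is false, since any $\epsilon > 0$ makes adjacent shoulders overlap; this matters because the ODE argument in Section~\ref{sec:ode} needs the bumps to remain disjoint as $\epsilon$ grows. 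Second, as you yourself flag, your endpoint bound hinges on an uncontrolled comparison of constants between the accumulated gap mass $O(1/\sqrt{m})$ and the removed central weight $\lambda^*_{(m+1)/2} = \Theta(1/\sqrt{m})$, and neither of your proposed fixes is actually carried out.

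The paper avoids both issues by allotting the gaps in \emph{Gaussian mass} rather than Euclidean length. With $\gamma \triangleq 1 - \sum_i \lambda_i = \Theta(1/\sqrt{m})$, it simply reserves mass exactly $\gamma/m$ for each of the $m$ gaps (the two tails and the $m-2$ inter-interval gaps). Gaussian anti-concentration then converts each mass-$\gamma/m$ gap into a Euclidean gap of length $\Omega(m^{-3/2})$ between interval \emph{endpoints} (stronger than the center separation in Property~\ref{boxes:separated}, and what is actually used later), and Property~\ref{boxes:endpoints} is immediate since the leftmost tail has mass exactly $\gamma/m = \Theta(m^{-3/2})$, giving $|c_1| + w_1 \le O(\sqrt{\log m})$ with no constant-balancing at all.
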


\noindent To prove Lemma~\ref{lem:boxes}, we will need the following modification of the construction in Lemma~\ref{lem:dks}.

\begin{lemma}\label{lem:dks2}
    For any odd $m\in\mathbb{N}$, there exist weights $\lambda_1,\ldots,\lambda_{m-1} \ge 0$ and points $h_1,\ldots,h_{m-1}\in\R$ for which
    \begin{enumerate}[leftmargin=*]
        \item (Sum of weights bounded away from 1) $\sum^{m-1}_{i=1} \lambda_i = 1 - \Theta(1/\sqrt{m})$.
        \item (Moments match) $|\sum^{m-1}_{i=1} \lambda_i h^k_i - \E[g]{g^k}| < \nu$ for all $k = 1,\ldots,2m-1$.
        \item (Points symmetric about origin) $h_1 \le \cdots \le h_{m-1}$ and $h_i = -h_{m-i}$ for all $1 \le i < m$.
        \item (Weights symmetric) $\lambda_1 \le \cdots \le \lambda_{(m-1)/2}$ and $\lambda_i = \lambda_{m-i}$.
        \item (Points bounded and separated) $\Omega(1/\sqrt{m}) \le |h_i| \le O(\sqrt{m})$ for all $1 \le i < m$, and $\brc{h_i}$ are $\Omega(1/\sqrt{m})$-separated.
        \item (Weights not too small) $\min_i \lambda_i \ge e^{-cm}$ for an absolute constant $c > 0$. \label{dks2:nottoosmall}
    \end{enumerate}
\end{lemma}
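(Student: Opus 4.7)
\textbf{Proof proposal for Lemma~\ref{lem:dks2}.}
The plan is to obtain the desired $(m-1)$-tuple by invoking Lemma~\ref{lem:dks} for the same odd $m$ and then simply deleting the central node. Let $\lambda'_1,\ldots,\lambda'_m$ and $h'_1,\ldots,h'_m$ be the weights and points guaranteed by Lemma~\ref{lem:dks}. Since $m$ is odd, Part~\ref{dks:middle} of that lemma says that $h'_{(m+1)/2} = 0$ and $\lambda'_{(m+1)/2} = \Theta(1/\sqrt{m})$. Define the new weights/points by removing this central entry: set $\lambda_i \triangleq \lambda'_i$, $h_i \triangleq h'_i$ for $i \le (m-1)/2$, and $\lambda_i \triangleq \lambda'_{i+1}$, $h_i \triangleq h'_{i+1}$ for $(m+1)/2 \le i \le m-1$.

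I would then verify each conclusion in turn. For the sum of weights, I use $\sum_i \lambda'_i = 1$ together with $\lambda'_{(m+1)/2} = \Theta(1/\sqrt{m})$ to conclude $\sum_{i=1}^{m-1} \lambda_i = 1 - \Theta(1/\sqrt{m})$. For the moment condition, I exploit the fact that $h'_{(m+1)/2} = 0$: for every $k \ge 1$, the removed term $\lambda'_{(m+1)/2} (h'_{(m+1)/2})^k$ vanishes, so $\sum_{i=1}^{m-1} \lambda_i h_i^k = \sum_{i=1}^m \lambda'_i (h'_i)^k = \E[g]{g^k}$ exactly, which is stronger than the claimed $\nu$-approximation. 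The symmetry properties for both $\{h_i\}$ and $\{\lambda_i\}$ follow from the identities $h'_i = -h'_{m-i+1}$ and $\lambda'_i = \lambda'_{m-i+1}$: for $i \le (m-1)/2$ one has $h_{m-i} = h'_{m-i+1} = -h'_i = -h_i$, and analogously for $\lambda$. The weight-sorting $\lambda_1 \le \cdots \le \lambda_{(m-1)/2}$ is inherited verbatim from Part~\ref{dks:sortedlam}, and similarly for the $h_i$.

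For the bounded/separated points, I note that Part~\ref{dks:bounded} of Lemma~\ref{lem:dks} gives $|h'_i| \le O(\sqrt{m})$ for all $i$, which transfers directly; the lower bound $|h_i| \ge \Omega(1/\sqrt{m})$ now holds for every remaining index precisely because we deleted the sole offending entry $h'_{(m+1)/2} = 0$ (the lower bound in Part~\ref{dks:bounded} should be read as holding for the non-central indices, which is consistent with Part~\ref{dks:middle}). Separation of the $\{h_i\}$ is also inherited, with the caveat that excising the middle node fuses the gap at the origin into a single gap of size at least $2 \cdot \Omega(1/\sqrt{m})$, so the $\Omega(1/\sqrt{m})$-separation is preserved. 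The minimum-weight bound $\min_i \lambda_i \ge e^{-cm}$ transfers directly from Part~\ref{dks:mix} since the surviving weights are a subset of the original ones.

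There is essentially no obstacle in this proof: the work is entirely bookkeeping, and the key observation that makes it go through is that the point we discard is exactly $0$, so deletion preserves all positive-index moments identically rather than only approximately. I would present the argument as a single short paragraph per bullet.
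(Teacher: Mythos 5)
Your proof is correct and follows exactly the same route as the paper: invoke Lemma~\ref{lem:dks} for odd $m$, delete the central node $h'_{(m+1)/2}=0$ with weight $\Theta(1/\sqrt{m})$, and check that each conclusion transfers. You even correctly observe that the moment condition holds exactly (not just up to $\nu$) since the removed point is zero.
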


\begin{proof}
    Because $m$ is odd, we can take the weights and points to be given by Lemma~\ref{lem:dks} and remove the $(m+1)/2$-th weight and point\--- recall that the $(m+1)/2$-th point is 0 and thus does not contribute to $\sum_i \lambda_i h^k_i$. The fact that $\sum^{m-1}_{i=1} \lambda_i = 1 - \Theta(1/\sqrt{m})$ then follows from the fact that the $(m+1)/2$-th weight from Lemma~\ref{lem:dks} is of order $\Theta(1/\sqrt{m})$ by Part~\ref{dks:middle} of Lemma~\ref{lem:dks}. The remaining parts of the lemma follow by the corresponding parts of Lemma~\ref{lem:dks}.
\end{proof}

\begin{proof}[Proof of Lemma~\ref{lem:boxes}]
    Let $\lambda_1,\ldots,\lambda_{m-1},h_1,\ldots,h_{m-1}$ be as in Lemma~\ref{lem:dks2}. As $\sum_i\lambda_i = 1 - \Theta(1/\sqrt{m})$, we claim there exist intervals $I_1,\ldots,I_{m-1}$ such that for any $i < j$, all points in $I_i$ are strictly smaller than all points in $I_j$, such that $\gamma(I_i) = \lambda_i$ for all $i$, and such that the right endpoint of any $I_i$ is at least $m^{-3/2}$ smaller than the left endpoint of $I_{i+1}$. 
    
    We can construct these intervals in an inductive fashion. First, let $\gamma \triangleq 1 - \sum_i \lambda_i = \Theta(1/\sqrt{m})$. Let $I_1 = [a_1,b_1]$ for $a_1 < b_1 < 0$ such that $\gamma(I_1) = \lambda_1$ and $\gamma((-\infty,a_1]) = \gamma/m$. Given $I_1,\ldots,I_i$ for $1\le i < (m-1)/2$, if $I_i = [a_i,b_i]$ is the right endpoint of $I_i$, then let $a_{i+1} > b_i$ be such that $\gamma([b_i,a_{i+1}) =  \gamma/m$, and define $I_{i+1} = [a_{i+1},b_{i+1}]$ for $b_{i+1} > a_{i+1}$ satisfying $\gamma([a_{i+1},b_{i+1}]) = \lambda_{i+1}$. By construction,
    \begin{equation}
        \gamma((-\infty,b_{(m-1)/2}]) = \sum^{(m-1)/2}_{i=1} \gamma(I_i) + \frac{m-1}{2}\cdot \frac{\gamma}{m} = \frac{1-\gamma}{2} + \frac{m-1}{2m}\cdot \frac{\gamma}{2} = \frac{1}{2} - \frac{\gamma}{2m},
    \end{equation}
    so by Gaussian anticoncentration and the fact that $\gamma = \Theta(1/\sqrt{m})$, we conclude that $b_{(m-1)/2} \le -\Omega(m^{-3/2})$. In the same way, we also conclude that because $\gamma([b_i,a_{i+1}]) = \frac{\gamma}{m}$, we must have $b_i - a_{i+1} \le - \Omega(m^{-3/2})$. Finally, for $i = (m-1)/2 + 1,\ldots,m-1$, we can define $I_i$ to be the reflection of $I_{m-i}$ about the origin. Note that by our bounds on $b_i - a_{i+1}$ for $1 \le i \le (m-1)/2$ and on $b_{(m-1)/2}$, all of the intervals are $\Omega(m^{-3/2})$-separated from each other as claimed. And by design, $\gamma(I_i) = \lambda_i$ for all $1\le i \le m - 1$.
    
    While the lemma is stated in terms of $\epsilon > 0$, let us first consider the following construction where $\epsilon = 0$. We can take the centers $c_1,\ldots,c_{m-1}$ in the lemma to be the centers of $I_1,\ldots,I_{m-1}$, and $w_1,\ldots,w_{m-1}$ to be half of the widths of $I_1,\ldots,I_{m-1}$, in which case $f \triangleq \sum^{m-1}_{i=1} T^{w_i,h_i,0}_{c_i}$ immediately satisfies Parts~\ref{boxes:separated} and \ref{boxes:sym} of the lemma. Then the pushforward of $\calN(0,1)$ under this choice of $f$ is the distribution which with probability $\gamma$ equals zero (when $z\sim\calN(0,1)$ lies outside of $I_1,\ldots,I_{m-1}$) and otherwise takes the value $h_i$ with probability $\lambda_i$. Parts~\ref{boxes:moment}, \ref{boxes:bounded}, and \ref{boxes:thin} then follow from Lemma~\ref{lem:dks2}. Finally, note that $a_1$ defined above is at most $O(\log m)$ in magnitude because $\gamma((-\infty,a_1]) = \gamma/m$ by Part~\ref{dks2:nottoosmall} of Lemma~\ref{lem:dks2}. This establishes Part~\ref{boxes:endpoints} of the lemma.
    
    Finally, note that by taking $\epsilon$ infinitesimally small (relative to $\nu$) but positive, the function $f$ defined in \eqref{eq:infinitebump_def} satisfies all of the parts of the lemma.
\end{proof}

% Similar to Corollary~\ref{cor:step}, by infinitesimally perturbing the piecewise linear function $f$ from Lemma~\ref{lem:boxes}, we can ensure that the linear pieces have finite but arbitrarily large slopes so that $f$ can be realized by a ReLU network, and the pushforward of $\calN(0,1)$ under this function matches the low-degree moments of $\calN(0,1)$ to arbitrary precision. 
\noindent Unfortunately the slopes in the piecewise linear function constructed in Lemma~\ref{lem:boxes} are arbitrarily large if $\nu$ is arbitrarily small. The issue still remains of how to get a continuous piecewise-linear function whose slopes are \emph{polynomially bounded} so that the corresponding ReLU network has polynomially bounded weights. As we illustrate in the next subsection however, the $\Omega(m^{-3/2})$ spacing between the ``bumps'' in the definition of $f$ in Lemma~\ref{lem:boxes} gives us sufficient room to carefully perturb $f$ to achieve this goal.

\subsubsection{Estimates for Bump Moments}

For convenience, define
\begin{equation}
    M^{w,h,\epsilon}_{c,k} \triangleq \E[g]*{T^{w,h,\epsilon}_c(g)^k}.
\end{equation}
We conclude this subsection by collecting some useful bounds for this quantity. First, we give an explicit expression for these moments:
\begin{lemma}\label{lem:moment_bump}
    For $c,w,h,\epsilon > 0$ satisfying $c - \epsilon - w \ge 0$, we have
    \begin{multline}
        M^{w,h,\epsilon}_{c,k} = h^k\gamma([c-w,c+w]) + \\ (k-1)!!\left(\frac{h}{\epsilon}\right)^k\sum^k_{i=0 \ \text{even}} \binom{k}{i} \Bigg[ (-c+\epsilon+w)^{k-i}\gamma([c-\epsilon-w,c-w])
        + (c+\epsilon+w)^{k-i}\gamma([c+w,c+\epsilon+w]) \Bigg] \\
        - \left(\frac{h}{\epsilon}\right)^k\sum^k_{i=0} \binom{k}{i} \Bigg[  (-c+\epsilon+w)^{k-i}\left(p_i(c-w)\gamma(c-w) - p_i(c - \epsilon-w)\gamma(c-\epsilon-w)\right) \\
        + (-1)^i (c+\epsilon+w)^{k-i}\left(p_i(c+\epsilon+w)\gamma(c+\epsilon+w) - p_i(c+w)\gamma(c+w)\right) \Bigg]
    \end{multline} for all even $k$.
\end{lemma}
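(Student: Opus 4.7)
The approach is to decompose $M^{w,h,\epsilon}_{c,k}$ according to the three nonzero linear pieces of $T^{w,h,\epsilon}_c$ and then apply Corollary~\ref{cor:shift_moment} to each sloped piece. By definition, $T^{w,h,\epsilon}_c(z) = h$ on $[c-w, c+w]$, $T^{w,h,\epsilon}_c(z) = \frac{h}{\epsilon}(z + D_-)$ on $[c-\epsilon-w, c-w]$ with $D_- \triangleq -c+\epsilon+w$, and $T^{w,h,\epsilon}_c(z) = \frac{h}{\epsilon}(-z + D_+)$ on $[c+w, c+\epsilon+w]$ with $D_+ \triangleq c+\epsilon+w$. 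Splitting the expectation over these three intervals gives
\begin{align*}
M^{w,h,\epsilon}_{c,k} = h^k\gamma([c-w,c+w]) &+ \left(\tfrac{h}{\epsilon}\right)^k \E[g]{(g+D_-)^k\bone{c-\epsilon-w \le g \le c-w}} \\
&+ \left(\tfrac{h}{\epsilon}\right)^k \E[g]{(-g+D_+)^k\bone{c+w \le g \le c+\epsilon+w}}.
\end{align*}
(The hypothesis $c-\epsilon-w \ge 0$ is not actually used in the algebra; it seems to be kept for later convenience.)

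Next I would apply Corollary~\ref{cor:shift_moment} to each sloped contribution. For the left piece, invoking the corollary with leading coefficient $1$ and shift $D_-$ on the interval $[c-\epsilon-w, c-w]$ produces exactly the terms involving $(-c+\epsilon+w)^{k-i}$ in the statement: an even-$i$ sum weighted by $(k-1)!!\gamma([c-\epsilon-w,c-w])$ and an all-$i$ boundary difference $p_i(c-w)\gamma(c-w) - p_i(c-\epsilon-w)\gamma(c-\epsilon-w)$. For the right piece I use the corollary with leading coefficient $-1$ and shift $D_+$ on $[c+w, c+\epsilon+w]$; this inserts a factor $(-1)^i$ from the $c^i$ in the corollary. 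Because $k$ is even and the first sum is restricted to even $i$, this sign disappears there, which is why the $(k-1)!!$ contributions from the two sloped pieces combine \emph{without} a sign flip. In the all-$i$ sum the $(-1)^i$ persists, matching the $(-1)^i$ shown in the lemma.

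Collecting everything and pulling the common factor $(h/\epsilon)^k$ out of the two sloped contributions yields the displayed identity. The computation is a direct unraveling of Corollary~\ref{cor:shift_moment}, so I do not expect any substantive obstacle; the only care required is in the sign bookkeeping on the right piece and in verifying that the even parity of $k$ together with the even-$i$ restriction eliminates $(-1)^i$ in the $(k-1)!!$ terms while leaving it intact in the boundary sum.
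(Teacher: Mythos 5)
Your proposal is correct and matches the paper's proof essentially verbatim: both split $M^{w,h,\epsilon}_{c,k}$ over the three linear pieces of the bump, apply Corollary~\ref{cor:shift_moment} to each of the two sloped pieces (taking leading coefficient $\pm h/\epsilon$ and the appropriate shift), and observe that the $(-1)^i$ sign on the right piece disappears in the even-$i$ sum but persists in the boundary sum. Your factoring of $(h/\epsilon)^k$ before rather than after invoking the corollary is a cosmetic difference only, and your observation that the hypothesis $c-\epsilon-w\ge 0$ is not used in the algebra is accurate.
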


\noindent We defer the proof of this to Appendix~\ref{defer:moment_bump}. The particular form for this expression is not too important; we will only use it to make clear that $M^{w,h,\epsilon}_{c,k}$ is continuously differentiable with respect to $\epsilon$ when $\epsilon > 0$, and to obtain the following expression for the derivative of the moments with respect to $h$:

\begin{lemma}\label{cor:hderiv}
    \begin{equation}
        \frac{\partial M^{w,h,\epsilon}_{x,k}}{\partial h} = \frac{k}{h} M^{w,h,\epsilon}_{x,k}
    \end{equation}
\end{lemma}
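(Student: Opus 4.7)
The plan is to observe that $T^{w,h,\epsilon}_c$ is homogeneous of degree one in $h$ and then differentiate. Inspecting the piecewise definition of $T^{w,h,\epsilon}$, each nonzero branch is proportional to $h$: on the flat piece $[-w,w]$ the value is simply $h$, and on the two slanted pieces the value is $\pm (h/\epsilon)$ times a function of $z$ that does not involve $h$. Consequently, shifting the argument by $c$ preserves this structure, and we have the pointwise identity
\begin{equation}
    T^{w,h,\epsilon}_c(z) \;=\; h \cdot T^{w,1,\epsilon}_c(z) \qquad \forall\, z\in\mathbb{R}.
\end{equation}

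The rest is immediate. Raising to the $k$-th power and taking expectation under $g\sim\mathcal{N}(0,1)$ gives
\begin{equation}
    M^{w,h,\epsilon}_{c,k} \;=\; \mathbb{E}_g\bigl[(T^{w,h,\epsilon}_c(g))^k\bigr] \;=\; h^k \cdot \mathbb{E}_g\bigl[(T^{w,1,\epsilon}_c(g))^k\bigr] \;=\; h^k \cdot M^{w,1,\epsilon}_{c,k},
\end{equation}
and the factor $M^{w,1,\epsilon}_{c,k}$ does not depend on $h$. Differentiating in $h$ yields
\begin{equation}
    \frac{\partial M^{w,h,\epsilon}_{c,k}}{\partial h} \;=\; k h^{k-1} M^{w,1,\epsilon}_{c,k} \;=\; \frac{k}{h}\cdot h^k M^{w,1,\epsilon}_{c,k} \;=\; \frac{k}{h}\, M^{w,h,\epsilon}_{c,k},
\end{equation}
which is the claim (for any $h\neq 0$).

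As a sanity check one can read the same identity off the explicit formula in Lemma~\ref{lem:moment_bump}: the first term carries an explicit $h^k$, and the remaining sums carry $(h/\epsilon)^k$, so the whole right-hand side factors as $h^k$ times an expression in $(c,w,\epsilon,k)$ only. There is no real obstacle here; the only thing to be careful about is not to restrict to even $k$ (as Lemma~\ref{lem:moment_bump} does for its explicit formula) since the homogeneity argument above gives the derivative identity for every nonnegative integer $k$.
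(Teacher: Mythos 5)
Your proof is correct. Your route is slightly different from the paper's: the paper simply reads the identity off the explicit formula for $M^{w,h,\epsilon}_{c,k}$ from Lemma~\ref{lem:moment_bump} (noting that $h$ appears only as the overall factors $h^k$ and $(h/\epsilon)^k$), whereas you prove it directly from the degree-one homogeneity $T^{w,h,\epsilon}_c = h\,T^{w,1,\epsilon}_c$, without invoking the closed-form expression at all. The homogeneity argument is cleaner and, as you note, manifestly valid for every nonnegative integer $k$ (and every $h\neq 0$), not just for the even $k$ for which Lemma~\ref{lem:moment_bump} states the formula; the paper's appeal to the explicit expression works but is tied to that formula. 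You also correctly flag the explicit-formula derivation as a sanity check, so your proposal subsumes the paper's argument. One tiny point: the statement as written uses $x$ as the subscript where the rest of the text uses $c$; this is a typo in the paper, and your use of $c$ is the intended reading.
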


\begin{proof}
    This is immediate from Lemma~\ref{lem:moment_bump}.
\end{proof}

\noindent We will also need the following bound showing that $M^{w,h,\epsilon}_{c,k}$ does not change too much in a neighborhood of $(h,\epsilon)$:

\begin{lemma}\label{lem:Mstable}
    For any $w\ge 0$, $c,h,h'\in\R$, $\epsilon'\ge \epsilon\ge 0$, and even $k\in\mathbb{N}$,
    \begin{equation}
        |M^{w,h',\epsilon'}_{c,k} - M^{w,h,\epsilon}_{c,k}| \le h^k\left(|(h'/h)^k - 1| + \epsilon'-\epsilon\right).
    \end{equation}
    In particular, this implies that $\left|\frac{\partial M^{w,h,\epsilon}_{c,k}}{\partial\epsilon}\right| \le h^k$.
\end{lemma}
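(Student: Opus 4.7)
The plan is to decompose via the triangle inequality
\[
|M^{w,h',\epsilon'}_{c,k} - M^{w,h,\epsilon}_{c,k}| \le |M^{w,h',\epsilon'}_{c,k} - M^{w,h,\epsilon'}_{c,k}| + |M^{w,h,\epsilon'}_{c,k} - M^{w,h,\epsilon}_{c,k}|
\]
and absorb the two summands into the two terms on the right-hand side of the claim. The first is handled by scaling: since $T^{w,h,\epsilon'}_c$ is linear in its height parameter, $T^{w,h',\epsilon'}_c(z) = (h'/h)\, T^{w,h,\epsilon'}_c(z)$ pointwise (the degenerate case $h=0$ is handled separately by continuity), and so $M^{w,h',\epsilon'}_{c,k} = (h'/h)^k\, M^{w,h,\epsilon'}_{c,k}$. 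Since $|T^{w,h,\epsilon'}_c|\le |h|$ pointwise and $k$ is even, $M^{w,h,\epsilon'}_{c,k}\in[0,h^k]$, yielding the first bound $|(h'/h)^k - 1|\,h^k$.

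For the second summand I would work directly with the ramp geometry. By the reflection symmetry $T^{w,-h,\epsilon}_c = -T^{w,h,\epsilon}_c$ and the evenness of $k$, assume without loss of generality that $h>0$. The two bumps agree on the plateau $[c-w,c+w]$ and vanish outside $[c-w-\epsilon',c+w+\epsilon']$, so the moment difference is supported on the two outer ramps; by left-right symmetry it suffices to analyze one, say $[c+w,c+w+\epsilon']$. Substituting $u=(c+w+\epsilon'-z)/\epsilon'$ converts this interval to $u\in[0,1]$ with $T^{w,h,\epsilon'}_c(z)=hu$ and $T^{w,h,\epsilon}_c(z)=h\tilde u$, where case analysis on whether $z$ lies in the shorter ramp (equivalently $u\ge 1-\epsilon/\epsilon'$) gives $\tilde u = \max(0,\,1-(\epsilon'/\epsilon)(1-u))$; in particular $0\le\tilde u\le u\le 1$. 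Bounding $\gamma\le 1/\sqrt{2\pi}$ reduces the ramp's contribution to at most $(h^k\epsilon'/\sqrt{2\pi})\int_0^1(u^k-\tilde u^k)\,du$.

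The main technical step is the elementary identity
\[
\int_0^1 \bigl(u^k-\tilde u^k\bigr)\,du \;=\; \frac{\epsilon'-\epsilon}{\epsilon'(k+1)},
\]
which I would verify by splitting at $u=1-\epsilon/\epsilon'$ (below which $\tilde u=0$), evaluating each piece via the antiderivative of $(1-av)^k$, and observing the cancellation of the resulting $(1-\epsilon/\epsilon')^{k+1}$ terms. Plugging in gives a per-ramp bound of $h^k(\epsilon'-\epsilon)/((k+1)\sqrt{2\pi})$; doubling for the other outer ramp and using $2/((k+1)\sqrt{2\pi})\le 1$ for even $k\ge 2$ produces $|M^{w,h,\epsilon'}_{c,k}-M^{w,h,\epsilon}_{c,k}|\le h^k(\epsilon'-\epsilon)$. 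The derivative consequence then follows: $M^{w,h,\epsilon}_{c,k}$ is $C^1$ in $\epsilon$ on $(0,\infty)$ by the explicit formula of Lemma~\ref{lem:moment_bump}, so the just-proved Lipschitz estimate forces $|\partial M/\partial\epsilon|\le h^k$.

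I expect the main obstacle to be precisely the integral identity above: naive mean-value bounds $|u^k-\tilde u^k|\le k u^{k-1}|u-\tilde u|$ pick up a spurious factor of $k$, and the telescoping between the ``extra'' segment (where $\tilde u=0$) and the ``common'' segment (where $\tilde u$ is a rescaling of $u$) is precisely what avoids this loss.
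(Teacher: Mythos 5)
Your proof is correct, and it follows the same high-level plan as the paper's: split via the triangle inequality into a height-change term and an $\epsilon$-change term, handle the former by the exact scaling $M^{w,h',\epsilon'}_{c,k}=(h'/h)^k M^{w,h,\epsilon'}_{c,k}$ together with $0\le M^{w,h,\epsilon'}_{c,k}\le h^k$, and handle the latter separately. The real content is in the $\epsilon$-change term, and here your treatment is noticeably more careful than the paper's. The paper's one-line chain passes through the intermediate claim
$$|M^{w,h,\epsilon'}_{c,k} - M^{w,h,\epsilon}_{c,k}| \le h^k\,\gamma\bigl([c-\epsilon'-w,c-\epsilon-w]\cup[c+\epsilon+w,c+\epsilon'+w]\bigr),$$
i.e.\ that the change in the moment is controlled by the Gaussian mass of the \emph{extra} intervals only, and presents this as following from the observation $0\le M^{w,h,\epsilon}_{c,k}\le h^k\gamma([c-\epsilon-w,c+\epsilon+w])$. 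That observation alone does not give the inequality: the two bumps also differ on the overlapping ramp region $[c-\epsilon-w,c-w]\cup[c+w,c+\epsilon+w]$, and the contribution from there has to be accounted for (a naive pointwise bound over the full ramps $[c-\epsilon'-w,c-w]\cup[c+w,c+\epsilon'+w]$ yields only $h^k\epsilon'$, which is not $O(\epsilon'-\epsilon)$ and would be useless for the derivative corollary). Your change-of-variables computation, reducing the per-ramp contribution to the exact integral $\int_0^1(u^k-\tilde u^k)\,du=\tfrac{\epsilon'-\epsilon}{\epsilon'(k+1)}$, is precisely the step that makes the telescoping between the ``extra'' and ``common'' parts of the ramp transparent and produces the needed $(\epsilon'-\epsilon)$ dependence (and even an extra factor $\tfrac{2}{(k+1)\sqrt{2\pi}}$). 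So your argument is not just a rewording: it supplies the justification that the paper's terse chain glosses over. Two small wording nits: the ``left-right symmetry'' you invoke to restrict to one ramp is not a symmetry of $\gamma$ about $c$; what actually lets you double the single-ramp bound is the uniform density estimate $\gamma\le 1/\sqrt{2\pi}$, which you do use. And the inequality $2/((k+1)\sqrt{2\pi})\le 1$ already holds at $k=0$, so the restriction to $k\ge 2$ is unnecessary (though harmless).
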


\begin{proof}
    Note that $0 \le M^{w,h,\epsilon}_{c,k} \le h^k\gamma([c-\epsilon-w,c+\epsilon+w])$, so the first part of the lemma follows by
    \begin{align}
        \MoveEqLeft |M^{w,h',\epsilon'}_{c,k} - M^{w,h,\epsilon}_{c,k}|  \\
        &\le |{h'}^k - h^k| \gamma([c-\epsilon'-w,c+\epsilon'+w]) + h^k\gamma([c-\epsilon'-w,c-\epsilon-w]\cup[c+\epsilon+w,c+\epsilon'+w]) \\
        &\le |{h'}^k - h^k| + h^k(\epsilon'-\epsilon) = h^k\left(|(h'/h)^k - 1| + \epsilon'-\epsilon\right),
    \end{align}
    where in the last step we used that $\gamma([a,a+\eta]) \le \eta/2$ for any $a\in\R$, $\eta \ge 0$. The second part of the lemma then follows by taking $h = h'$ and $\epsilon'\to\epsilon$.
\end{proof}

% SUBSECTION: bump construction
% define bump construction and prove it can be realized as one-hidden-layer with bounded weights - DONE
% modified DKS instance (Corollary 2.5) - DONE
% show that with infinite slopes, can realize DKS instance with bumps (Lemma 2.6) - DONE

\subsection{ODE-Driven Perturbation}
\label{sec:ode}

Denote the parameters of the function constructed in Lemma~\ref{lem:boxes} by $\brc{(h_i(0),w_i,c_i)}_{1\le i < m}$. We will also define $\epsilon(0)$ to be some arbitrarily small positive quantity satisfying $\epsilon(0) \le \overline{\epsilon}$ for the parameter $\overline{\epsilon}$ from Lemma~\ref{lem:boxes}.

We will design an ordinary differential equation whose solution specifies a one-parameter family of functions
\begin{equation}
    f_t \triangleq \sum^{m-1}_{i=1} T^{w_i,h_i(t),\epsilon(t)}_{c_i} \label{eq:ftdef}
\end{equation}
that arise from gradually perturbing the function from Lemma~\ref{lem:boxes}. Roughly speaking, starting at $h_i(0)$ and $\epsilon(0)$ for all $1\le i < m$, perturbing the function along this one-parameter family will correspond to keeping the widths $w_i$ and centers $c_i$ of the bumps fixed, increasing the $\epsilon$ parameter of every bump at unit speed, and evolving the heights $h_i(t)$ in such a way that the moments of the pushforward of $\calN(0,1)$ under $f_t$ remain constant in $t$ for all $0 \le t \le T$. We illustrate this evolution in Figure~\ref{fig:evolve}. Here $T$ is some horizon which is at least inverse-polynomially large but smaller than $m^{-3/2}$ so that the ``edges'' $c_i \pm (\epsilon(t) + w_i)$ of the bumps don't collide with each other (this is where we make crucial use of Part~\ref{boxes:separated} of Lemma~\ref{lem:boxes}). At the end of this horizon, we want to show that the heights will not have changed too much, whereas the bumps now have $\epsilon$ parameter given by inverse-polynomially large $T$. This will imply that $f_T$ has polynomially bounded slopes as desired.

\begin{figure}[h]
    \centering
    \includegraphics[width=0.9\textwidth]{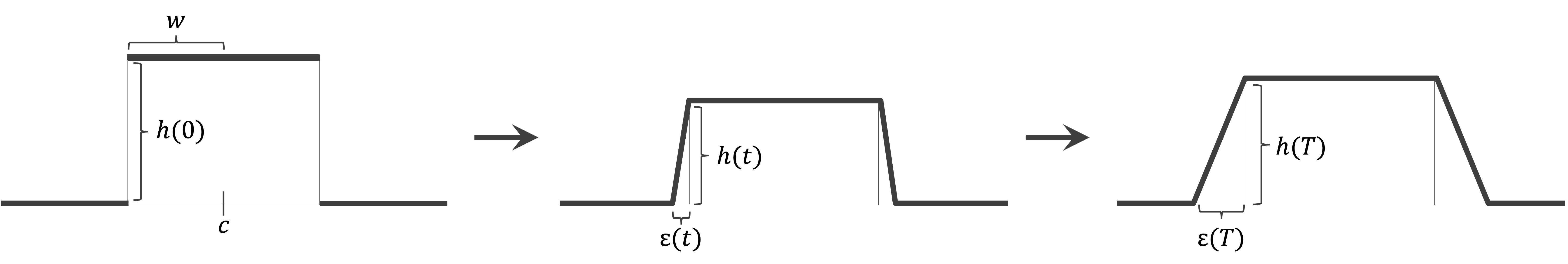}
    \caption{Evolution of one of the $m - 1$ bumps constituting $f_t$}
    \label{fig:evolve}
\end{figure}

As the odd moments of $f_0(\calN(0,1))$ vanish and the parameters $\brc{(w_i,h_i(0),c_i)}_{1\le i < m}$ satisfy the symmetry properties from Part~\ref{boxes:sym} of Lemma~\ref{lem:boxes}, it is easy to ensure that the odd moments of $f_t(\calN(0,1))$ also vanish: simply take $h_i(t) = -h_{m-i}(t)$ for all $1\le i \le m-1$. 

We thus focus on evolving $(h_1(t),\ldots,h_{(m-1)/2}(t))$. For convenience denote this by $\vec{h}(t)$. Define moment vector $\mu:\R^{(m-1)/2}\times \R\to\R^{m-1}$ by 
\begin{equation}
    \mu(\vec{h},\epsilon) \triangleq \brc*{\sum^{(m-1)/2}_{i=1} M^{w_i,h_i,\epsilon}_{c_i,2\ell}}_{1\le \ell \le (m-1)/2}.
\end{equation}
For any fixed $(\vec{h},\epsilon)$ in a small neighborhood of $(\vec{h}(0),\epsilon(0))$, we want to show there is a direction $v\in\R^{(m-1)/2}$ such that the directional derivative of $\mu$ in the direction $w\triangleq (v_1,\ldots,v_{(m-1)/2},1)$ is zero. Note that the constraint that 
\begin{equation}
    \nabla_w \mu(\vec{h},\epsilon) = \vec{0}
\end{equation}
specifies a system of linear constraints of the form
\begin{equation}
    \sum^{(m-1)/2}_{i=1} v_i \cdot \frac{\partial M^{w_i,h_i,\epsilon}_{c_i,2k}}{\partial h_i} = -\sum^{(m-1)/2}_{i=1} \frac{\partial M^{w_i,h_i,\epsilon}_{c_i,2k}}{\partial \epsilon} \ \ \forall \ 1\le \ell \le (m-1)/2. \label{eq:system}
\end{equation}
Recalling Lemma~\ref{cor:hderiv}, we can rewrite this as
\begin{equation}
    \sum^{(m-1)/2}_{i=1} v_i \cdot \frac{2k}{h_i} M^{w_i,h_i,\epsilon}_{c_i,2k} = -\sum^{(m-1)/2}_{i=1} \frac{\partial M^{w_i,h_i,\epsilon}_{c_i,2k}}{\partial \epsilon} \ \ \forall \ 1\le \ell \le (m-1)/2. \label{eq:system2}
\end{equation}
To express this more compactly, define $\vec{b}(\epsilon)\in\R^{(m-1)/2}$ and $Z(\vec{h},\epsilon)$ by
\begin{equation}
    \vec{b}(\epsilon)_{\ell} \triangleq -\sum^{(m-1)/2}_{i=1} \frac{\partial M^{w_i,h_i,\epsilon}_{c_i,2\ell}}{\partial \epsilon} \qquad \text{and} \qquad Z(\vec{h},\epsilon)_{i,\ell} \triangleq M^{w_i,h_i,\epsilon_i}_{c_i,2\ell}. \label{eq:bZdef}
\end{equation}
Also define the matrices $A(\vec{h}) \triangleq \diag(1/h_1,\ldots,1/h_{(m-1)/2})$ and $B \triangleq \diag(2,4,\ldots,m-1)$.
Then \eqref{eq:system} is equivalent to
\begin{equation}
    v^{\top} \cdot A(\vec{h})Z(\vec{h},\epsilon)B = \vec{b}(\epsilon)^{\top}.
\end{equation}
Provided $A(\vec{h})Z(\vec{h},\epsilon)B$ is invertible, the natural choice for $v$ would thus be the vector given by $v = B^{-1}Z(\vec{h},\epsilon)^{-\top}A(\vec{h})^{-1} \cdot \vec{b}(\epsilon)$. Therefore, defining
\begin{equation}
    w(t,\vec{h}) \triangleq \left(B^{-1}Z(\vec{h},\epsilon(0) + t)^{-\top} A(\vec{h})^{-1} \cdot \vec{b}(\epsilon(0) + t),1\right), \label{eq:wdef}
\end{equation}
we consider the following initial value problem
\begin{equation}
    \vec{h}'(t) = w(t,\vec{h}(t)) \qquad \text{and} \qquad \vec{h}(0) = (h_1(0),\ldots,h_{(m-1)/2}(0)). \label{eq:IVP}
\end{equation}
Note that if we had a solution $\vec{h}(t)$ to \eqref{eq:IVP} for $t\in[0,T]$ for some horizon $T$, then we would have
\begin{equation}
    \frac{\partial}{\partial t}\mu(\vec{h}(t),t) = \left(\frac{\partial}{\partial (\vec{h}(t),t)} \mu(\vec{h}(t),t)\right)\cdot \vec{h}'(t) = \nabla_{w(t)}\mu(\vec{h}(t),t) = \vec{0}, \label{eq:mainode}
\end{equation}
implying that the low-degree moments of $f_t$ defined in \eqref{eq:ftdef} are constant in $t$ as desired.

% SUBSECTION: 
% overview ODE approach

\subsection{Existence and Boundedness of \texorpdfstring{$\vec{h}(t)$}{h(t)}}

To carry out the strategy outlined in Section~\ref{sec:ode}, we must establish that
\begin{enumerate}[leftmargin=*]
    \item A solution to the initial value problem \eqref{eq:IVP} exists over a non-negligible horizon $T$.
    \item The entries of $\vec{h}(t)$ do not explode in $t$.
\end{enumerate}

For both of these, we need to show that the matrix $Z(\vec{h},\epsilon)$ is invertible or, more specifically, well-conditioned for $(\vec{h},\epsilon)$ in a neighborhood of $(\vec{h}(0),\epsilon(0))$. We first establish this at time $t = 0$ by relating $Z(\vec{h}(0),\epsilon(0))$ to a certain Vandermonde matrix and appealing to Fact~\ref{fact:vandermonde}:

\begin{lemma}\label{lem:timezerocond}
    $\sigma_{\min}(Z(\vec{h}(0),\epsilon(0))) \ge m^{-Cm}$ for an absolute constant $C > 0$.
\end{lemma}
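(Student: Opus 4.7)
The plan is to exhibit $Z(\vec{h}(0),\epsilon(0))$ as a small perturbation of a matrix that factors as a Vandermonde matrix in the nodes $h_i(0)^2$ times a diagonal matrix, and then invoke Fact~\ref{fact:vandermonde}. First I would take the limit $\epsilon \to 0^+$: from Lemma~\ref{lem:moment_bump} (or directly, since a bump with $\epsilon=0$ is just $h_i \cdot \bone{z \in [c_i-w_i,c_i+w_i]}$), one reads off $M^{w_i,h_i(0),0}_{c_i,2\ell} = h_i(0)^{2\ell}\, \lambda_i$, where $\lambda_i \triangleq \gamma([c_i - w_i, c_i + w_i])$. Writing $Z_0 \triangleq Z(\vec{h}(0),0)$, this means $Z_0^\top = V \cdot \diag(\lambda_i\, h_i(0)^2)$, where $V_{\ell,i} = (h_i(0)^2)^{\ell - 1}$ is a $\tfrac{m-1}{2}\times\tfrac{m-1}{2}$ Vandermonde matrix with nodes $\{h_i(0)^2\}_{1 \le i \le (m-1)/2}$.

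Next I would verify the hypotheses of Fact~\ref{fact:vandermonde} for $V$. By Parts~\ref{boxes:sym} and \ref{boxes:bounded} of Lemma~\ref{lem:boxes}, the $h_i(0)$'s for $i \le (m-1)/2$ all have the same sign (they are the negative half of a sign-symmetric collection), their magnitudes all lie in $[\Omega(1/\sqrt{m}),O(\sqrt m)]$, and they are $\Omega(1/\sqrt m)$-separated. Thus for $i \ne j$,
\begin{equation}
    |h_i(0)^2 - h_j(0)^2| \;=\; \bigl||h_i(0)| - |h_j(0)|\bigr|\cdot\bigl(|h_i(0)| + |h_j(0)|\bigr) \;\ge\; \Omega(1/m),
\end{equation}
so Fact~\ref{fact:vandermonde} gives $\sigma_{\min}(V) \ge \tfrac{1}{n}\, \Omega(1/m)^{n-1} = m^{-O(m)}$ for $n=(m-1)/2$. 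Combining with Part~\ref{boxes:thin} of Lemma~\ref{lem:boxes} ($\lambda_i \ge e^{-cm}$) and the lower bound $h_i(0)^2 \ge \Omega(1/m)$, submultiplicativity of $\sigma_{\min}$ on square matrices yields
\begin{equation}
    \sigma_{\min}(Z_0) \;=\; \sigma_{\min}(Z_0^\top) \;\ge\; \sigma_{\min}(V)\cdot \min_i \lambda_i\, h_i(0)^2 \;\ge\; m^{-C'm}
\end{equation}
for some absolute $C' > 0$.

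Finally, I would transfer this bound from $\epsilon = 0$ to $\epsilon = \epsilon(0) > 0$ via continuity. By Lemma~\ref{lem:Mstable} applied with $h' = h$, each entry of $Z(\vec{h}(0),\epsilon(0)) - Z_0$ is bounded by $h_i(0)^{2\ell}\epsilon(0) \le m^{O(m)}\epsilon(0)$, so the Frobenius (hence operator) norm of the difference is at most $m^{O(m)}\epsilon(0)$. Since we are free (by the statement of Lemma~\ref{lem:boxes}) to take $\epsilon(0)$ as small as we wish, choosing $\epsilon(0) \le m^{-C''m}$ for a sufficiently large constant $C''$, Weyl's inequality gives $\sigma_{\min}(Z(\vec{h}(0),\epsilon(0))) \ge \tfrac{1}{2}\sigma_{\min}(Z_0) \ge m^{-Cm}$, as desired.

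The main obstacle is the Vandermonde-separation step: Lemma~\ref{lem:boxes} only provides separation of the signed $h_i$'s, but the factorization naturally produces nodes $h_i^2$, so one must exploit the fact that the first $(m-1)/2$ heights have a consistent sign and are bounded away from $0$ to keep the separation at $\Omega(1/m)$ rather than collapsing. Every other step (the factorization, the continuity in $\epsilon$, the submultiplicativity of $\sigma_{\min}$) is essentially mechanical.
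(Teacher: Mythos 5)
Your proposal is correct and follows essentially the same route as the paper: factor the $\epsilon=0$ version of $Z$ as a diagonal matrix times a Vandermonde matrix with nodes $h_i(0)^2$, invoke Fact~\ref{fact:vandermonde} together with Parts~\ref{boxes:bounded} and~\ref{boxes:thin} of Lemma~\ref{lem:boxes}, and absorb the $\epsilon(0)>0$ contribution as an arbitrarily small additive perturbation. Your explicit observation that the $h_i(0)$ for $i \le (m-1)/2$ share a sign (so that squaring preserves the $\Omega(1/m)$ separation) is a point the paper leaves implicit; otherwise the two arguments coincide.
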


\begin{proof}
    For convenience, in this proof we refer to $Z(\vec{h}(0),\epsilon(0))$ as $Z$. Note that $Z_{i,\ell} = \gamma([c_i-w_i,c_i+w_i])\cdot h_i(0)^{2\ell} + \xi_{i,\ell}$ for some $\xi_{i,\ell}$ which can be made arbitrarily small by taking $\epsilon(0)$ to be arbitrarily small. We can thus write $Z = \Lambda H + \Xi$ for $\Lambda = \diag(\lambda_1 h_1(0)^2,\ldots,\lambda_{(m-1)/2} h_{(m-1)/2}(0)^2)$, $H\in\R^{(m-1)/2\times (m-1)/2}$ given by $H_{i,\ell} = h_i(0)^{2\ell-2}$, and $\Xi$ a matrix consisting of arbitrarily small positive entries. So $\sigma_{\min}(Z) \ge (\min_i \lambda_i h_i(0)^2)\cdot \sigma_{\min}(H) - \xi \ge (e^{-cm}/m) \cdot \sigma_{\min}(H) - \xi$ for arbitrarily small $\xi > 0$, where in the last step we used Parts~\ref{boxes:bounded} and \ref{boxes:thin} of Lemma~\ref{lem:boxes}.
    
    Finally, note that $H$ is a Vandermonde matrix with nodes $h_1(0)^2,\ldots,h_{(m-1)/2}(0)^2$. As $\brc{h_i(0)}$ are $\Omega(1/\sqrt{m})$-separated and lie within $[\Omega(1/\sqrt{m}),O(\sqrt{m})]$, $\brc{h_i(0)^2}$ are $\Omega(1/m)$-separated. So by Fact~\ref{fact:vandermonde}, $\sigma_{\min}(H) \ge m^{-O(m)}$, concluding the proof.
\end{proof}

\noindent We can use Lemma~\ref{lem:timezerocond} to deduce that for $(\vec{h},\epsilon)$ in a neighborhood of $(\vec{h}(0),\epsilon(0))$, $Z(\vec{h},\epsilon)$ is also well-conditioned:

\begin{lemma}\label{lem:latercond}
    Let $C > 0$ be the absolute constant from Lemma~\ref{lem:timezerocond}. For any $(\vec{h},\epsilon)$ satisfying $\norm{\vec{h} - \vec{h}(0)}_{\infty} \le m^{-C'm}$ and $0 \le  \epsilon - \epsilon(0) \le m^{-C'm}$ for sufficiently large absolute constant $C'>0$, $\sigma_{\min}(Z(\vec{h},\epsilon)) \ge m^{-Cm}/2$.
\end{lemma}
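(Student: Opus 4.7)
The plan is to use Weyl's inequality for singular values together with the entrywise stability estimates from Lemma~\ref{lem:Mstable} to control how much $Z$ changes from its value at the initial point. Specifically, I would observe that
\[
\sigma_{\min}(Z(\vec{h},\epsilon)) \ge \sigma_{\min}(Z(\vec{h}(0),\epsilon(0))) - \|Z(\vec{h},\epsilon) - Z(\vec{h}(0),\epsilon(0))\|_{\op},
\]
and use Lemma~\ref{lem:timezerocond} to bound the first term below by $m^{-Cm}$. It thus suffices to show the operator-norm perturbation is at most $m^{-Cm}/2$, for which it is enough (since $Z$ is $(m-1)/2 \times (m-1)/2$) to show every entry is perturbed by at most $m^{-Cm - \Omega(\log m)}$.

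For the entrywise bound, I would invoke Lemma~\ref{lem:Mstable} with $h = h_i(0)$, $h' = h_i$, $\epsilon' = \epsilon$, and $k = 2\ell$, yielding
\[
|Z(\vec{h},\epsilon)_{i,\ell} - Z(\vec{h}(0),\epsilon(0))_{i,\ell}| \le h_i(0)^{2\ell}\bigl(|(h_i/h_i(0))^{2\ell} - 1| + \epsilon - \epsilon(0)\bigr).
\]
Parts~\ref{boxes:bounded} and \ref{boxes:endpoints} of Lemma~\ref{lem:boxes} give $|h_i(0)| \le O(\sqrt{m})$ and $|h_i(0)| \ge \Omega(1/\sqrt{m})$, so the prefactor $h_i(0)^{2\ell}$ is at most $m^{O(m)}$, while $|h_i/h_i(0) - 1| \le O(\sqrt{m}) \cdot m^{-C'm}$. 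Using the elementary inequality $|y^{k} - 1| \le 2k|y - 1|$ valid whenever $|y - 1| \le 1/k$ (which holds once $C'$ is sufficiently large since $2\ell \le m$), the ratio factor is $m^{-C'm + O(\log m)}$, and the $\epsilon$ drift contributes at most $m^{-C'm}$ as well. Multiplying the prefactor by the parenthesized quantity yields an entrywise bound of $m^{-(C' - O(1))m}$.

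Summing (or more precisely bounding the operator norm by the Frobenius norm) over the at most $m^2$ entries costs only an additional $m^{O(1)}$ factor, giving $\|Z(\vec{h},\epsilon) - Z(\vec{h}(0),\epsilon(0))\|_{\op} \le m^{-(C' - O(1))m}$. Choosing $C'$ to exceed $C$ by a sufficiently large additive constant makes this at most $\frac{1}{2} m^{-Cm}$, and the claim follows. The main (and essentially only) technical point is tracking the inflation of the perturbation by the factor $h_i(0)^{2\ell}$, which is exponentially large in $m$; this is exactly what forces $C'$ to be a sufficiently large multiple of $C$ rather than just slightly larger, but it is handled cleanly since both $h_i(0)^{2\ell}$ and the entrywise stability bound are under the same $m^{O(m)}$ umbrella.
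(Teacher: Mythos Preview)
Your proposal is correct and follows essentially the same route as the paper: both invoke Lemma~\ref{lem:Mstable} to get an entrywise perturbation bound of $m^{-(C'-O(1))m}$ using the height bounds from Part~\ref{boxes:bounded} of Lemma~\ref{lem:boxes}, pass to the operator norm via the Frobenius norm, and combine with Lemma~\ref{lem:timezerocond} (the paper leaves the Weyl-type step implicit). The only minor point is that Part~\ref{boxes:endpoints} is not actually needed here; Part~\ref{boxes:bounded} alone supplies both the upper and lower bounds on $|h_i(0)|$.
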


\begin{proof}
    For convenience, in this proof we refer to $Z(\vec{h},\epsilon)$ and $Z(\vec{h}(0),\epsilon(0))$ by $Z'$ and $Z$ respectively. By Lemma~\ref{lem:Mstable}, each entry of $Z'$ differs from the corresponding entry of $Z$ by at most
    \begin{equation}
        (\max_i h_i^{m-1})\cdot \left(\Big|(1+m^{-C'm}/(\min_i h_i))^{m-1}-1\Big| + m^{-C'm} - \epsilon(0)\right).
    \end{equation}
    As $\max_i h_i \le O(\sqrt{m})$ and $\min_i h_i \ge \Omega(1/\sqrt{m})$ by Part~\ref{boxes:bounded} of Lemma~\ref{lem:boxes}, and $\epsilon(0)$ can is an arbitrarily small positive quantity, the above is at most $m^{-C''m}$ for some absolute constant $C'' > 0$ which is increasing in $C'$. So $\norm{Z - Z'}_\op \le \norm{Z - Z'}_F \le (m-1)/2 \cdot m^{-C''m} \ll m^{-Cm}$ provided we take $C'$ sufficiently large.
\end{proof}

\noindent To establish Property 1, we must first verify that $w(t,\vec{h})$ is continuous:

\begin{lemma}\label{lem:cts}
    Let $C' > 0$ be the absolute constant from Lemma~\ref{lem:latercond}. Then the function $w(t,\vec{h})$ defined in \eqref{eq:wdef} is continuous with respect to both $t$ and $\vec{h}$ for $t\le m^{-C'm}$ and $\norm{\vec{h} - \vec{h}(0)}_{\infty} \le m^{-C'm}$.
\end{lemma}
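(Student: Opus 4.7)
The plan is to decompose $w(t,\vec{h})$ into its constituent building blocks, verify continuity of each block on the specified region, and then conclude by the fact that products, sums, and compositions of continuous matrix- and vector-valued functions are continuous, together with the fact that matrix inversion is continuous on the open set of invertible matrices. Concretely, recalling \eqref{eq:wdef}, the object $w(t,\vec{h})$ is built from $B^{-1}$, $A(\vec{h})^{-1}$, $Z(\vec{h},\epsilon(0)+t)^{-\top}$, $\vec{b}(\epsilon(0)+t)$, and an appended constant $1$.

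The steps I would carry out, in order, are: first note that $B^{-1}$ is a fixed diagonal matrix, hence constant in $(t,\vec{h})$. Second, observe that $A(\vec{h})^{-1} = \diag(h_1,\ldots,h_{(m-1)/2})$ is obviously continuous in $\vec{h}$ and is well-defined on the stated neighborhood, since $|h_i(0)| \ge \Omega(1/\sqrt{m})$ by Part~\ref{boxes:bounded} of Lemma~\ref{lem:boxes} and the perturbation radius $m^{-C'm}$ is far smaller, so no $h_i$ vanishes. Third, inspect the explicit formula from Lemma~\ref{lem:moment_bump} for $M^{w_i,h_i,\epsilon}_{c_i,2\ell}$: each term is a polynomial in $h_i$ and $\epsilon$, multiplied by values of the standard Gaussian density $\gamma(\cdot)$ and of the polynomials $p_i$ at points depending continuously on $\epsilon$, divided by $\epsilon^{2\ell}$. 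Since $\epsilon(0) > 0$ and $t \ge 0$ keep the denominator bounded away from $0$, every entry of $Z(\vec{h},\epsilon(0)+t)$ is jointly continuous in $(t,\vec{h})$. Fourth, the same inspection shows $\partial M^{w_i,h_i,\epsilon}_{c_i,2\ell}/\partial\epsilon$ is continuous in $(h_i,\epsilon)$ for $\epsilon > 0$ (the expression is $C^\infty$ in $\epsilon$ off of $\epsilon = 0$), so $\vec{b}(\epsilon(0)+t)$ is continuous in $t$.

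Fifth, and this is the one place where something substantive is needed, I would invoke Lemma~\ref{lem:latercond} to conclude $\sigma_{\min}(Z(\vec{h},\epsilon(0)+t)) \ge m^{-Cm}/2 > 0$ uniformly over the region $\{t \le m^{-C'm},\ \|\vec{h}-\vec{h}(0)\|_\infty \le m^{-C'm}\}$ (which matches the hypotheses of Lemma~\ref{lem:latercond} exactly). Hence $Z(\vec{h},\epsilon(0)+t)$ stays inside the open set $\mathrm{GL}_{(m-1)/2}(\R)$, and since matrix inversion and transposition are continuous on this open set, $Z(\vec{h},\epsilon(0)+t)^{-\top}$ is jointly continuous. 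Finally, taking the product of these continuous factors and appending the constant coordinate $1$ yields that $w(t,\vec{h})$ is continuous on the specified region.

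The main obstacle, if any, is purely bookkeeping: convincing oneself that the rational/Gaussian expressions arising in Lemma~\ref{lem:moment_bump} are indeed jointly continuous in $(\vec{h},\epsilon)$ on the relevant region (which boils down to checking that no denominator or point of discontinuity is approached—exactly what the condition $\epsilon(0) > 0$ and the invertibility bound from Lemma~\ref{lem:latercond} buy us). All the genuine analytic work has already been front-loaded into Lemma~\ref{lem:latercond}; Lemma~\ref{lem:cts} is the expected corollary needed to later invoke Peano's existence theorem (Theorem~\ref{thm:peano}) for the initial value problem \eqref{eq:IVP}.
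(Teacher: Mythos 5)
Your proof follows the same route as the paper's: verify that $\vec{b}(\epsilon(0)+t)$, $A(\vec{h})$, $Z(\vec{h},\epsilon(0)+t)$, and $B$ are each continuous on the stated region (relying on $\epsilon(0)>0$ and $h_i \neq 0$), then invoke Lemma~\ref{lem:latercond} to guarantee invertibility so that matrix inversion is continuous, and conclude by composition. You are somewhat more explicit about the bookkeeping (e.g.\ spelling out the role of the $(h/\epsilon)^k$ factors in Lemma~\ref{lem:moment_bump}), but the argument is the same one the paper gives.
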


\begin{proof}
    By our expression for $M^{w,h,\epsilon}_{c,k}$ in Lemma~\ref{lem:moment_bump} and the definition of $\vec{b}(\epsilon)$ in \eqref{eq:bZdef}, $\vec{b}(\epsilon(0) +t)$ is clearly continuous in $t$ whenever $t \ge 0$ (because $\epsilon(0) > 0$). Similarly, $A(\vec{h}) Z(\vec{h},\epsilon(0) + t) B$ is clearly continuous with respect to $\vec{h}$ and $t$ whenever $t \ge 0$ and $h_i\neq 0$ for all $i$. By Lemma~\ref{lem:latercond}, if $t\le m^{-C'm}$ and $\norm{\vec{h} - \vec{h}(0)}_{\infty} \le m^{-C'm}$ (which additionally implies that $h_i \neq 0$ for all $i$, by Part~\ref{boxes:bounded} of Lemma~\ref{lem:boxes}), then $A(\vec{h}) Z(\vec{h},\epsilon(0) + t) B$ is invertible. We conclude that for such $t,\vec{h}$, $w$ is continuous.
\end{proof}

\noindent Lastly, we need to show that for any $(t,\vec{h})$ satisfying the hypotheses of Lemma~\ref{lem:cts}, $\norm{w(t,\vec{h})}_{\infty}$ is not too large, which will imply Property 1 above by Theorem~\ref{thm:peano} and Property 2 above by noting that $\norm{\vec{h}'(t)}_{\infty} = \norm{w(t,\vec{h}(t))}_{\infty}$:

\begin{lemma}\label{lem:wbound}
    Let $C' > 0$ be the absolute constant from Lemma~\ref{lem:latercond}. If $t\le m^{-C'm}$ and $\norm{\vec{h} - \vec{h}(0)}_{\infty} \le m^{-C'm}$, then $\norm{w(t,\vec{h})}_\infty \le m^{C'' m}$ for some absolute constant $C'' > 0$.
\end{lemma}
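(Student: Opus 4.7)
The plan is to bound the operator norms (or equivalently $\ell^\infty \to \ell^\infty$ norms up to dimension factors) of each of the four factors comprising $w(t,\vec{h})$ and combine them multiplicatively. Since the last coordinate of $w(t,\vec{h})$ is $1$, it suffices to bound the first $(m-1)/2$ coordinates, namely $B^{-1} Z(\vec{h},\epsilon(0)+t)^{-\top} A(\vec{h})^{-1}\cdot \vec{b}(\epsilon(0)+t)$.

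First I would control $\vec{b}(\epsilon(0)+t)$. By the second part of Lemma~\ref{lem:Mstable}, $\left|\partial M^{w_i,h_i,\epsilon}_{c_i,2\ell}/\partial \epsilon\right| \le h_i^{2\ell}$, so
\begin{equation}
\norm{\vec{b}(\epsilon(0)+t)}_\infty \le \sum^{(m-1)/2}_{i=1} h_i^{2\ell} \le m\cdot O(\sqrt{m})^{m-1} = m^{O(m)},
\end{equation}
where we used Part~\ref{boxes:bounded} of Lemma~\ref{lem:boxes} together with the fact that $\norm{\vec{h}-\vec{h}(0)}_\infty \le m^{-C'm}$ is an arbitrarily small perturbation. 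Next, $A(\vec{h})^{-1} = \diag(h_1,\ldots,h_{(m-1)/2})$, so by Part~\ref{boxes:bounded} again, $\norm{A(\vec{h})^{-1}}_\op \le O(\sqrt{m})$. Likewise $\norm{B^{-1}}_\op \le 1/2$ trivially.

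The main factor to control is $\norm{Z(\vec{h},\epsilon(0)+t)^{-\top}}_\op = 1/\sigma_{\min}(Z(\vec{h},\epsilon(0)+t))$. This is precisely where Lemma~\ref{lem:latercond} applies: under the hypotheses of the present lemma, we get $\sigma_{\min}(Z(\vec{h},\epsilon(0)+t)) \ge m^{-Cm}/2$, hence $\norm{Z^{-\top}}_\op \le 2 m^{Cm}$.

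Combining these bounds via submultiplicativity of the operator norm, and using that for a vector $v \in \R^{(m-1)/2}$ we have $\norm{v}_\infty \le \norm{v}_2 \le \sqrt{m}\norm{v}_\infty$, I obtain
\begin{equation}
\norm{w(t,\vec{h})}_\infty \le \max\!\left(1,\; \sqrt{m}\cdot \tfrac{1}{2} \cdot 2m^{Cm} \cdot O(\sqrt{m})\cdot m^{O(m)}\right) \le m^{C''m}
\end{equation}
for an absolute constant $C''>0$ depending on $C$ and the hidden constants above. No step is substantially harder than the others; the only subtle point is that we must invoke Lemma~\ref{lem:latercond} (rather than Lemma~\ref{lem:timezerocond} directly) to handle the fact that the perturbation $(\vec{h},\epsilon(0)+t)$ is displaced from $(\vec{h}(0),\epsilon(0))$, but this displacement is chosen sufficiently small (order $m^{-C'm}$) so that the conditioning of $Z$ only degrades by a factor of $2$.
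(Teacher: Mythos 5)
Your proof is correct and follows essentially the same approach as the paper's: bound $\vec{b}$ entrywise via Lemma~\ref{lem:Mstable} and Part~\ref{boxes:bounded} of Lemma~\ref{lem:boxes}, then bound the three matrix factors $B^{-1}$, $Z^{-\top}$, $A^{-1}$ via their smallest singular values (using Lemma~\ref{lem:latercond} for $Z$), and combine multiplicatively. The only notable point is that you correctly write $\norm{A(\vec{h})^{-1}}_\op = \max_i|h_i| \le O(\sqrt{m})$, whereas the paper's text contains what appears to be a sign/reciprocal typo in the bound on $\sigma_{\min}(A(\vec{h}))$; your version is the right one and the final estimate is unaffected.
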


\begin{proof}
    By the second part of Lemma~\ref{lem:Mstable}, every entry of $\vec{b}(\epsilon(0) + t)$ is at most
    \begin{equation}
        \frac{m-1}{2}\cdot (\max_i h_i)^{m-1} \le \frac{m-1}{2}\cdot \left(O(\sqrt{m}) + m^{-C'm}\right)^{m-1} \le m^{O(m)},
    \end{equation}
    where in the penultimate step we used Part~\ref{boxes:bounded} of Lemma~\ref{lem:boxes} and our hypothesis on $\vec{h}$. Note that $\sigma_{\min}(Z(\vec{h},\epsilon(0)+t))\ge m^{-Cm}/2$ by Lemma~\ref{lem:latercond}, $\sigma_{\min}(B) \ge 2$, and $\sigma_{\min}(A(\vec{h})) \ge \min_i 1/h_i \ge \Omega(\sqrt{m})$ by Part~\ref{boxes:bounded} of Lemma~\ref{lem:boxes} and our hypothesis on $\vec{h}$. We conclude that $B^{-1}Z(\vec{h},\epsilon(0)+t)^{-\top} A(\vec{h})^{-1} \cdot \vec{b}(\epsilon(0)+t)$ has $L_\infty$ norm at most $m^{C''m}$ for some absolute constant $C'' > 0$, so $\norm{w(t,\vec{h})}_\infty \le m^{C'' m}$ as claimed.
\end{proof}

\noindent We are now ready to put all of these ingredients together to prove the key lemma which will allow us to establish Theorem~\ref{thm:moment}:

\begin{lemma}\label{lem:useode}
    Fix any odd $m\in\mathbb{N}$ and any $0 < \nu < 1$. There is a one-hidden-layer ReLU network $f: \R\to\R$ of size $O(m)$ with weights at most $m^{O(m)}$ for which the pushforward $D = f(\calN(0,1))$ satisfies $|\E[x\sim D]{x^k} - \E[g\sim\calN(0,1)]{g^k}| < \nu$ for all $k = 1,\ldots,m$.
\end{lemma}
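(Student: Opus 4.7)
The plan is to output $f \triangleq f_{T_*}$ for a carefully chosen horizon $T_* > 0$ in the one-parameter family $\{f_t\}$ from \eqref{eq:ftdef}, and then implement $f_{T_*}$ as a ReLU network via Fact~\ref{fact:implementbox}. Concretely, first I would instantiate the initial condition by invoking Lemma~\ref{lem:boxes} with some precision parameter $\nu' < \nu$ to produce $\{(h_i(0), w_i, c_i)\}_{1 \le i < m}$ and $\overline{\epsilon}$, then choose $\epsilon(0) \in (0, \overline{\epsilon})$ arbitrarily small. By Part~\ref{boxes:moment} of Lemma~\ref{lem:boxes} the even moments of $f_0(\calN(0,1))$ are $\nu'$-close to those of $\calN(0,1)$ through degree $2m-1$, and by the odd symmetry in Part~\ref{boxes:sym} the odd moments vanish identically.

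Next I would feed the IVP \eqref{eq:IVP} into Peano's existence theorem. Taking the parallelepiped parameters $T = r = m^{-C'm}$ for the constant $C'$ from Lemma~\ref{lem:latercond}, Lemma~\ref{lem:cts} gives continuity of $w(t,\vec{h})$ on the parallelepiped and Lemma~\ref{lem:wbound} gives the pointwise bound $\|w(t,\vec{h})\|_\infty \le M \triangleq m^{C''m}$ there. Theorem~\ref{thm:peano} then yields a solution $\vec{h}(t)$ on $[0, T_*]$ with $T_* \triangleq \min(T, r/M) = m^{-(C'+C'')m}$, and throughout this window $\|\vec{h}(t) - \vec{h}(0)\|_\infty \le T_* \cdot M \le r$, so all earlier estimates remain valid.

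The chain-rule computation \eqref{eq:mainode}, which is precisely what the direction $w(t,\vec{h})$ in \eqref{eq:wdef} was engineered to enforce, now implies $\mu(\vec{h}(t), \epsilon(0)+t)$ is constant on $[0, T_*]$. Hence each of the first $(m-1)/2$ even moments of $f_t(\calN(0,1))$ is preserved from $t=0$ to $t=T_*$. Extending $\vec{h}(t)$ to $\R^{m-1}$ by the forced symmetry $h_{m-i}(t) = -h_i(t)$ (consistent because $c_{m-i}=-c_i$ and $w_{m-i}=w_i$), the full bump sum $f_t$ is an odd function of its input, so every odd moment of $f_t(\calN(0,1))$ vanishes. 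Combining these, all moments of $f_{T_*}(\calN(0,1))$ through degree $m$ agree with those of $\calN(0,1)$ to within $\nu' < \nu$.

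Finally, I would read off the ReLU network: $f_{T_*} = \sum_{i=1}^{m-1} T^{w_i, h_i(T_*), \epsilon(T_*)}_{c_i}$. The key quantitative point is that $\epsilon(T_*) = \epsilon(0) + T_* \ge T_*/2 \ge m^{-O(m)}$, while $|h_i(T_*)| \le O(\sqrt{m})$ and $|c_i| + w_i \le O(\log m)$ by Parts~\ref{boxes:bounded} and~\ref{boxes:endpoints} of Lemma~\ref{lem:boxes}. Fact~\ref{fact:implementbox} then implements each bump as a size-$4$ ReLU network whose weights are bounded by $\frac{|h_i(T_*)|}{\epsilon(T_*)} \cdot O(\log m) \le m^{O(m)}$, and summing over the $m-1$ bumps gives the required network of size $O(m)$ with $m^{O(m)}$-bounded weights. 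The main obstacle is choosing $C'$ large enough to simultaneously (i) keep $T_* = m^{-(C'+C'')m}$ strictly inside the Peano parallelepiped so the ODE solution never exits the region of validity of Lemma~\ref{lem:latercond}, and (ii) swamp the infinitesimal $\epsilon(0)$ so that the final slopes $|h_i(T_*)|/\epsilon(T_*)$ stay at most $m^{O(m)}$; once this single bookkeeping choice is made, everything else follows from the machinery already assembled in Section~\ref{sec:ode}.
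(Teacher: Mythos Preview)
Your proposal is correct and follows essentially the same approach as the paper's own proof: invoke Lemma~\ref{lem:boxes} for the initial data, apply Peano's theorem (Theorem~\ref{thm:peano}) with the parallelepiped $T=r=m^{-C'm}$ and the bound $M=m^{C''m}$ from Lemmas~\ref{lem:cts} and~\ref{lem:wbound} to obtain a solution on $[0,T_*]$ with $T_*=m^{-(C'+C'')m}$, use \eqref{eq:mainode} to freeze the even moments and the symmetry $h_{m-i}(t)=-h_i(t)$ to kill the odd ones, and finally realize $f_{T_*}$ as a size-$O(m)$ ReLU network via Fact~\ref{fact:implementbox}. Your bookkeeping is in fact slightly more careful than the paper's (introducing $\nu'<\nu$ and explicitly noting $\|\vec{h}(t)-\vec{h}(0)\|_\infty\le T_*M\le r$ so the trajectory stays in the region where Lemma~\ref{lem:latercond} applies), but the argument is the same.
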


\begin{proof}
    Let $B$ be the parallelepiped consisting of $(t,\vec{h})$ for which $0\le t \le m^{-C'm}$ and $\norm{\vec{h} - \vec{h}(0)}_{\infty} \le m^{-C'm}$. By Lemma~\ref{lem:cts}, $w$ is continuous over $B$. By Lemma~\ref{lem:wbound}, $\norm{w(t,\vec{h})}_{\infty} \le m^{C''m}$.
    
    By Theorem~\ref{thm:peano} we conclude that the initial value problem in \eqref{eq:IVP} has a solution $\vec{h}(t)$ over $t\in[0,T]$ for 
    \begin{equation}
        T = m^{-(C'+C'')m}. \label{eq:Tdef}
    \end{equation} Furthermore, because $\norm{\vec{h}'(t)}_\infty = \norm{w(t,\vec{h}(t))}_\infty \le m^{C''m}$, we conclude that $\frac{1}{T}\norm{\vec{h}(T)}_\infty \le m^{C''m}$. The slopes of the bumps $T^{w_i,h_i(T),\epsilon(0)+T}_{c_i}$ are therefore bounded by $m^{C''m}$.
    
    For $(m-1)/2 < i \le m - 1$, define $h_i(T) = -h_{m-i}(T)$ and consider the one-parameter family of functions $f_t \triangleq \sum^{m-1}_{i=1} T^{w_i,h_i(t),\epsilon(0)+t}_{c_i}$. Because $c_i = c_{m-i}$ and $w_i = w_{m-i}$ for all $1\le i < m$ by Part~\ref{boxes:sym} of Lemma~\ref{lem:boxes}, we conclude that the odd moments of $f_T(\calN(0,1))$ all vanish. As for the even moments, because $\frac{\partial}{\partial t}\mu(\vec{h}(t),t) = \vec{0}$ by \eqref{eq:mainode}, we conclude that the even moments of $f_T(\calN(0,1))$ up to degree $m - 1$ agree with those of $f_0(\calN(0,1))$. So because $D_0 = f_0(\calN(0,1))$ satisfies $|\E[x\sim D]{x^k} - \E[g\sim\calN(0,1)]{g^k}| < \nu$ for all $k = 1,\ldots,m$ by Part~\ref{boxes:moment} of Lemma~\ref{lem:boxes}, we conclude that the same holds for $f_T$.
    
    Finally, as the endpoints of the intervals supporting the bumps are bounded by $O(\log m)$, we conclude from Fact~\ref{fact:implementbox} that $f_T \triangleq \sum^{m-1}_{i=1} T^{w_i,h_i(T),\epsilon(0)+T}_{c_i}$ can be implemented by a one-hidden-layer ReLU network of size $O(m)$ with weights at most $m^{O(m)}$. 
\end{proof}

% moments of bumps (Lemma 2.1) - DONE
% condition number of Jacobian (Lemma 2.7, Corollary 2.8) - DONE
% bound on derivative wrt eps (Lemma 2.9) - DONE
% continuous differentiability with respect to h and eps (Lemma 2.10)- DONE
% ODE proof (main lemma, i.e. Lemma 2.11) - DONE

\subsection{Proof of Theorem~\ref{thm:moment}}

So far in Lemma~\ref{lem:useode} we have constructed a pushforward distribution computed by a one-hidden-layer ReLU network with $m^{O(m)}$-bounded weights satisfying the first part of Theorem~\ref{thm:moment}. In this subsection we slightly modify this construction to additionally satisfy the remaining two parts of Theorem~\ref{thm:moment}. The proofs here are standard (see e.g. \cite{diakonikolas2017statistical}) and we defer them to the appendix.

First, to ensure that the chi-squared divergence between the pushforward and $\calN(0,1)$ is not too large, we simply convolve (a scaling of) the pushforward by a suitable Gaussian. The following lemma gives an estimate for the resulting chi-squared divergence:

\begin{lemma}\label{lem:chisq}
    Let $A$ be any distribution supported on an interval $[-R,R]$. Then for any $0 < \sigma \le 1/2$, $\chi^2(A\star\calN(0,\sigma^2),\calN(0,1)) \le e^{O(R^2)}/\sigma$.
\end{lemma}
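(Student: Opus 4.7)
The plan is to compute the chi-squared divergence directly by expanding it as a double integral weighted by $A \otimes A$, and then evaluate the resulting inner Gaussian integral in closed form.

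First I would write $p = A\star\calN(0,\sigma^2)$ so $p(x) = \int A(y)\gamma_{\sigma^2}(x-y)\,\mathrm{d}y$, and expand
\[
\chi^2(p,\calN(0,1)) + 1 = \int \frac{p(x)^2}{\gamma(x)}\,\mathrm{d}x = \iint A(y_1)A(y_2)\, K(y_1,y_2)\,\mathrm{d}y_1\,\mathrm{d}y_2,
\]
where $K(y_1,y_2)\triangleq \int \gamma_{\sigma^2}(x-y_1)\gamma_{\sigma^2}(x-y_2)/\gamma(x)\,\mathrm{d}x$. The main computation is to evaluate $K$. Collecting the exponents inside $K$, one gets $-\alpha x^2 + (y_1+y_2)x/\sigma^2 - (y_1^2+y_2^2)/(2\sigma^2)$ with $\alpha = 1/\sigma^2 - 1/2 = (2-\sigma^2)/(2\sigma^2) > 0$ (using $\sigma \le 1/2$). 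Completing the square in $x$ and integrating yields a Gaussian factor $\sqrt{\pi/\alpha}$, so after collecting the normalizing constants,
\[
K(y_1,y_2) = \frac{1}{\sigma\sqrt{2-\sigma^2}}\exp\!\left(\frac{-(1-\sigma^2)(y_1^2+y_2^2) + 2y_1 y_2}{2(2-\sigma^2)\sigma^2}\right).
\]

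Next I would bound the exponent uniformly over $y_1,y_2 \in [-R,R]$. The numerator $-(1-\sigma^2)(y_1^2+y_2^2)+2y_1y_2$ is a quadratic form whose maximum on $[-R,R]^2$ is achieved at $y_1 = y_2 = \pm R$, giving value $2R^2\sigma^2$. Therefore the exponent is at most $R^2/(2-\sigma^2) \le R^2$, and the prefactor is at most $1/\sigma$, so $K(y_1,y_2) \le e^{R^2}/\sigma$ for all $(y_1,y_2)$ in the support of $A\otimes A$. Substituting back,
\[
\chi^2(p,\calN(0,1)) + 1 \le \frac{e^{R^2}}{\sigma}\iint A(y_1)A(y_2)\,\mathrm{d}y_1\,\mathrm{d}y_2 = \frac{e^{R^2}}{\sigma},
\]
which gives the desired bound $e^{O(R^2)}/\sigma$.

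The only nontrivial step is checking that the quadratic form in the numerator does not blow up, since a priori the $1/\sigma^2$ in the denominator of each summand could suggest a much worse dependence. What saves us is the exact cancellation between the coefficient of $y_1^2+y_2^2$ coming from the two Gaussian factors and that coming from $1/\gamma(x)$, which leaves only an $O(\sigma^2)$ residue in the numerator that is then cancelled against the $\sigma^2$ in the denominator. Everything else (the Fubini step, the completion of the square, and the final integration against the probability measure $A\otimes A$) is mechanical.
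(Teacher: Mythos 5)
Your proof is correct and follows essentially the same route as the paper: expand $1+\chi^2$ as a double integral against $A\otimes A$, compute the inner kernel $K$ in closed form by completing the square, and bound $K$ uniformly on $[-R,R]^2$. Your treatment of the quadratic form $-(1-\sigma^2)(y_1^2+y_2^2)+2y_1y_2$ is a bit more careful than the paper's (you pin down the maximum $2\sigma^2 R^2$ exactly, making the crucial cancellation of the $1/\sigma^2$ factor explicit, whereas the paper just records the bound as $e^{O(R^2/(1-\sigma^2))}$), but this is a refinement of the same argument, not a different one.
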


\noindent We defer the proof of this to Appendix~\ref{defer:chisq}. Next, we verify that appropriately scaling and convolving by a Gaussian doesn't alter the moments of a distribution whose low-degree moments match those of $\calN(0,1)$:

\begin{lemma}\label{lem:momentconvolve}
    Let $D$ be any symmetric distribution for which $|\E[x\sim D]{x^k} - \E[g]{g^k}| < \nu$ for all $1\le k < m$. For any $c\in\R$ let $c\cdot D$ denote the distribution obtained by rescaling $D$ by a factor of $c$. Then $D' \triangleq \sqrt{1-\sigma^2}\cdot D \star \calN(0,\sigma^2)$ satisfies $|\E[x\sim D']{x^k} - \E[g]{g^k}| < \nu$ for all $1 \le k < m$.
\end{lemma}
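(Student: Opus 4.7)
The plan is to couple $D'$ and $\calN(0,1)$ on a common probability space by sharing a Gaussian tail. Let $X \sim D$ and let $g_1, g_2 \sim \calN(0,1)$ be mutually independent and independent of $X$; then $X' := \sqrt{1-\sigma^2}\,X + \sigma g_2$ has law $D'$, while $g := \sqrt{1-\sigma^2}\,g_1 + \sigma g_2$ is standard normal (by closure of Gaussians under scaling and convolution). Since both $X'$ and $g$ share the additive piece $\sigma g_2$, subtracting their binomial expansions and using independence gives
\begin{equation}
\E[x\sim D']{x^k} - \E[g]{g^k} = \sum_{j=0}^{k} \binom{k}{j}\,(1-\sigma^2)^{j/2}\,\sigma^{k-j}\,\bigl(\E[x\sim D]{x^j} - \E[g]{g^j}\bigr)\,\E[g]{g^{k-j}}.
\end{equation}

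Next I would use parity to prune the sum drastically. The $j=0$ term vanishes; the symmetry of $D$ and of $\calN(0,1)$ kills every term with $j$ odd; and $\E[g]{g^{k-j}}=0$ whenever $k-j$ is odd. So only even $j \ge 2$ contribute, the expression is identically zero when $k$ is odd, and for even $k$, setting $\phi_j := \E[x\sim D]{x^j} - \E[g]{g^j}$ and $\E[g]{g^{k-j}} = (k-j-1)!!$, I obtain
\begin{equation}
\E[x\sim D']{x^k} - \E[g]{g^k} = \sum_{\substack{j \text{ even}\\ 2 \le j \le k}} \binom{k}{j}\,(1-\sigma^2)^{j/2}\,\sigma^{k-j}\,(k-j-1)!!\;\phi_j,
\end{equation}
where the hypothesis forces $|\phi_j| < \nu$ for every surviving index, since $j \le k < m$.

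The remaining task is to show the sum of the non-negative coefficients multiplying the $\phi_j$'s is at most $1$. I would recognize this sum, once the omitted $j=0$ contribution $\sigma^k(k-1)!!$ is added back in, as $\E[g]{(\sqrt{1-\sigma^2}+\sigma g)^k}$, i.e., the $k$-th non-central moment of $\calN(\sqrt{1-\sigma^2},\sigma^2)$. Because this Gaussian has second moment exactly $(\sqrt{1-\sigma^2})^2 + \sigma^2 = 1$, a direct moment computation (expanding and using the identity $e^{\mu t+\tau^2 t^2/2} = e^{\mu t - \mu^2 t^2/2}\,e^{t^2/2}$ when $\mu^2+\tau^2=1$, or a convexity argument among Gaussians of fixed second moment) yields $\E[g]{(\sqrt{1-\sigma^2}+\sigma g)^k} \le (k-1)!!$ for every even $k$. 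Subtracting the $j=0$ piece $\sigma^k(k-1)!!$ and combining with $|\phi_j|<\nu$ gives the desired bound $|\E[x\sim D']{x^k} - \E[g]{g^k}| < \nu$ for every $1 \le k < m$.

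The main obstacle I anticipate is closing the gap between the term-wise triangle-inequality bound and the stated bound of $\nu$: the naive coefficient-sum estimate leaves an unwanted factor of $(k-1)!!$. The cleanest way to absorb it is to recast the whole computation in the Hermite basis, using the identity $\E[x\sim D']{H_k(x)} = (1-\sigma^2)^{k/2}\,\E[x\sim D]{H_k(x)}$, which follows immediately from the MGF factorization $M_{X'}(t) = M_X(\sqrt{1-\sigma^2}\,t)\,e^{\sigma^2 t^2/2}$. This shows that each Hermite coefficient is strictly contracted by the convolution, revealing that the moment-matching property is preserved with the same $\nu$ after inverting the moments-to-Hermite change of basis $x^k = \sum_j \binom{k}{j}(k-j-1)!!\,H_j(x)$ while exploiting the sign constraints on the $\phi_j$'s forced by Hankel-positivity of a valid symmetric moment sequence.
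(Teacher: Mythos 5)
Your opening move---write $D'$ as the law of $\sqrt{1-\sigma^2}\,X + \sigma g$, expand binomially, and prune by parity---is the same as the paper's, and your formula expressing $\E[x\sim D']{x^k} - \E[g]{g^k}$ as a nonnegative combination of the $\phi_j$'s (for even $j\ge 2$) is correct. You are also right that the naive triangle-inequality bound leaves a parasitic $(k-1)!!$-type factor: the coefficient sum you identify is $\E[g]{(\sqrt{1-\sigma^2}+\sigma g)^k}-\sigma^k(k-1)!!$, and while your auxiliary inequality $\E[g]{(\sqrt{1-\sigma^2}+\sigma g)^k}\le(k-1)!!$ is true (compare term by term with $\E{(\sqrt{1-\sigma^2}g'+\sigma g)^k}$, noting $(j-1)!!\le(j-1)!!(k-j-1)!!$), it only bounds the coefficient sum by $(1-\sigma^k)(k-1)!!$, not by $1$.

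The Hermite rescue does not close this gap. The contraction $\E[x\sim D']{H_j(x)}=(1-\sigma^2)^{j/2}\E[x\sim D]{H_j(x)}$ is correct, but the hypothesis constrains moment deviations $\phi_k$, not Hermite coefficients $\psi_j\triangleq\E_D[H_j(X)]$; inverting $\phi_k=\sum_{j\ge 2}\binom{k}{j}(k-j-1)!!\,\psi_j$ amplifies the bound (already $\psi_4=\phi_4-6\phi_2$ gives $|\psi_4|\le 7\nu$), and re-expressing moments of $D'$ in terms of these amplified $\psi_j$ leaves you worse off. Hankel positivity does not forbid the dangerous sign pattern either: the symmetric moment sequence $m_0=1,\ m_2=1-\nu',\ m_4=3-\nu'$ is realizable (its Hankel minor has determinant $2+\nu'-\nu'^2>0$, with $m_6$ unconstrained below degree $m$), yet for $D$ realizing it one computes
\begin{equation}
\E[x\sim D']{x^4}-3 \;=\; (1-\sigma^2)^2\phi_4+6\sigma^2(1-\sigma^2)\phi_2 \;=\; -\nu'\bigl(1+4\sigma^2-5\sigma^4\bigr),
\end{equation}
whose magnitude strictly exceeds $\nu'$ whenever $0<\sigma^2<4/5$.

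This example in fact exposes the same gap in the paper's own proof: its final step implicitly asserts $\sum_{\ell=0}^{k/2}\binom{k}{2\ell}\sigma^{2\ell}(1-\sigma^2)^{k/2-\ell}=1$, which holds at $k=2$ but for $k=4$ evaluates to $1+4\sigma^2(1-\sigma^2)>1$ (since $\binom{k}{2\ell}$ dominates $\binom{k/2}{\ell}$). So the lemma as stated appears to be off by a factor, and the honest conclusion from either route is $|\E_{D'}[x^k]-(k-1)!!|\le C_m\,\nu$ with $C_m = m^{O(m)}$. This is harmless for Theorem~\ref{thm:moment}: Lemma~\ref{lem:useode} gives a network whose $m^{O(m)}$ weight bound is independent of $\nu$, so one can preemptively replace $\nu$ by $\nu/C_m$ before convolving. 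Rather than pursuing the Hermite inversion, you should simply state and use this weaker, quantitatively sufficient bound.
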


\noindent We defer the proof of this to Appendix~\ref{defer:momentconvolve}. Finally, we show that scaling the pushforward from Lemma~\ref{lem:useode} and convolving by a Gaussian yields a distribution satisfying the third part of Theorem~\ref{thm:moment}:

\begin{lemma}\label{lem:projtv}
        Let $D = f(\calN(0,1))$ be from Lemma~\ref{lem:useode}, and define $D' \triangleq \sqrt{1-\sigma^2}\cdot D \star \calN(0,\sigma^2)$. Then for any $v,v'\in\S^{d-1}$ satisfying $|\iprod{v,v'}| \le 1/2$, $d_{\mathrm{TV}}(P^{D'}_v,P^{D'}_{v'}) \ge 1 - 2\sigma\log(1/\sigma) - m^{-\Omega(m)}$.
\end{lemma}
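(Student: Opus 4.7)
The strategy is to project both distributions onto the direction $v$, apply the data-processing inequality, and then lower bound the resulting one-dimensional TV distance by constructing an explicit discriminating set. First, though, I would replace $D$ by its natural discrete idealization so that $D'$ becomes a clean mixture of narrow Gaussians.

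\textbf{Idealization.} Let $D_0 \triangleq \sum_{i=1}^{m-1}\lambda_i \delta_{h_i(T)} + \lambda_0 \delta_0$, where $\lambda_i \triangleq \gamma([c_i-w_i, c_i+w_i])$ and $\lambda_0 \triangleq 1 - \sum_i \lambda_i$. The network $f_T$ of Lemma~\ref{lem:useode} agrees with the step function corresponding to $D_0$ except on the slope regions of its bumps, whose total Gaussian measure is $O(m(\epsilon(0)+T)) = m^{-\Omega(m)}$ by the choice of $T$ in~\eqref{eq:Tdef} and $\epsilon(0)$ arbitrarily small. Thus $\dtv(D,D_0) \le m^{-\Omega(m)}$. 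Since affine scaling, Gaussian convolution, and the hidden-direction map $A\mapsto P^A_v$ are each 1-Lipschitz in TV (the last by a direct change-of-variables computation on the density $P^A_v(x) = A(\iprod{v,x})\gamma^{(d-1)}(x-\iprod{v,x}v)$), we get $\dtv(P^{D'}_v, P^{D'_0}_v) \le m^{-\Omega(m)}$ where $D'_0 \triangleq \sqrt{1-\sigma^2}D_0\star\calN(0,\sigma^2) = \sum_i \lambda_i\calN(\sqrt{1-\sigma^2}h_i(T),\sigma^2)$, and likewise for $v'$. It therefore suffices to lower bound $\dtv(P^{D'_0}_v, P^{D'_0}_{v'})$.

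\textbf{Projection to 1D.} Writing $v = \alpha' v' + \sqrt{1-\alpha'^2}w$ for $w\perp v'$ (and $\alpha' \triangleq \iprod{v,v'}$), and using that under $P^{D'_0}_{v'}$ the scalars $\iprod{v',x}$ and $\iprod{w,x}$ are independent with laws $D'_0$ and $\calN(0,1)$, one sees that the pushforward of $P^{D'_0}_{v'}$ under $x\mapsto\iprod{v,x}$ is $Q\triangleq \alpha'D'_0\star\calN(0,1-\alpha'^2)$, while the pushforward of $P^{D'_0}_v$ under the same map is $D'_0$ itself. By data processing, $\dtv(P^{D'_0}_v,P^{D'_0}_{v'})\ge \dtv(D'_0,Q)$. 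For $|\alpha'|\le 1/2$ the convolution kernel in $Q$ has variance at least $3/4$, so $Q$'s density is bounded uniformly by $c_0 \triangleq 1/\sqrt{2\pi(1-\alpha'^2)} < 1$.

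\textbf{Discriminating set.} Take $S \triangleq \bigcup_i [\sqrt{1-\sigma^2}h_i(T)-a,\ \sqrt{1-\sigma^2}h_i(T)+a]$ for a width $a$ to be tuned. The $h_i(T)$ inherit $\Omega(1/\sqrt{m})$-separation from Part~\ref{boxes:bounded} of Lemma~\ref{lem:boxes} (up to an $m^{-\Omega(m)}$ perturbation by the ODE flow), so for $\sigma$ sufficiently small the intervals are disjoint. By Gaussian concentration on each component of $D'_0$, $D'_0(S^c) \le 2\exp(-a^2/(2\sigma^2))$; by the density bound, $Q(S) \le c_0|S| \le 2c_0 am$. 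Applying Fact~\ref{fact:tvequiv} with $E=S$ yields $\dtv(D'_0,Q) \ge 1 - 2\exp(-a^2/(2\sigma^2)) - 2c_0 am$, and combining with the idealization error produces a bound of the claimed shape.

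\textbf{Main obstacle.} The delicate step is tuning $a$ so that both Gaussian-tail and Lebesgue-measure terms collapse into the stated form $2\sigma\log(1/\sigma)$: the naive choice $a \asymp \sigma\sqrt{\log(1/\sigma)}$ yields only $\dtv \ge 1 - O(\sigma) - O(m\sigma\sqrt{\log(1/\sigma)})$, which carries an extra $m$ factor from bounding $Q(S)$ via the global density estimate. Closing this gap requires a sharper, per-interval estimate of $Q$'s density on $S$---for instance exploiting that $Q$ matches $\calN(0,1)$ in its first $m$ moments by Lemma~\ref{lem:momentconvolve} (so $Q$ is approximately $\gamma$, whose mass on $S$ can be bounded using that only a small number of heights $h_i(T)$ lie in the central region of $\gamma$) or by using variable widths $a_i$ tuned to the local density of $Q$ near each $\sqrt{1-\sigma^2}h_i(T)$.
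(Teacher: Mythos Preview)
Your idealization step and the accounting of the $m^{-\Omega(m)}$ slope-region error are correct and match what the paper does (the paper calls these errors $\xi_i$ and handles them inside the main computation rather than as a preliminary TV perturbation, but the content is identical). The real divergence begins at your ``Projection to 1D'' step, and this is precisely the source of the obstacle you identify: by pushing everything through $x\mapsto\langle v,x\rangle$ you are forced to compare $D'_0$ against the smeared mixture $Q$, and any discriminating set for $D'_0$ must be a union of $m$ short intervals, so the trivial density bound on $Q$ picks up a factor $m$.

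The paper avoids the projection entirely and works in the two-dimensional plane $H=\mathrm{span}(v,v')$. It passes to the coordinates $(x,x')=(\langle v,z\rangle,\langle v',z\rangle)$, in which $dz=\csc\theta\,dx\,dx'$, and then decomposes $D'$ as the mixture $\sum_i p_iD'_i$ of its individual Gaussian bumps (your $D'_0$ decomposition). The key step is the bound
\[
\int_H \min\bigl(D'(x)\gamma(y),\,D'(x')\gamma(y')\bigr)\csc\theta\,dx\,dx'\ \le\ \max_{i,j}\int \min\bigl(D'_i(x)\gamma(y),\,D'_j(x')\gamma(y')\bigr)\csc\theta\,dx\,dx',
\]
reducing to a \emph{single pair} of components. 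For fixed $(i,j)$, concentration of $D'_i$ near $h_i$ and of $D'_j$ near $h_j$ (each failing with probability $\le\sigma+\xi_i$) localizes the integral to the single square $[h_i-a,h_i+a]\times[h_j-a,h_j+a]$; bounding the integrand there by $\|D'_i\|_\infty\cdot\|\gamma\|_\infty\le(2\pi\sigma)^{-1}$ gives $4a^2\csc\theta/(2\pi\sigma)$, which with $a=2\sigma\sqrt{\log(1/\sigma)}$ is $O(\sigma\log(1/\sigma))$ with no $m$ in sight.

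So the resolution of your obstacle is neither a per-interval refinement of $Q$ nor an appeal to the moment-matching of $Q$; it is to retain both coordinates $\langle v,z\rangle$ and $\langle v',z\rangle$ simultaneously and exploit the mixture structure on \emph{both} sides so that the overlap reduces to one $a\times a$ parallelogram rather than a union of $m$ intervals. Your 1D projection discards $\langle v',z\rangle$ and with it the ability to localize the second distribution, which is exactly what forces the union over $m$ intervals.
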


\noindent We defer the proof of this to Appendix~\ref{defer:projtv}. We are now ready to complete the proof of Theorem~\ref{thm:moment}.

\begin{proof}[Proof of Theorem~\ref{thm:moment}]
    Let $f$ be the function constructed in Lemma~\ref{lem:useode} and define $f^*:\R^2\to\R$ by $f^*(z_1,z_2) = \sqrt{1-\sigma^2} f + \sigma z_2$. Note that $f^*(\calN(0,\Id))$ is exactly $\sqrt{1-\sigma^2} f(\calN(0,1)) \star \calN(0,\sigma^2)$, so the three parts of Theorem~\ref{thm:moment} follow immediately from Lemmas~\ref{lem:momentconvolve}, \ref{lem:chisq}, and \ref{lem:projtv} respectively.
\end{proof}

\paragraph{Acknowledgments.} We thank Adam R. Klivans, Raghu Meka, and Anru R. Zhang for many illuminating discussions about learning generative models.

% SUBSECTION: verify chi squared, TV b/t projections, etc.
% Lemma 2.12, Lemma 2.13 - DONE

\bibliographystyle{alpha}
\bibliography{biblio}

\appendix

\section{Deferred Proofs}

\subsection{Proof of Lemma~\ref{lem:moment_bump}}
\label{defer:moment_bump}

\begin{proof}
    By Corollary~\ref{cor:shift_moment}, the contribution from the interval $g\in[c-\epsilon-w,c-w]$ to $\E[g]*{\left(T^{w,h,\epsilon}_c(z)\right)^k}$ is given by
    \begin{multline}
        \sum^k_{i=0 \ \text{even}} \binom{k}{i}\left(\frac{h}{\epsilon}\right)^i \left(\frac{h}{\epsilon}(-c+\epsilon+w)\right)^{k-i}(k-1)!! \gamma([c-\epsilon-w,c-w]) - \\
        \sum^k_{i=0} \binom{k}{i}\left(\frac{h}{\epsilon}\right)^i \left(\frac{h}{\epsilon}(-c+\epsilon+w)\right)^{k-i}\left(p_i(c-w)\gamma(c-w) - p_i(c - \epsilon-w)\gamma(c-\epsilon-w)\right).
    \end{multline}
    Similarly, the contribution from the interval $g\in[c+w,c+\epsilon+w]$ is given by
    \begin{multline}
        \sum^k_{i=0 \ \text{even}} \binom{k}{i}\left(\frac{-h}{\epsilon}\right)^i \left(\frac{h}{\epsilon}(c+\epsilon+w)\right)^{k-i}(k-1)!! \gamma([c+w,c+\epsilon+w]) - \\
        \sum^k_{i=0} \binom{k}{i}\left(\frac{-h}{\epsilon}\right)^i \left(\frac{h}{\epsilon}(c+\epsilon+w)\right)^{k-i}\left(p_i(c+\epsilon+w)\gamma(c+\epsilon+w) - p_i(c+w)\gamma(c+w)\right).
    \end{multline}
    Finally, the contribution from the interval $g\in[c-w,c+w]$ is given by $h^k\cdot \gamma([c-w,c+w])$.
\end{proof}

\subsection{Proof of Lemma~\ref{lem:chisq}}
\label{defer:chisq}

\begin{proof}
    Let $\wt{A} \triangleq A\star\calN(0,1)$. By definition, $\wt{A}(x) = \int^\infty_{-\infty} A(s) \gamma_{\sigma^2}(x - s) \, \mathrm{d}s$. So
    \begin{equation}
        1 + \chi^2(\wt{A},\calN(0,1)) = \int^\infty_{-\infty} \frac{1}{\gamma(x)} \left(\int^\infty_{-\infty}\int^\infty_{-\infty} A(s)A(t)\gamma_{\sigma^2}(x-s)\gamma_{\sigma^2}(x-t) \, \mathrm{d}s\mathrm{d}t\right) \mathrm{d}x. \label{eq:onepluschi}
    \end{equation}
    Note that for any $s,t\in\R$,
    \begin{align}
        \frac{\gamma_{\sigma^2}(x-s)\gamma_{\sigma^2}(x-t)}{\gamma(x)} &= \frac{1}{\sigma^2\sqrt{2\pi}} \exp\left(\frac{-(x-s)^2 - (x-t)^2}{2\sigma^2} + x^2/2\right) \\
        &= \frac{1}{\sigma^2\sqrt{2\pi}} \exp\left(-\frac{2-\sigma^2}{2\sigma^2}\left(x-\frac{s+t}{2-\sigma^2}\right)^2 + \frac{2st-(s^2+t^2)(1-\sigma^2)}{2\sigma^2(2-\sigma^2)}\right),
    \end{align}
    so
    \begin{equation}
        \int^\infty_{-\infty} \frac{\gamma_{\sigma^2}(x-s)\gamma_{\sigma^2}(x-t)}{\gamma(x)} \,\mathrm{d}x = \frac{e^\frac{2st-(s^2+t^2)(1-\sigma^2)}{2\sigma^2(2-\sigma^2)}}{\sigma\sqrt{2 - \sigma^2}}.
    \end{equation}
    Eq.~\eqref{eq:onepluschi} thus becomes
    \begin{equation}
        1 + \chi^2(\wt{A},\calN(0,1)) = \int^\infty_{-\infty}\int^\infty_{-\infty} A(s)A(t)\cdot \frac{e^\frac{2st-(s^2+t^2)(1-\sigma^2)}{2\sigma^2(2-\sigma^2)}}{\sigma\sqrt{2 - \sigma^2}} \,\mathrm{d}s \mathrm{d}t \label{eq:intst}
    \end{equation}
    As $A$ is supported on $[-R,R]$,
    \begin{equation}
        \frac{e^\frac{2st-(s^2+t^2)(1-\sigma^2)}{2\sigma^2(2-\sigma^2)}}{\sigma\sqrt{2 - \sigma^2}} \le e^{O(R^2/(1-\sigma^2))} \le e^{O(R^2)}.
    \end{equation}
    Substituting this into \eqref{eq:intst}, we find that
    \begin{align}
        1 + \chi^2(\wt{A},\calN(0,1)) &\le \frac{e^{O(k)}}{\sigma\sqrt{2-\sigma^2}} \int\int A(s)A(t) \, \mathrm{d}s\mathrm{d}t = \frac{e^{O(R^2)}}{\sigma\sqrt{2-\sigma^2}} \le e^{O(R^2)}/\sigma. \qedhere
    \end{align}
\end{proof}

\subsection{Proof of Lemma~\ref{lem:momentconvolve}}
\label{defer:momentconvolve}

\begin{proof}
    As the convolution is still a symmetric distribution, the odd moments clearly vanish. For any even $k < m$,
    \begin{align}
        \E[x\sim D']{x^k} &= \E[z\sim D,g \sim\calN(0,1)]*{(\sqrt{1-\sigma^2} z+\sigma g)^k} = \sum^{k/2}_{\ell=0} \binom{k}{2\ell} \sigma^{2\ell} (1 - \sigma^2)^{k/2-\ell} \E[z]{z^{2\ell}}\E[g]{g^{k-2\ell}} \\
        &\le \E[g,g'\sim\calN(0,1)]{(\sqrt{1-\sigma^2} g' + \sigma g)^k} + \nu \sum^{k/2}_{\ell=0} \binom{k}{2\ell} \sigma^{2\ell} (1 - \sigma^2)^{k/2-\ell} = \E[g\sim\calN(0,1)]{g^k} + \nu,
    \end{align}
    where in the third step we used that $\sqrt{1-\sigma^2}\cdot D$ matches the moments of $\calN(0,1 - \sigma^2)$ up to degree $m$ to error $\nu$, and in the last step we used that $\sqrt{1-\sigma^2} g' + \sigma g$ is distributed as a draw from $\calN(0,1)$. We can show in an identical fashion that $\E[x\sim D']{x^k} \ge \E[g]{g^k} - \nu$, completing the proof.
\end{proof}

\subsection{Proof of Lemma~\ref{lem:projtv}}
\label{defer:projtv}

\begin{proof}
    By Fact~\ref{fact:tvequiv}, it suffices to upper bound $\int_{\R^d} \min(P_v(z), P_{v'}(z)) \, \mathrm{d}z$. Let $H$ denote the plane spanned by $v,v'$. As the component in $H$ of a sample from either $P_v$ or $P_{v'}$ is independent from the component in $H^{\perp}$, and the latter is distributed as $\calN(0,\Pi_{H^{\perp}})$, it suffices to bound $\int_H \min(P_v(z), P_{v'}(z)) \, \mathrm{d}z$. Let $x,y$ be orthogonal coordinates for $H$ with $v$ in the direction of the $x$-axis, and let $x',y'$ be orthogonal coordinates for $H$ with $v'$ in the direction of the $x'$-axis. Let $\theta$ be the angle between $v,v'$. Then
    \begin{align}
        \int_H \min(P_v(z),P_{v'}(z))\, \mathrm{d}z &= \int^\infty_{-\infty} \int^\infty_{-\infty} \min(D'(x)\gamma(y), D'(x')\gamma(y')) \,\mathrm{d}x\mathrm{d}y \\
        &= \int^\infty_{-\infty}\int^\infty_{-\infty} \min(D'(x)\gamma(y), D'(x')\gamma(y')) \csc\theta \, \mathrm{d}x\mathrm{d}x'. \label{eq:minint} \\
    \end{align}
    For $1 \le i \le m -1$, let $D_i$ denote the distribution of $\sqrt{1-\sigma^2} \cdot T^{w_i,\vec{h}_i(T),T}_{x_i}(g)$ for $g\sim\calN(0,1)$ conditioned on $g\in[x_i - w_i - T, x_i + w_i + T]$. Also let $D_m$ denote the distribution which is a point mass at zero. Let $D'_i \triangleq D_i\star\calN(0,1)$. Note that there is a distribution $p$ over $[m]$ for which $D' = \E[i\sim p]{D'_i}$. Then we can upper bound \eqref{eq:minint} by
    \begin{equation}
        \max_{i,j\in[m]} \int^\infty_{-\infty}\int^\infty_{-\infty} \min(D'(x)_i\gamma(y), D'_j(x')\gamma(y')) \csc\theta \, \mathrm{d}x\mathrm{d}x' \label{eq:ubound}
    \end{equation}
    Note that for $1\le i \le m - 1$, $\Pr[x\sim D'_i]{|x - x_i| > a} \le \sigma + \xi_i$ for $a\triangleq 2\sigma\sqrt{\log(1/\sigma)}$ and $\xi_i \triangleq \frac{\gamma([x_i - w_i - T,x_i - w_i]\cup [x_i + w_i, x_i+ w_i + T])}{\gamma([x_i - w_i - T, x_i + w_i + T])}$. And for $i = m$, $\Pr[x\sim D'_m]{|x - x_i| > a} \le \sigma$ for $x_m \triangleq 0$. So we get an upper bound on \eqref{eq:ubound} of
    \begin{equation}
        \sigma + \max_{i\in[m-1]} \xi_i + \max_{i,j\in[m]} \int^{x_i + a}_{x_i - a} \int^{x_j + a}_{x_j - a} \min(D'_i(x)\gamma(y), D'_j(x')\gamma(y')) \csc\theta \, \mathrm{d}x\mathrm{d}x'
    \end{equation}
    As $D'_i$ is a convolution of $D_i$ with $\calN(0,\sigma^2)$, $D'_i(x) \le \frac{1}{\sigma\sqrt{2\pi}}$ for all $x\in\R$. And $\gamma(y) \le 1/\sqrt{2\pi}$, so the above display is at most
    \begin{equation}
        \sigma + \max_{i\in[m-1]} \xi_i + a^2\csc\theta / (2\pi\sigma) = \sigma + \max_{i\in[m-1]} \xi_i + \sigma\csc\theta\log(1/\sigma)/\pi.
    \end{equation}
    Note that if $|\iprod{v,v'}| \le 1/2$, then $\csc\theta / \pi\le 2/(\sqrt{3}\pi) \le 1$.
    Finally, to bound $\xi_i$, first note that for any $i\in[m-1]$, $\gamma([x_i - w_i - T, x_i + w_i + T]) = \lambda_i$, and recall that $\lambda_i \ge e^{-cm}$ for some absolute constant $c > 0$. On the other hand, $\gamma([x_i - w_i - T, x_i - w_i]) \le T/\sqrt{2\pi} = m^{-(C'+C'')m}/\sqrt{2\pi}$ by Gaussian anti-concentration and our choice of $T = m^{-(C'+C'')m}$ in the proof of Lemma~\ref{lem:useode}. So
    by taking the constants $C', C''$ to be larger than $c$, we conclude that $\xi_i \le m^{-\Omega(m)}$ for all $i\in[m-1]$.
\end{proof}

\section{Hardness for Estimation in Wasserstein}
\label{sec:wasserstein}

We now show an analogous version of Theorem~\ref{thm:mainlbd} under the Wasserstein-1 metric rather than total variation distance. We begin by observing that the ODE-based evolution does not move the pushforward at time zero, i.e. the distribution constructed in Lemma~\ref{lem:boxes}, too far away in Wasserstein distance over a time horizon of $T$:

\begin{lemma}\label{lem:wass_zero_vs_T}
    Let $f_0, f_T: \R\to\R$ denote the functions from Lemmas~\ref{lem:boxes} and \ref{lem:useode} respectively. Define $D_0\triangleq f_0(\calN(0,1))$ and $D_T \triangleq f_T(\calN(0,1))$. Then $W_1(D,D') \le m^{-\Omega(m)}$.
\end{lemma}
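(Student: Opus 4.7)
\medskip

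\noindent\textbf{Proof Proposal.} The plan is to bound $W_1(D_0, D_T)$ by exhibiting an explicit coupling: sample $g \sim \calN(0,1)$ and couple $D_0$ with $f_0(g)$ and $D_T$ with $f_T(g)$, so that $W_1(D_0, D_T) \le \E_g[|f_0(g) - f_T(g)|]$. Since $f_0$ and $f_T$ are both sums of bumps sharing the same centers $c_i$ and widths $w_i$, and differ only in the heights $h_i(t)$ and the ramp parameter $\epsilon(0) + t$, we can analyze the integrand pointwise by splitting the real line according to which bump's region $g$ falls in.

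First, I would use the quantitative output of the ODE analysis to control how much the heights change. By Lemma~\ref{lem:wbound}, $\|\vec{h}'(t)\|_\infty = \|w(t,\vec{h}(t))\|_\infty \le m^{C''m}$ on the time horizon $[0,T]$, and by construction $T = m^{-(C'+C'')m}$, so integrating gives $\|\vec{h}(T) - \vec{h}(0)\|_\infty \le T \cdot m^{C''m} = m^{-C'm}$. I also recall that we may take $\epsilon(0)$ to be arbitrarily small, in particular $\epsilon(0) \ll T$, so the support of each $f_0$-bump lies inside that of the corresponding $f_T$-bump, and since $T \ll m^{-3/2}$ (by taking $C',C''$ large in Lemma~\ref{lem:boxes}~\ref{boxes:separated}) no two distinct bumps overlap for either function.

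Next, for each bump $i \in [m-1]$, I split into three regions: (a) the common flat top $[c_i - w_i, c_i + w_i]$, where $f_0(g) = h_i(0)$ and $f_T(g) = h_i(T)$, contributing $|h_i(T)-h_i(0)| \le m^{-C'm}$ pointwise; (b) the two ramp intervals of $f_T$ that extend beyond the support of the shrunken $f_0$-bump, namely $[c_i - w_i - T, c_i - w_i)$ and $(c_i + w_i, c_i + w_i + T]$, where $|f_0(g) - f_T(g)| \le |h_i(0)| + |h_i(T)| \le O(\sqrt m)$ by Part~\ref{boxes:bounded} of Lemma~\ref{lem:boxes}; and (c) everything outside all bump supports, where both functions vanish. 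The Gaussian measure of each ramp interval is at most $T/\sqrt{2\pi}$ by anti-concentration, giving total ramp measure $O(mT)$.

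Summing these contributions,
\begin{equation}
\E_g\bigl[|f_0(g) - f_T(g)|\bigr] \le m^{-C'm} + O(\sqrt m) \cdot O(mT) = m^{-C'm} + m^{O(1)} \cdot m^{-(C' + C'')m} \le m^{-\Omega(m)},
\end{equation}
which yields the claim. The only real subtlety is bookkeeping on the ramp intervals\--- one must use that $\epsilon(0)$ is chosen negligibly smaller than $T$ so that the flat parts of $f_0$ and $f_T$ genuinely coincide and no ``crossed'' ramp configurations arise, and one must verify that the $O(\sqrt m)$ pointwise bound on the ramps (not the unbounded slope $O(\sqrt m/T)$) is what appears in the $L^1$ estimate. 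I expect no deeper obstacle: once the bound $\|\vec h(T)-\vec h(0)\|_\infty \le m^{-C'm}$ from the ODE control is in hand, the rest is elementary.
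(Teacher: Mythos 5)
Your proposal is correct and follows essentially the same approach as the paper: bound $W_1(D_0,D_T)\le\E_g[|f_0(g)-f_T(g)|]$ via the natural coupling, control the height difference with the ODE bound $\|\vec{h}(T)-\vec{h}(0)\|_\infty\le m^{-C'm}$, and split the line into flat-top regions (pointwise $m^{-C'm}$), ramp regions (Gaussian mass $O(mT)$), and the complement where both functions vanish. Your version is in fact slightly more careful than the paper's, since you explicitly insert the $O(\sqrt m)$ pointwise bound on the ramps before multiplying by the $O(mT)$ measure, whereas the paper elides this constant.
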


\begin{proof}
    Recall that $w_1,\ldots,w_{m-1}$ denote the widths of the bumps in $f_0,f_T$, $x_1,\ldots,x_{m-1}$ denote the centers, and the heights and $\epsilon$ parameters for the bumps in $f_0,f_T$ are given by $\brc{h_i(0)}_i,\epsilon(0)$ and $\brc{h_i(T)}_i,\epsilon(0)+T$ respectively, for $T = m^{-(C'+C'')m}$ and $\epsilon(0)$ an arbitrarily small positive quantity. Also recall from the proof of Lemma~\ref{lem:useode} that $|h_i(0) - h_i(T)| \le m^{-C'm}$ for all $i$.
    
    Now consider any $g\in\R$. If $g\in[x_i - w_i, x_i + w_i]$ for some $i$, then $|f_0(g) - f_T(g)| = |h_i(0) - h_i(T)| \le m^{-C'm}$. Furthermore, \begin{equation}
        \Pr[g\sim\calN(0,1)]{g\in[x_i - \epsilon_i(0) - T - w_i, x_i - w_i] \cup [x_i + w_i, x_i + \epsilon_i(0) + T + w_i] \ \text{for some} \ i} \le O(mT)
    \end{equation}
    Finally, for all $g$ that do not lie in any of the aforementioned intervals, i.e. that do not lie in the support of any bump from $f_0$ or $f_T$, note that $f_0(g) = f_T(g) = 0$ by construction. We conclude that for any 1-Lipschitz function $h:\R\to\R$,
    \begin{align}
        \abs*{\E[z\sim D]{h(z)} - \E[z\sim D']{h(z)}} &= \abs*{\E[g]{h(f_0(g)) - h(f_T(g))}} \\
        &\le \E[g]{|f_0(g) - f_T(g)|} \le m^{-C'm} + O(mT) \le m^{-\Omega(m)}
    \end{align} as claimed.
\end{proof}

\noindent We can now show the analogue of Lemma~\ref{lem:projtv} for Wasserstein distance:

\begin{lemma}\label{lem:projtv_wass}
    Let $D = f(\calN(0,1))$ be from Lemma~\ref{lem:useode}, and define $D' \triangleq \sqrt{1 - \sigma^2}\cdot D\star\calN(0,\sigma^2)$ for $\sigma \ll 1/\sqrt{m}$. Then for any $v,v'\in\S^{d-1}$ satisfying $|\iprod{v,v'}| \le 1/2$, $W_1(P^{D'}_v, P^{D'}_{v'}) \ge \Omega(1/\sqrt{m})$.
\end{lemma}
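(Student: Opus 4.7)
The plan is to use the dual characterization $W_1(P, Q) = \sup_{f} |\E[X\sim P]{f(X)} - \E[X\sim Q]{f(X)}|$ (supremum over 1-Lipschitz $f: \R^d \to \R$), applied to a test function of the form $f(X) = \psi(\iprod{v, X})$ for a 1-Lipschitz $\psi: \R \to \R$. Since $\iprod{v, \cdot}$ is 1-Lipschitz, $f$ is 1-Lipschitz in $X$. By the hidden-direction structure, the marginal of $\iprod{v, X}$ under $X \sim P^{D'}_v$ is $D'$, while under $X \sim P^{D'}_{v'}$ it is $c D' \star \calN(0, 1 - c^2)$, where $c \triangleq \iprod{v, v'}$ with $|c| \le 1/2$. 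Thus it suffices to exhibit a 1-Lipschitz $\psi: \R \to \R$ whose expectation differs by $\Omega(1/\sqrt{m})$ between these two one-dimensional distributions.

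I would take $\psi(x) \triangleq \mathrm{dist}(x, S_0)$, where $S_0 \triangleq \{0\} \cup \{\sqrt{1-\sigma^2}\, h_i(T) : 1 \le i < m\}$ is the set of spike locations in $D'$; this is plainly 1-Lipschitz. For $\xi \sim D'$, write $\xi = \sqrt{1-\sigma^2}\, y + z$ with $y \sim D$ and $z \sim \calN(0, \sigma^2)$ independent. With probability $1 - O(mT) = 1 - m^{-\Omega(m)}$, the underlying standard Gaussian sample generating $y$ lies either in the flat top of some bump of $f_T$ (so $y = h_i(T)$) or outside every bump (so $y = 0$); in both cases $\sqrt{1-\sigma^2}\, y \in S_0$ and hence $\psi(\xi) \le |z|$. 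The remaining ``transition'' event has probability $m^{-\Omega(m)}$ and contributes only $m^{-\Omega(m)}$ to the expectation since $|\xi| \le O(\sqrt{m})$ on it. Combining, $\E[\xi \sim D']{\psi(\xi)} \le O(\sigma) + m^{-\Omega(m)} = o(1/\sqrt{m})$ under the hypothesis $\sigma \ll 1/\sqrt{m}$.

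For $\xi' \sim c D' \star \calN(0, 1 - c^2)$, the additive Gaussian component with variance $1 - c^2 \ge 3/4$ forces the density of $\xi'$ to be bounded everywhere by the universal constant $1/\sqrt{2\pi(1-c^2)} \le 1$. By the symmetry of $D$ (Part~\ref{boxes:sym} of Lemma~\ref{lem:boxes}) and approximate moment matching (Lemmas~\ref{lem:useode} and \ref{lem:momentconvolve}), $\xi'$ has mean $0$ and variance $1 + o(1)$, so $\Pr[\xi']{|\xi'| \le 2} \ge 3/4 - o(1)$ by Chebyshev. Meanwhile, by Part~\ref{boxes:bounded} of Lemma~\ref{lem:boxes} the heights $h_i(T)$ are $\Omega(1/\sqrt{m})$-separated and bounded in magnitude by $O(\sqrt{m})$, so $|S_0 \cap [-3, 3]| \le O(\sqrt{m})$. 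Choosing $\epsilon = c_0 / \sqrt{m}$ for a sufficiently small absolute constant $c_0 > 0$, a union bound over the spikes in $[-3, 3]$ yields $\Pr[\xi']{\mathrm{dist}(\xi', S_0) \le \epsilon,\, |\xi'| \le 2} \le O(\sqrt{m} \cdot \epsilon) = O(c_0) < 1/4$, and therefore $\Pr[\xi']{\mathrm{dist}(\xi', S_0) > \epsilon} \ge 1/2$ and $\E[\xi']{\psi(\xi')} \ge \epsilon / 2 = \Omega(1/\sqrt{m})$.

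Subtracting the two bounds gives $|\E[\xi \sim D']{\psi(\xi)} - \E[\xi']{\psi(\xi')}| \ge \Omega(1/\sqrt{m})$, proving the lemma. The main delicate step is the calibration $\epsilon = \Theta(1/\sqrt{m})$ against the $O(\sqrt{m})$ spikes in $S_0 \cap [-3, 3]$: the height separation from Part~\ref{boxes:bounded} of Lemma~\ref{lem:boxes} is precisely what keeps the union bound an arbitrarily small constant at this scale, and it is also what identifies $1/\sqrt{m}$ as the natural Wasserstein scale of this construction.
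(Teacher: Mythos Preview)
Your proof is correct and shares the same core geometric idea as the paper's: project onto $v$, observe that under $P^{D'}_v$ the marginal is concentrated near a discrete $\Omega(1/\sqrt{m})$-separated set, while under $P^{D'}_{v'}$ it is smoothed by an independent Gaussian of variance $1-c^2\ge 3/4$ and hence cannot be concentrated near that set. The $1$-Lipschitz witness $\psi(x)=\mathrm{dist}(\langle v,x\rangle,S_0)$ you use is exactly the paper's ``distance from a sample of $P^A_{v'}$ to $\mathrm{supp}(P^A_v)$'' (the support of $P^A_v$ being a union of hyperplanes $\{\langle z,v\rangle=h_i\}$), and your union-bound count ``$O(\sqrt{m})$ spikes in $[-3,3]$ against a bounded density'' is the same calibration the paper performs over the interval $[h'\langle v',v\rangle-1,\,h'\langle v',v\rangle+1]$.

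The one organizational difference is that the paper first proves the bound for the ideal step-function pushforward $A=f_0(\calN(0,1))$ and then transfers it to $D'$ by triangle inequality via Lemma~\ref{lem:wass_zero_vs_T} (which bounds $W_1(A,D_T)$) together with the $O(\sigma)$ cost of scaling and convolving. You instead work directly with $D'$ and absorb these perturbations inline: the $m^{-\Omega(m)}$ transition-region probability handles the passage from $f_0$ to $f_T$, and the additive $|z|\le O(\sigma)$ handles the convolution. Your route is slightly more self-contained (it does not need Lemma~\ref{lem:wass_zero_vs_T} as a separate statement); the paper's route cleanly isolates where each error term comes from. One minor point you elide is that the separation and $O(\sqrt{m})$ bound from Part~\ref{boxes:bounded} of Lemma~\ref{lem:boxes} is stated for $h_i(0)$, not $h_i(T)$; this is harmless since $\|\vec{h}(T)-\vec{h}(0)\|_\infty\le m^{-C'm}$ from the proof of Lemma~\ref{lem:useode}.
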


\begin{proof}
    Let $f_0$ be the function from Lemma~\ref{lem:boxes}, and let $A$ denote $f_0(\calN(0,1))$. We begin by lower bounding $W_1(P^{A}_v, P^{A}_{v'})$, which we will do by showing that with $\Omega(1)$ probability, a sample $x$ from $P^{A}_{v''}$ will be distance $\Omega(1/\sqrt{m})$ from the support of $P^{A}_v$. As the distance from a point $x$ to the affine hyperplane $\Lambda_h \triangleq \brc{z: \iprod{z,v} = h}$ is $|\iprod{v,x} - h|$, if $x$ is of the form $h' v' + v^{\perp}$ for some $h'\in\R$, then $x$ is at distance
    \begin{equation}
        \abs*{h' \iprod{v',v} + \iprod{v^{\perp},v} - h}
    \end{equation} from the hyperplane. Note that $P^{A}_v$ is supported on the hyperplanes $\Lambda_{h_1},\ldots,\Lambda_{h_{m-1}}$ for $h_1,\ldots,h_{m-1}$ from Lemma~\ref{lem:boxes}. And for $x\sim P^{A}_{v'}$, $h'$ takes on the value $h_i$ with probability $\lambda_i$ (where $\brc{\lambda_i}$ are also from Lemma~\ref{lem:boxes}), while $v^{\perp}$ is an independent draw from $\calN(0,\Id - v'{v'}^\perp)$. We conclude that $h'\iprod{v',v} + \iprod{v^\perp,v}$ is distributed as $\calN(h'\iprod{v',v}, 1 - \iprod{v',v}^2)$. Therefore, the event that $x$ is at distance $\Omega(1/\sqrt{m})$ from the support of $P^{A}_v$ is equivalent to the event that a sample from $\calN(h'\iprod{v',v}, 1 - \iprod{v',v}^2)$ is $\Omega(1/\sqrt{m})$-far from every $h_1,\ldots,h_{m-1}$. But note that because $h_1,\ldots,h_{m-1}$ are $\Omega(1/\sqrt{m})$-separated, there is an absolute constant $c > 0$ such that the union of the balls of radius $c/\sqrt{m}$ around $h_1,\ldots,h_{m-1}$ cover at most a constant fraction of the interval $[h'\iprod{v',v}-1, h'\iprod{v',v}+1]$. Because $1 - \iprod{v',v}^2\ge 3/4$, a constant fraction of the mass of $\calN(h'\iprod{v',v}, 1 - \iprod{v',v}^2)$ is located in this interval, concluding the proof that $W_1(P^{A}_v, P^{A}_{v'}) \ge \Omega(1/\sqrt{m})$.
    
    By Lemma~\ref{lem:wass_zero_vs_T} and the fact that scaling by $\sqrt{1 - \sigma^2}$ and convolving by $\calN(0,\sigma^2)$ incurs $O(\sigma) = o(1/\sqrt{m})$ in Wasserstein, we conclude that $W_1(P^{D'}_v, P^{A}_{v}) = W_1(D',A) = o(1/\sqrt{m})$ and similarly for $W_1(P^{D'}_{v'}, P^{A}_{v'})$. So by triangle inequality for Wasserstein, $W_1(P^{D'}_v, P^{D'}_{v'} = \Omega(1/\sqrt{m})$ as claimed.
\end{proof}

\noindent We conclude that in Theorem~\ref{thm:moment}, the distribution $D$ also satisfies the Wasserstein analogue of Part 3, i.e. $W_1(P^D_v, P^D_{v'}) \ge \Omega(1/\sqrt{m})$ for any $v,v'\in\S^{d-1}$ satisfying $|\iprod{v,v'}| \ge 1/2$. We can now prove an analogue of Theorem~\ref{thm:mainlbd} for Wasserstein:

\begin{theorem}\label{thm:mainlbd_wass}
    Let $d\in\mathbb{N}$ be sufficiently large. Any SQ algorithm which, given SQ access to an arbitrary one-hidden-layer ReLU network pushforward of $\calN(0,\Id_d)$ of size $O(\log d / \log\log d)$ with $\poly(d)$-bounded weights, outputs a distribution which is $O(\sqrt{\log\log d / \log d})$-close in $\dtv(\cdot)$ must make at least $d^{\Omega(\log d / \log\log d)}$ queries to either $\STAT(\tau)$ or $\VSTAT(1/\tau^2)$ for $\tau = d^{-\Omega(\log d / \log\log d)}$.
\end{theorem}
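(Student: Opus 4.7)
The plan is to reprise the proof of Theorem~\ref{thm:mainlbd} verbatim, swapping in the Wasserstein-based ingredients from Lemma~\ref{lem:projtv_wass} in place of the total-variation-based ones from Theorem~\ref{thm:moment}(3). (I read the $\dtv(\cdot)$ in the theorem statement as a typo for $W_1(\cdot)$, since the section is about Wasserstein and TV hardness at threshold $\Theta(\sqrt{\log\log d/\log d})$ would already be a strictly weaker consequence of Theorem~\ref{thm:mainlbd}.)

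First, I would check that the generic reduction in Lemma~\ref{lem:generic} is metric-agnostic. Inspecting that proof, the TV-separation hypothesis is invoked in exactly one place: to ensure that the distributional search problem $\mathcal{Z}$ sending $P^D_v$ to its $\epsilon$-ball of valid hypotheses satisfies $|\mathcal{Z}^{-1}(f)| \le 1$ for every candidate $f$, which in turn makes the statistical dimension at least $|S| = 2^{\Omega(d^C)}$. Redefining $\mathcal{Z}$ with $W_1$-balls of radius $\epsilon$ and imposing $W_1(P^D_v, P^D_{v'}) > 2\epsilon$ for $|\iprod{v,v'}| \le 1/2$ yields the same uniqueness property by triangle inequality. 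All other components of the proof (the chi-squared bound via Lemma~\ref{lem:correlation}, the packing in Fact~\ref{fact:packing}, and the invocation of Lemma~\ref{lem:feldman}) depend only on the pairwise correlations $\chi_\mu(P^D_v, P^D_{v'})$ and carry over unchanged. This gives a Wasserstein analogue of Lemma~\ref{lem:generic} with identical query/precision bounds.

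Next, I combine Theorem~\ref{thm:moment}(1)-(2) with Lemma~\ref{lem:projtv_wass}. Taking $\sigma \ll 1/\sqrt{m}$ (so that $D' = \sqrt{1-\sigma^2}\cdot D\star\calN(0,\sigma^2)$ satisfies the moment-matching and $\chi^2 \le \exp(O(m))/\sigma$ hypotheses of the Wasserstein Lemma~\ref{lem:generic}), Lemma~\ref{lem:projtv_wass} supplies the separation $W_1(P^{D'}_v, P^{D'}_{v'}) \ge \Omega(1/\sqrt{m})$ whenever $|\iprod{v,v'}| \le 1/2$. Setting $m = \Theta(\log d/\log\log d)$ and $C = 1/4$ as in the proof of Theorem~\ref{thm:mainlbd} keeps the weights of the associated network $f^*$ at $m^{O(m)} \le \poly(d)$, and yields $\epsilon = \Omega(1/\sqrt{m}) = \Omega(\sqrt{\log\log d/\log d})$. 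The Wasserstein version of Lemma~\ref{lem:generic} then gives the claimed $d^{\Omega(\log d/\log\log d)}$-query lower bound at precision $\tau = d^{-\Omega(\log d/\log\log d)}$.

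Finally, the network-realization step is identical to the end of the proof of Theorem~\ref{thm:mainlbd}: for each $v\in\S^{d-1}$, pick a rotation $U$ sending $e_1\in\R^d$ to $v$, and define $F_v(z_1,\ldots,z_{d+1}) \triangleq U(f^*(z_1,z_2), z_3, \ldots, z_{d+1})$; then $F_v(\calN(0,\Id_{d+1})) = P^{D'}_v$, each coordinate of $F_v$ is a one-hidden-layer ReLU network of size $O(\log d/\log\log d)$ with $\poly(d)$-bounded weights, and the lower bound transfers to the pushforward-learning problem. There is no real obstacle here; the only non-routine content is in Lemma~\ref{lem:projtv_wass} (already established), and everything above is a bookkeeping adaptation of the TV argument. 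If one preferred to avoid extracting a Wasserstein version of Lemma~\ref{lem:generic} explicitly, one could instead observe that the distributions $\brc{P^{D'}_v}_{v\in S}$ constructed in that lemma's proof form a $(\gamma,\beta)$-correlated family with identical parameters regardless of the choice of hypothesis metric, and directly apply Lemma~\ref{lem:feldman} to the $W_1$-distributional search problem.
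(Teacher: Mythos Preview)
Your proposal is correct and follows essentially the same approach as the paper's proof. The paper's own argument is terser---it simply notes parenthetically that Lemma~\ref{lem:generic} holds with $W_1$ in place of $\dtv$, invokes Theorem~\ref{thm:moment} together with Lemma~\ref{lem:projtv_wass} to get $\epsilon = O(1/\sqrt{m})$, and then sets $m = \Theta(\log d/\log\log d)$ and $C = 1/4$---but you have unpacked the same steps (including correctly diagnosing the $\dtv$ typo and explicitly restating the network-realization step from Theorem~\ref{thm:mainlbd}) without any substantive deviation.
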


\begin{proof}
    By Theorem~\ref{thm:moment} applied with sufficiently large odd $m$ and sufficiently small $\sigma$, together with the above consequence of Lemma~\ref{lem:projtv_wass}, there exists a distribution $D = f^*(\calN(0,\Id_2))$ over $\R$ for $f^*:\R^2\to\R$ of size $O(m)$ with $m^{O(m)}$-bounded weights satisfying the hypotheses of Lemma~\ref{lem:generic} for $\epsilon = O(1/\sqrt{m})$, and $\chi^2(D,\calN(0,1)) \le \exp(O(m))$ (note that while Lemma~\ref{lem:generic} is stated for $\dtv(\cdot)$, it is also true with $\dtv(\cdot)$ replaced with Wasserstein-1). As long as $m \le d^{O(C)}$, we conclude that an SQ algorithm for learning any distribution from $\brc{P^D_v}_{v\in\S^{d-1}}$ to Wasserstein-1 distance $O(1/\sqrt{m})$ must make at least $d^{m+1}$ queries to $\STAT(\tau)$ or $\VSTAT(1/\tau^2)$ for $\tau \triangleq O(d)^{-(m+1)(1/4-C/2)}\cdot \exp(O(m))$. By taking $m = \Theta(\log d/ \log\log d)$, we ensure that $m^{O(m)} \le \poly(d)$. We're done by taking $C$ in Lemma~\ref{lem:generic} to be $C = 1/4$.
\end{proof}

\section{Hardness From Supervised Learning}
\label{sec:hardness_supervised}

In this section we make rigorous the claim from the introduction that lower bounds for PAC learning neural networks from Gaussian labeled examples imply lower bounds for learning neural network pushforwards. Formally, consider the following distinguishing problem:

\begin{definition}[Distinguishing labeled examples from Gaussian]\label{def:distinguish}
    For $d\in\mathbb{N}$, let $\calC_d$ be some class of functions from $\R^d$ to $\R$. The learner is given $\poly(d)$ many samples $(x_1,y_1),\ldots,(x_N,y_N)$ where $x_1,\ldots,x_N$ are independent draws from $\calN(0,\Id_d)$ such that one of the following is true: 1) there is some $h\in\calC$ for which $y_i = h(x_i)$ for all $i\in[N]$, or 2) every $y_i$ is an independent sample from $\calN(0,1)$. We say that an algorithm distinguishes between these two situations with constant advantage if the probability it outputs $\mathsf{YES}$ (resp. $\mathsf{NO}$) under the former (resp. latter) is at least $2/3$, where the probability is with respect to the randomness of the samples and internal randomness of the algorithm.
\end{definition}

\noindent Here we make the simple observation that an oracle for distinguishing any given family of non-Gaussian pushforwards from $\calN(0,\Id)$ (an easier task than actually learning pushforwards) immediately implies an algorithm for the distinguishing task in Definition~\ref{def:distinguish}.

\begin{lemma}\label{lem:trivial_reduction}
    For $d\in\mathbb{N}$, let $\calC_d$ be any function class from $\R^d\to\R$ for which the indexing functions $f^{[j]}$, given by $f^{[j]}(x) = x_j$ for some $j\in[d]$, are elements of $\calC$. Suppose that for any $d_1,d_2 = \poly(d)$, there is a $\poly(d)$-time algorithm $\calA$ for the following task. Let $d_1,d_2=\poly(d)$, and let $\calS$ be a known set of functions $f: \R^{d_1} \to \R^{d_2}$ whose output coordinates are all elements of $\calC_{d_1}$ and such that for any $f\in\calS$, $\dtv(f(\calN(0,\Id_{d_1})),\calN(0,\Id_{d_2})) \ge 1/2$. Then $\calA$ can distinguish with constant advantage whether it is given $\poly(d)$ samples from $f(\calN(0,\Id_{d_1}))$ for some $f\in\calS$ versus samples from $\calN(0,\Id_{d_2})$. 
    
    Under this hypothesis, there is a $\poly(d)$-time algorithm that solves the distinguishing problem of Definition~\ref{def:distinguish} to constant advantage.
\end{lemma}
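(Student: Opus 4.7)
The plan is a direct reduction that repackages each labeled example $(x_i,y_i)$ from Definition~\ref{def:distinguish} as an unlabeled sample in $\R^{d+1}$ and feeds these to the hypothesized pushforward distinguisher $\calA$. Concretely, for each $h\in\calC_d$ I would define $f_h:\R^d\to\R^{d+1}$ by $f_h(x)\triangleq (x_1,\ldots,x_d,h(x))$, and take $\mathcal{S}\triangleq\{f_h: h\in\calC_d\}$. The first $d$ output coordinates of $f_h$ are the indexing functions $f^{[1]},\ldots,f^{[d]}$, which lie in $\calC_d$ by hypothesis, and the last coordinate is $h\in\calC_d$; so every output coordinate of every $f_h$ lies in $\calC_{d_1}$ for the choice $d_1=d$, $d_2=d+1$. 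Thus $\mathcal{S}$ is a valid (and ``known,'' since it is specified uniformly by $\calC_d$) input family for $\calA$.

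The next step would be to verify the total variation separation required by the hypothesis. For any $h\in\calC_d$, the pushforward $f_h(\calN(0,\Id_d))$ is supported on the graph $\{(x,h(x)):x\in\R^d\}\subset\R^{d+1}$. By Fubini this graph has Lebesgue measure zero, yet $\calN(0,\Id_{d+1})$ is absolutely continuous with respect to Lebesgue measure, so the two distributions are mutually singular and $\dtv(f_h(\calN(0,\Id_d)),\calN(0,\Id_{d+1}))=1\ge 1/2$. This discharges the $\dtv$ precondition for every $f_h\in\mathcal{S}$ uniformly.

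Finally I would identify the two distributions of Definition~\ref{def:distinguish} with the two input distributions of $\calA$. In the $\mathsf{YES}$ case, the i.i.d.\ pairs $(x_i,y_i)$ with $y_i=h(x_i)$ are precisely i.i.d.\ samples from $f_h(\calN(0,\Id_d))$ for some $h\in\calC_d$, hence from the pushforward induced by some $f_h\in\mathcal{S}$. In the $\mathsf{NO}$ case, since $x_i\sim\calN(0,\Id_d)$ and $y_i\sim\calN(0,1)$ are independent, the joint distribution of $(x_i,y_i)$ is exactly $\calN(0,\Id_{d+1})$. Invoking $\calA$ on these $(x_i,y_i)$ with $d_1=d$, $d_2=d+1$, and the family $\mathcal{S}$ above therefore yields, in polynomial time, an algorithm that distinguishes the two cases of Definition~\ref{def:distinguish} with constant advantage. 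There is essentially no substantive obstacle here; the entire content of the argument is the elementary measure-theoretic observation that graphs of functions are Lebesgue-null, after which the reduction is purely syntactic repackaging of samples. The only mild bookkeeping point is confirming that $\mathcal{S}$ qualifies as ``known'' in the sense required by the hypothesis, which it does because it is defined uniformly from the input class $\calC_d$.
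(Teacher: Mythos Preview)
Your proposal is correct and follows essentially the same reduction as the paper: package $(x,y)$ as a sample in $\R^{d+1}$, realize the $\mathsf{YES}$ case as a pushforward $f_h(\calN(0,\Id_d))$ whose output coordinates are the indexing functions together with $h$, and identify the $\mathsf{NO}$ case with $\calN(0,\Id_{d+1})$. Your version is in fact slightly more careful than the paper's, which does not explicitly check the $\dtv\ge 1/2$ hypothesis; your graph-is-Lebesgue-null argument cleanly discharges this.
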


\begin{proof}
    Note that in situation 1) of Definition~\ref{def:distinguish}, the joint distribution over $(x,y)$ is given by the pushforward $f(\calN(0,\Id))$ where $f:\R^{d+1}\to\R^{d+1}$ is as follows: the first $d$ output coordinates are given by the $d$ indexing functions $f^{[1]},\ldots,f^{[d]}$, and the last output coordinate is given by $h$. By taking $\calS$ in the hypothesis to consist of such $f$, we can thus apply the algorithm $\calA$ to distinguish between the two situations in Definition~\ref{def:distinguish} to constant advantage.
\end{proof}

\noindent Note that the contrapositive of the above lemma implies that any lower bound for the task in Definition~\ref{def:distinguish} immediately implies a lower bound for learning pushforwards. While the aforementioned lower bounds of \cite{chen2022hardness,daniely2021local}, which apply when $\calC_d$ is the family of neural networks with at least two hidden layers and polynomially bounded size and weights, do not show hardness for the task in Definition~\ref{def:distinguish}, note that hardness for this task immediately implies hardness for PAC learning $\calC_d$ from Gaussian examples. Indeed, given an algorithm $\calA$ that, given $(x_1,h(x_1)),\ldots,(x_N,h(x_N))$, outputs a predictor $\wh{h}$ for which $\E[g]{(h(g) - \wh{h}(g))^2}$ is small, one can easily solve the task in Definition~\ref{def:distinguish} by running $\calA$ and estimating the square loss of the predictor from some fresh samples. In situation 2) of Definition~\ref{def:distinguish}, because the labels are random, no predictor can achieve low square loss. So the algorithm which outputs $\mathsf{YES}$ if and only if the empirical square loss on fresh samples is small will distinguish between the two situations with constant advantage.

In other words, showing hardness of Definition~\ref{def:distinguish} for $\calC_d$ would be a \emph{stronger result} than what is already shown in \cite{chen2022hardness,daniely2021local}. Putting this and Lemma~\ref{lem:trivial_reduction} together, we conclude that even this stronger hardness result would only imply hardness for learning pushforwards given by $f$ whose output coordinates are functions in $\calC_d$ given by neural networks with at least two hidden layers and polynomially bounded size and weights. In contrast, in the present work, we show hardness for \emph{one} hidden layer, \emph{logarithmic} size, and polynomially bounded weights.

\end{document}